\documentclass[twoside]{article}

\usepackage[round]{natbib}

\usepackage{subcaption} 
 
\usepackage{multirow}
\usepackage[utf8]{inputenc}
\usepackage{amsmath,amssymb,array}
\usepackage{bbm}
\usepackage{bm}
\usepackage{xcolor}
\usepackage{graphicx}
\usepackage{xargs}
\usepackage[font=small,skip=0pt]{caption}
\usepackage{wrapfig}
\usepackage{caption}

\usepackage{multicol}
\usepackage{booktabs}
\usepackage{todonotes}
\usepackage{amsthm}
\usepackage{titletoc}

\renewcommand\citet{\cite}
\usepackage[hyphens]{url}
\usepackage[hidelinks,colorlinks=true,citecolor=blue!50!black,linkcolor=black,urlcolor=green!50!black]{hyperref}
\usepackage{colortbl}
\usepackage{csquotes} 
\usepackage{mathtools}
\usepackage{float} 
\usepackage{adjustbox}
\usepackage{dsfont}

\usepackage[dvipsnames]{xcolor} 
\definecolor{myblue}{RGB}{0,180,216} 
\definecolor{mypink}{RGB}{239,39,166} 
\definecolor{myorange}{RGB}{255,111,0} 
\definecolor{myyellow}{RGB}{255,190,11} 

\definecolor{codegreen}{rgb}{0,0.6,0}

\newtheorem{theorem}{Theorem}

\newtheorem{definition}{Definition}
\newtheorem{corollary}{Corollary}

\usepackage[table]{xcolor}
\usepackage{multirow}
\usepackage{array}

\usepackage{algorithm}
\usepackage[noend]{algpseudocode}

\newcommand{\cmark}{\textcolor{green}{\checkmark}}
\newcommand{\xmark}{\textcolor{red}{\ding{55}}} 
\usepackage{pifont}

\definecolor{interaction_palette}{HTML}{3F88C5}
\definecolor{correlation_palette}{HTML}{57AA99}
\newcommand{\interaction}[1]{\textcolor{interaction_palette}{\bfseries #1}}
\newcommand{\correlation}[1]{\textcolor{correlation_palette}{\bfseries #1}}

\usepackage[accepted]{aistats2026}

\begin{document}

\runningauthor{Herbinger, Laberge, Muschalik, Pequignot, Wright, Fumagalli}
\runningtitle{GRANITE: A Generalized Regional Framework}

\twocolumn[

\aistatstitle{GRANITE: A Generalized Regional Framework for Identifying Agreement in Feature-Based Explanations}

\aistatsauthor{ Julia Herbinger\\ Leibniz Institute for Prevention \\ Research and Epidemiology - BIPS \And Gabriel
Laberge\\ Polytechnique Montréal   \And Maximilian Muschalik\\ LMU Munich, MCML \AND Yann Pequignot\\ Université Laval à Québec \And Marvin N. Wright\\ Leibniz Institute for Prevention \\Research and Epidemiology - BIPS\\ University of Bremen \And  Fabian Fumagalli\\ LMU Munich, MCML \\ Bielefeld University, CITEC }

\aistatsaddress{} 

]

\begin{abstract}
Feature-based explanation methods aim to quantify how features influence the model's behavior, either locally or globally, but different methods often disagree, producing conflicting explanations. This disagreement arises primarily from two sources: how feature interactions are handled and how feature dependencies are incorporated. We propose GRANITE, a generalized regional explanation framework that partitions the feature space into regions where interaction and distribution influences are minimized. This approach aligns different explanation methods, yielding more consistent and interpretable explanations. GRANITE unifies existing regional approaches, extends them to feature groups, and introduces a recursive partitioning algorithm to estimate such regions. We demonstrate its effectiveness on real-world datasets, providing a practical tool for consistent and interpretable feature explanations.
\end{abstract}

\section{INTRODUCTION}\label{sec:intro}

Black-box models, such as neural networks or ensembles, often achieve superior predictive performance but remain difficult to interpret, which is a major concern in high-stakes applications like healthcare or finance. Feature-based explanation methods aim to reveal how features influence predictions, either \textit{locally}, for example, using SHAP values \citep{lundberg_2017_unified} or individual conditional expectation (ICE) curves \citep{goldstein_peeking_2015}, or \textit{globally}, for example, using permutation or conditional feature importance (PFI and CFI) \citep{fisher2019all, strobl2008conditional}.

Yet, these methods frequently \emph{disagree}, producing contradictory explanations for the same model \citep{krishna2024disagreement, mitruț2024clarity, pirie2023agree, schwarzschild2023reckoning}. For example, 
Figure~\ref{fig:disagreement_illustration} (lower-left) shows different explanation methods assigning \textit{conflicting importance to the same features} at the global level. \cite{fumagalli2025unifying} identify two main sources of such disagreement:
(1) how methods handle \emph{interactions}---e.g., \emph{PredDiff} fully accounts for the interaction between $X_1$ and $X_2$, whereas \emph{SHAP} methods include it only partially---and (2) how they incorporate \emph{feature distributions} when features are correlated---e.g., the dependence of $X_4$ on $X_3$ is reflected in \emph{conditional SHAP} and \emph{PredDiff} but not in \emph{marginal SHAP}. Differences along either dimension cause explanations to diverge, complicating interpretation. Importantly, in the absence of interactions and dependencies, explanations are \emph{simple, additive, and consistent}, suggesting that reducing these effects improves interpretability. 
Figure~\ref{fig:disagreement_illustration} shows that restricting analysis to regions minimizing interactions (e.g., $X_2 = 1$), distribution influence (e.g., $X_3 = 1$), or both (e.g., $X_2 = 1$ \& $X_3 = 1$), progressively aligns explanations, producing increasingly consistent and interpretable explanations.

This insight motivates our approach. We propose the \textbf{Generalized Regional frAmework for ideNtIfying agreemenT in feature-based Explanations (GRANITE)}, which mitigates disagreement by partitioning the feature space into regions where interaction or distribution influence is minimized, yielding consistent and interpretable explanations.

\textbf{Contributions.} Our main contributions are:

\textbf{1.} Formalizing the disagreement problem for feature-based explanations, arising solely from differences in handling interactions and feature distributions.

\textbf{2.} The GRANITE framework, which partitions the feature space into regions minimizing disagreement to produce consistent explanations, implemented via recursive partitioning and demonstrated on real data.

\textbf{3.} Unifying existing regional approaches and addressing previously unexplored gaps.

\begin{figure}
    \centering
    \includegraphics[width=\linewidth]{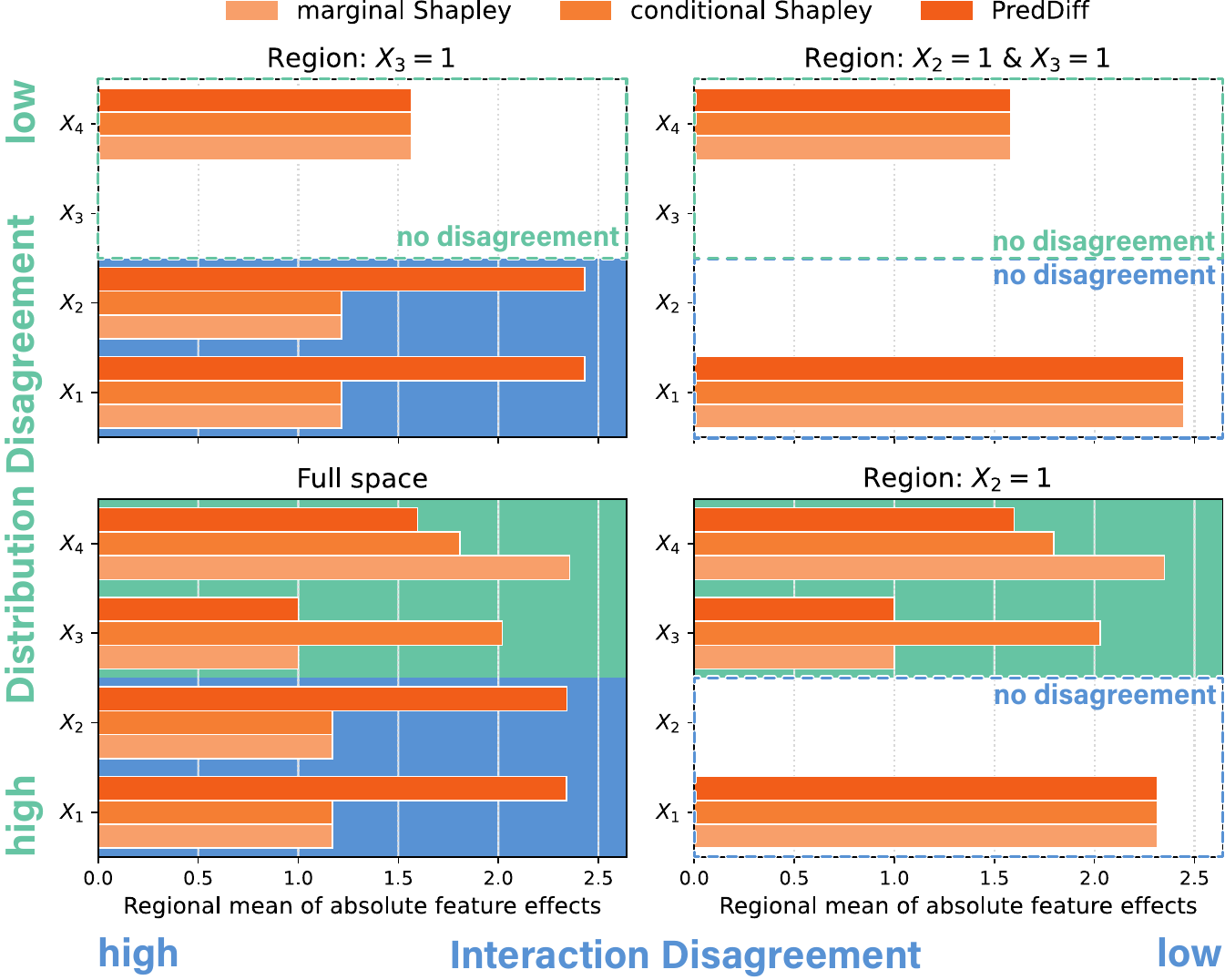}
    \\[0.3em]
    \caption{Disagreement of average absolute marginal and conditional SHAP \citep{lundberg_2017_unified} and PredDiff \citep{Robnik2008} values for synthetic data, where $X_1$ interacts with $X_2$ and $X_4$ depends on $X_3$, cf. Section~\ref{sec:disagreement}. 
    Explanations disagree globally (bottom left) due to feature \interaction{interactions} and \correlation{dependencies}. 
    Regional explanations remove disagreements caused by interactions (bottom right) and dependencies (top left), with overall agreement when combined (top right).
    }
    \label{fig:disagreement_illustration}
\end{figure}

\section{BACKGROUND}\label{sec:background}

\paragraph{General Notation}
We consider a prediction task over a feature space $\mathcal{X} \subseteq \mathbb{R}^d$ and target space $\mathcal{Y}$, 
with random variables $X = (X_1, \ldots, X_d)$ and $Y$. We write $P$ for the reference distribution, which may denote the distribution of $X$ or the joint distribution of $(X,Y)$ depending on context 
(see Table~\ref{tab:distribution_overview}).  
A (black box) model $F: \mathcal X \to \mathcal{Y}$ maps observed feature vectors $x \in \mathbb{R}^d$ to predictions.  
We denote the feature index set by $D := \{1,\ldots,d\}$ with power set $2^D$. We write $i = \{i\}$, use $-S := D \setminus S$ for the complement of $S \subseteq D$, and $x_S$ are the entries of $x$ for features in $S$.

\subsection{Functional Decomposition}\label{sec:fanova}
Functional decomposition has a long history in statistics and has received increasing attention for interpreting machine learning models. Particularly, the standard functional ANOVA (fANOVA) decomposition \citep{stone1994fanova, hooker_discovering_2004, Owen_2013} provides for any \emph{independent} joint distribution $P$ and square-integrable prediction function $F$ a decomposition: 
\begin{equation*}
    F(x) = f_\emptyset + \sum_{i=1}^d f_i(x_i) + \sum_{i\neq j} f_{ij}(x_i,x_j) \dots = \sum_{S \subseteq D} f_S(x),
\end{equation*}
where each component $f_S$ does not depend on features in $-S$.
Here, $f_\emptyset$ is constant, $f_i(x_i)$ is the $i$-th main effect, and $f_S(x)$ with $|S|>1$ represents the \textit{pure} 
interaction effect of features in $S$.
The components $f_S$ are defined by removing the dependency on features outside $S$ via $F_S(x) := \mathbb{E}_{P_{X_{-S}}}[F(x_S,X_{-S})]$ with joint marginal distribution $P_{X_{-S}}$, and recursively subtract lower-order effects as
\begin{align}\label{eq_fanova_effect}
    f_S(x) = F_S(x) - \sum_{T \subset S} f_T(x),
\end{align}
Following \citet{fumagalli2025unifying}, we adopt a unifying framework for feature-based explanations that extends fANOVA and supports various masking operations $F_S$.

\subsection{Unifying Framework}\label{sec:background_unifying}

\citet{fumagalli2025unifying} introduced a unifying framework for feature-based explanations based on the fANOVA decomposition and the Möbius transform \citep{rota1964foundations, grabisch2000equivalent} from cooperative 
game theory. Based on their framework, explanations can be defined by four dimensions: 

(1) Distribution influence: Are features \emph{masked} via baseline value, marginal, or conditional distribution?\\ 
(2) Explanation game: Is the \emph{behavior} of the explanation local or global?\\
(3) Interaction influence: Are interactions not, partially, or fully accounted for in the \emph{influence} measure?\\
(4) Influence type: Are we interested in individual feature or feature group influence (joint / interaction)?

\subsubsection{Definition of Feature Influence}
Adapting the notation of \cite{enouen2025instashap} we view feature-based explanations as the result of three consecutive operations: \emph{masking}, 
\emph{behavior}, and \emph{interaction}  operators. We link these operators to the first three dimensions of \cite{fumagalli2025unifying} for individual feature 
influences; extensions to feature groups are deferred to Appendix~\ref{app:group_extension} and discussed in Section~\ref{sec:feature_groups}.

\textit{(1) Masking.}
The \emph{masking} operator maps a predictor $F$ to a family of predictors $F_S$ with $S \subseteq D$, where each $F_S$ is restricted to features in $S$.
To this end, the features outside of $S$ are removed (masked) using a reference distribution $P$, which is either the conditional or marginal distribution.
In addition, we consider a masking operator using a single baseline input $b \in \mathcal X$.
The baseline, marginal, and conditional masking operations are increasingly influenced by dependencies in the feature distribution \citep{fumagalli2025unifying}.

\begin{definition}[\correlation{Masking}]\label{def:mask}
The \textbf{masking} operator $\mathcal M$ maps $F$ to a family of masked models
\[
(\mathcal M F): \mathcal X \times 2^D \to \mathbb{R}, \quad 
F_S(x) := (\mathcal M F)(x,S),
\]
where $\mathcal M$ is chosen according to Table~\ref{tab:operators}.
\end{definition}

\textit{(2) Behavior.}
The \emph{behavior} operator maps a family of masked models $\mathcal M F$ to a \emph{value function} that quantifies the influence or importance for each feature subset.

\begin{definition}[Behavior]\label{def:behavior}
The \textbf{behavior} operator $\mathcal B$ maps the masked models $F_{\cdot} \equiv \mathcal M F$ to a set function
\[
(\mathcal B (\mathcal M F)): 2^D \to \mathbb{R}, \quad 
\nu(S) := (\mathcal B(\mathcal M F))(S),
\]
where $S \subseteq D$, and $\mathcal B$ is chosen according to Table~\ref{tab:operators}. 
\end{definition}

The influence (importance) of a subset $S \subseteq D$ is measured \emph{locally} by the masked model prediction ($\mathcal B_{\text{loc},x_0})$, or \emph{globally} by the variance ($\mathcal B_{\text{sens}}$) or performance ($\mathcal B_{\text{risk}}$) of the masked model.

\textit{(3) Interaction.} On top of the \textit{behavior} operator, we apply the \textit{interaction} operator which quantifies explanations based on the influence of higher-order effects. \cite{fumagalli2025unifying} distinguishes between \emph{pure}, \emph{partial}, and \emph{full} effects to measure the influence of individual features.

\begin{definition}[\interaction{Interaction}]\label{def:interaction}
The \textbf{interaction} operator $\mathcal I$ maps $\nu \equiv \mathcal B (\mathcal M F)$ to an attribution function
\[
(\mathcal I (\mathcal B (\mathcal M F))): D \to \mathbb{R}, \quad
\phi(i) := (\mathcal I (\mathcal B (\mathcal M F)))(i),
\]
where $i \in D$ and $\mathcal I$ is chosen according to Table~\ref{tab:operators}.
\end{definition}

\begin{table}[htb]
\centering
\caption{Overview of feature influence operators.}
\resizebox{\columnwidth}{!}{%
\begin{tabular}{l|l|l}
\textbf{Concept} & \textbf{Operator} & \textbf{Definition}\\\toprule
\multicolumn{3}{c}{\textbf{\correlation{Masking Operators}}}\\\midrule
baseline & $\mathcal M_b$ & $\mathbb{E}_{\delta_{b_{-S}}}[F(x_S,X_{-S})] = F(x_S,b_{-S})$ \\
marginal & $\mathcal M_m$ & $\mathbb{E}_{P_{X_{-S}}}[F(x_S, X_{-S})]$ \\
conditional & $\mathcal M_c$ & $\mathbb{E}_{P_{X_{-S} \mid X_S =x_S}}[F(x_S, X_{-S})]$ \\\midrule
\multicolumn{3}{c}{\textbf{Behavior Operators}}\\\midrule
local & $\mathcal B_{\text{loc},x_0}$ & $F_S(x_0)$ \\
sensitivity & $\mathcal B_{\text{sens}}$ & $\mathbb{V}_{P_X}[F_S(X)]$ \\
risk & $\mathcal B_{\text{risk}}$ & $-\mathbb{E}_{P_{X,Y}}[\ell(F_S(X),Y)]$ \\\midrule
\multicolumn{3}{c}{\textbf{\interaction{Interaction Operators}}}\\\midrule
pure & $\mathcal I_{\text{pure}}$ & $\nu(i)-\nu(\emptyset)$ \\
partial & $\mathcal I_{\text{partial}}$ & $\sum\limits_{S \subseteq -i} \frac{1}{d \binom{d-1}{|S|}}[\nu(S \cup i)-\nu(S)]$ \\
full & $\mathcal I_{\text{full}}$ & $\nu(D)-\nu(-i)$
\end{tabular}%
}
\label{tab:operators}
\end{table}

\textit{Pure effects} measure the influence of a feature in the \emph{absence} of all other features and thereby ignore its role in higher-order effects.
\textit{Partial effects} are defined via the Shapley value \citep{shapley_1951} of $\nu \equiv \mathcal B(\mathcal M F)$, which yields a fair distribution of interaction effects across involved features. 
\textit{Full effects} measure the influence in the \emph{presence} of all remaining features and hence incorporate higher-order effects entirely.

Interaction operators for \emph{individual effects} generalize to \emph{interactions} and \emph{joint effects} for arbitrary subsets $\phi:2^D\to\mathbb{R}$, cf. Appendix~\ref{app:group_extension}.
Examples of interaction and joint effects 
include the Shapley-Taylor interaction index \citep{sundararajan2020shapley} and 
the grouped-PFI \citep{au2022grouped}.

\section{THE GRANITE METHOD}\label{sec:granite}
In this section, we define the disagreement problem of feature-based explanations and introduce the GRANITE method, which aims to identify regions in the feature space where explanations agree.

\subsection{The Disagreement Problem}\label{sec:disagreement}
According to \cite{fumagalli2025unifying}, feature-based explanations are defined by four dimensions: (1) feature distribution 
influence (\emph{masking}), (2) explanation game (\emph{behavior}), (3) influence of higher-order effects (\emph{interaction}), and 
(4) type of influence measure (individual or groups of features). 
When comparing explanations, (2) and (4) are typically fixed to one category, since local and global, or single-feature and group-feature explanations describe fundamentally different aspects. 
This motivates our definition of the disagreement problem for feature-based explanations:
\begin{definition}[Disagreement Problem]\label{def:disagree}
For a fixed influence measure (individual, joint, or interaction) and behavior 
$\mathcal B$, two feature-based explanations $\phi_1$ and $\phi_2$ \textbf{disagree} if and only if they differ due to the choice of \correlation{masking} $\mathcal M$ or \interaction{interaction} 
$\mathcal I$ (see Table \ref{tab:operators}).
\end{definition}

\paragraph{Illustration.}
We consider the toy example from Figure~\ref{fig:disagreement_illustration}. 
Let $X_1 \sim \mathbb{N}(0,1)$, $X_2, X_3 \in \{-1,1\}$ with $P(X_2 = \pm 1) = P(X_3 = \pm 1) = 0.5$, and $X_4$ be conditionally distributed as 
$(X_4\mid X_3 = x_3) \sim \mathcal{N}(x_3,1)$. 
Let the model be $f(X) = 3X_1X_2 + X_3 + 2X_4$, and assume a local behavior $\mathcal B = \mathcal{B}_{loc,x_0}$. 
This setup illustrates two types of disagreement (Table~\ref{tab:example}). First, for $X_1$, the interaction \interaction{$3X_1X_2$} induces opposing effects depending on $X_2$: Under marginal masking $\mathcal{M}_{m}$, the pure effect of $X_1$ averages to \interaction{zero}, while the full effect \interaction{fully} captures the interaction, creating a clear discrepancy. 
Second, for feature $X_4$, which enters the model linearly, disagreement arises between marginal and conditional masking: Fixing $\mathcal I = \mathcal{I}_{pure}$ yields the linear term \correlation{$2x_4$} for $\mathcal M_m$, while correlation between $X_3$ and $X_4$ introduces an additional nonlinear component \correlation{$\mu_{3\mid 4}$} under $\mathcal M_c$. Thus, even without interactions, feature dependencies can induce disagreement.

\begin{table*}[t]
\caption{Comparison of pure and full effects of features $X_1$ and $X_4$ under marginal ($\mathcal{M}_m$) and conditional ($\mathcal M_c$) distributions for the entire feature space $\Omega:= \mathcal X$ and for regions defined by either $X_2$ or $X_3$ where
$\Omega_{(x_i=j)}:=\{x\in \mathcal X \mid x_i = j\}$. Also, $\mu_{3\mid4}:=\mathbb{E}[X_3\mid X_4=x_4]$ and '-' indicates the feature effect in the given region is the same as in $\Omega:= \mathcal X$.}\label{tab:example}
\begin{tabular}{@{}ll|cc||cc|cc||cc|cc@{}}
\toprule
\multirow{2}{*}{\textbf{$X_i$}} & \multirow{2}{*}{\textbf{Type}} & 
\multicolumn{2}{c||}{\large $\Omega := \mathcal X$} & 
\multicolumn{2}{c|}{\large $\Omega_{(x_2=1)}$} & 
\multicolumn{2}{c||}{\large $\Omega_{(x_2=-1)}$} & 
\multicolumn{2}{c|}{\large $\Omega_{(x_3=1)}$} & 
\multicolumn{2}{c}{\large $\Omega_{(x_3=-1)}$} \\
 &  & $\mathcal{M}_m$ & $\mathcal M_c$ & $\mathcal{M}_m$ & $\mathcal M_c$ & $\mathcal{M}_m$ & $\mathcal M_c$ & $\mathcal{M}_m$ & $\mathcal M_c$ & $\mathcal{M}_m$ & $\mathcal M_c$ \\
\midrule
\multirow{2}{*}{$X_1$} & $\mathcal I_{\text{pure}}$ & \interaction{$\bm{0}$} & $0$ & $3x_1$ & $3x_1$ & $-3x_1$ & $-3x_1$ & - & - & - & - \\
 & $\mathcal I_{\text{full}}$ & \interaction{$\bm{3x_1 x_2}$} & $3x_1 x_2$ & $3x_1$ & $3x_1$ & $-3x_1$ & $-3x_1$ & - & - & -& - \\\midrule
 \multirow{2}{*}{$X_4$} & $\mathcal I_{\text{pure}}$ & \correlation{$\bm{2x_4}$} & \correlation{$\bm{2x_4 + \mu_{3 \mid 4}}$} & - & - & - & - & $2x_4 - 2$ & $2x_4 - 2$ & $2x_4 + 2$ & $2x_4 + 2$ \\
 & $\mathcal I_{\text{full}}$ & $2x_4$ & $2x_4 + \mu_{3 \mid 4}$ & - & - & - & - & $2x_4 - 2$ & $2x_4 - 2$ & $2x_4 + 2$ & $2x_4 + 2$ \\
\bottomrule
\end{tabular}
\end{table*}

\subsection{Definitions of Regional Explanations}
To reduce disagreement, GRANITE proposes explanations with respect to \emph{regions} $\Omega \subseteq \mathcal{X}$ of the feature space.
To this end, the feature space $\mathcal X$ is partitioned,
\[
\mathcal{X} = \bigcup_{k=1}^{K} \Omega_k, \quad \Omega_i \cap \Omega_j = \emptyset \text{ for } i \neq j,
\]
into \emph{regions} $\Omega_k$.
In practice, $\Omega_k$ is determined by recursive partitioning (e.g., decision trees), or any algorithm assigning samples to disjoint regions based on similarity, proximity, or axis-aligned constraints.
\begin{table}[htb]
\centering
\caption{Distribution $P$ used in masking and behavior}
\begin{tabular}{l|l|l}
\textbf{Context} & \textbf{Variant} & \textbf{Distribution} \\\toprule
\multirow{2}{*}{Masking} 
  & marginal & $P_{X_{-S}}$  \\
  & conditional & $P_{X_{-S}\mid X_S = x_S}$ \\\midrule
\multirow{2}{*}{Behavior} 
  & local/sensitivity & $P_X$ \\
  & risk & $P_{X,Y}$ \\
\end{tabular}
\label{tab:distribution_overview}
\end{table}

We define \emph{regional explanations} $\phi_{\mid \Omega}$ as the result of the operators from Table~\ref{tab:operators} restricted to the region $\Omega$.
For this purpose, we restrict all \emph{reference distributions} summarized in Table~\ref{tab:distribution_overview} to the support of $\Omega$, denoted by $P^\Omega$.
A special case is the $\delta$-distribution, where the baseline $b$ must either be fixed (possibly outside of $\Omega$) or depend on $\Omega$.
The restricted operators $\mathcal M^\Omega$ and $\mathcal B^\Omega$ are then defined by Definitions~\ref{def:mask}--\ref{def:behavior}, respectively, and remain valid with $P$ replaced by $P^\Omega$.
The interaction operator $\mathcal I$ remains unchanged.
The individual regional influence measures are then defined by
\begin{equation}
    \phi_{\mid \Omega}(i) := \left(\mathcal{I}\left(\mathcal{B}^\Omega(\mathcal{M}^\Omega F)\right)\right)(i), \quad i \in D.
\end{equation}

\subsection{Finding Agreement among Explanations}

GRANITE seeks regions where two explanations
\begin{align*}
    \phi_{1\mid\Omega} \equiv \mathcal I_1(\mathcal B^\Omega (\mathcal M^\Omega_1 F)) \text{ and } \phi_{2\mid \Omega} \equiv \mathcal I_2(\mathcal B^\Omega (\mathcal M^\Omega_2 F))
\end{align*}
agree by minimizing their disagreement.

 \begin{definition}[Regional Disagreement]\label{def:regional_disagreement_risk}

Let $\ell: \mathbb{R} \to \mathbb{R}_{\geq 0}$.
We define \textbf{regional disagreement} as
\begin{align*}
\mathcal R_{\ell \mid \Omega}(\phi_{1 \mid \Omega},\phi_{2\mid\Omega}) := \mathbb{E}_{P_{X}^\Omega}\left[ \sum_{i \in D} \ell\left(\phi_{1\mid\Omega}(i)-\phi_{2\mid\Omega}(i)\right)\right],
\end{align*}
where $\mathbb{E}_{P^\Omega_{X}}$ only affects local explanations with $\mathcal B =\mathcal B^\Omega_{x, \text{loc}}$ by averaging the distances for all $x \in \Omega$.
\end{definition}

Examples for $\ell$ are the absolute or squared value. 
The regional disagreement extends directly to explanations with groups of features, i.e., for joint effects and interaction effects, as discussed in Section~\ref{sec:feature_groups}.

GRANITE aims to identify regions where two explanations agree, and every datapoint should be assigned exclusively to one region.
This naturally leads to partitions $\{\Omega_1, \ldots , \Omega_K\}$ with disjoint regions $\Omega_1,\dots,\Omega_K \subseteq \mathcal X$
that minimize the total regional disagreement:
\begin{equation}
   \min_{\{\Omega_k\}_{k=1}^K} \sum_{k=1}^K \mathcal{R}_{\ell\mid\Omega_k} (\phi_{1\mid\Omega_k}, \phi_{2\mid\Omega_k})
   \label{eq:objective}
\end{equation}

Given our explainability objective, an implicit constraint on partitions is that they should be interpretable and not include too many regions.
In Section \ref{sec:estimation}, we propose an algorithm to solve this task.

Depending on whether the \emph{masking} or \emph{interaction} operator differs, GRANITE minimizes the corresponding source of disagreement (interaction effects or feature distribution influence). The following paragraphs analyze the case of individual feature influence, with an extension to feature groups in Section~\ref{sec:feature_groups}.

\textbf{\interaction{Minimizing Interactions.}}
For fixed \emph{masking} and \emph{behavior} operators, different \emph{interaction} operators (e.g., pure, partial, full) may lead to different feature influence measures $\phi_1$ and $\phi_2$, with differences arising solely from higher-order effects.

\begin{theorem}\label{theorem:min_interactions}
Let $\mathcal M_1^\Omega = \mathcal M_2^\Omega$. Then for $i \in D$,
\begin{align*}
    \phi_{1\mid \Omega}(i)-\phi_{2\mid \Omega}(i) = \sum_{S \subseteq D: i \in S, \vert S \vert \geq 2} \tilde{w}^{S} \Delta(S),
\end{align*}
where $\tilde{w}^S \geq 0$ are specific interaction weights, and 
\begin{align*}
\Delta(S) := \sum_{L \subseteq S} (-1)^{\vert S \vert - \vert L \vert} (\mathcal B^\Omega(\mathcal M_1^\Omega F))(L),
\end{align*}
\emph{pure} interactions of the value function $\nu \equiv \mathcal B^\Omega(\mathcal M_1^\Omega F)$.
\end{theorem}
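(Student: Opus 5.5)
Since the two explanations share the masking operator and the behavior operator $\mathcal B^\Omega$, both are built on the same value function $\nu := \mathcal B^\Omega(\mathcal M_1^\Omega F)$. They differ only through the interaction operators $\mathcal I_1,\mathcal I_2$ from Table~\ref{tab:operators}. The plan is therefore to express every interaction operator through the Möbius transform of $\nu$, which is exactly $\Delta(S)$, and then subtract.

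\textbf{Step 1: Möbius inversion.} By Möbius inversion on the Boolean lattice $2^D$,
\begin{align*}
\nu(T) = \sum_{S \subseteq T} \Delta(S) \quad \text{for all } T \subseteq D.
\end{align*}
This follows from the definition of $\Delta$ by the standard alternating-sum identity $\sum_{L \subseteq R \subseteq T}(-1)^{|T|-|R|} = \mathbb{1}[L=T]$.

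\textbf{Step 2: each operator in Möbius form.} Substituting the representation from Step 1 into each operator gives the following.
\begin{itemize}
\item Pure effect: $\nu(i)-\nu(\emptyset) = \Delta(i)$.
\item Full effect: $\nu(D)-\nu(-i) = \sum_{S \ni i} \Delta(S)$, since the sets $S\subseteq D$ not contained in $-i$ are precisely those containing $i$.
\item Partial effect (Shapley value): the classical identity $\sum_{S \ni i} \frac{1}{|S|}\Delta(S)$. To verify it, write $\nu(T\cup i)-\nu(T) = \sum_{R \subseteq T}\Delta(R\cup i)$ and sum the Shapley weights over all $T \supseteq R\setminus i$. This yields the standard Beta-integral computation $\sum_{k}\binom{d-|R|-1}{k}\frac{1}{d\binom{d-1}{|R|+k}} = \frac{1}{|R|+1}$, which I would cite or check briefly.
\end{itemize}
Hence each operator has the form $\phi(i) = \sum_{S\ni i} w^S\Delta(S)$, with weights $w^S = \mathbb 1[|S|=1]$, $1/|S|$, and $1$ respectively. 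In all three cases the weight of the singleton $S=\{i\}$ equals $1$.

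\textbf{Step 3: subtract.} Taking the difference $\phi_{1\mid\Omega}(i)-\phi_{2\mid\Omega}(i)$, the singleton terms cancel. This leaves a sum over $S \ni i$ with $|S|\ge 2$ and coefficients $w_1^S - w_2^S$. For nonnegativity I would order the operators so that $\mathcal I_1$ accounts for more interaction than $\mathcal I_2$, i.e.\ full $\succeq$ partial $\succeq$ pure. Then for $|S|\ge2$,
\begin{align*}
1 - \tfrac{1}{|S|} \ge 0, \qquad 1 - 0 \ge 0, \qquad \tfrac{1}{|S|} - 0 \ge 0.
\end{align*}
This gives $\tilde w^S := w_1^S - w_2^S \ge 0$. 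In the opposite ordering the sign flips globally, so the statement holds with $\tilde w^S\ge 0$ up to the convention of which explanation is labelled $\phi_1$. I would state this convention explicitly. If $\mathcal I_1=\mathcal I_2$, all $\tilde w^S=0$.

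\textbf{Main obstacle.} The only nontrivial computation is the Shapley--Möbius identity in Step 2, namely the combinatorial sum of the Shapley weights. I would handle it by the integral representation $\frac{1}{d\binom{d-1}{s}} = \int_0^1 t^{s}(1-t)^{d-1-s}\,dt$. Summing over the binomial then gives $\int_0^1 t^{|R|}\,dt = \frac{1}{|R|+1}$. The rest is bookkeeping. The regional restriction plays no role, because everything is a statement about the single set function $\nu$, whatever $P^\Omega$ is.
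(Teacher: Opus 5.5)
Your proposal is correct and takes the same route as the paper. Both express each interaction operator in Möbius form $\phi(i)=\Delta(i)+\sum_{S\ni i,\,|S|\ge 2} w^S\Delta(S)$ with $w^S = 0, \frac{1}{|S|}, 1$ for pure, partial, and full, then subtract to get $\tilde w^S = w_1^S - w_2^S$. Two things you add that the paper does not: you derive the Möbius representations yourself (the Shapley weights via the Beta integral), where the paper imports them from the unifying framework of Fumagalli et al.; and you state the labelling convention (full $\succeq$ partial $\succeq$ pure) needed for $\tilde w^S\ge 0$, which the paper's proof never checks.
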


This decomposition always exists, as it is derived from the Möbius transform of the cooperative game $\nu(S) := (\mathcal{B}(\mathcal{M} F))(S)$, with weights being the differences between the representations of each interaction operator $\mathcal{I}$ (Appendix~\ref{app:proof_thm1}).

\textit{Illustration.}
In the example from Table~\ref{tab:example}, fixing $\mathcal{M} = \mathcal{M}_m$, the pure and full effects differ only in their interaction operator $\mathcal I$.
For feature $X_1$, the disagreement \interaction{${3x_1x_2}$} reflects interaction effects. 
By Theorem~\ref{theorem:min_interactions}, the disagreement is driven solely by higher-order terms, so GRANITE’s partitioning along $X_2$ minimizes these interactions within regions, which leads to agreement between pure and full effects.

In general, the disagreement between explanations changes when using different behavior operators.
However, under feature independence and squared error $\ell$, the disagreement of full and pure local explanations is exactly the difference of the sensitivity explanations.

\begin{theorem}
\label{thm:general-local-to-global-variance}
Let $\ell(x) := x^2$, and consider $\phi_{\text{pure}}^{\text{loc}}$,$\phi_{\text{full}}^{\text{loc}}$,$\phi_{\text{pure}}^{\text{sens}}$, and $\phi_{\text{full}}^{\text{sens}}$, where the subscript indicates the interaction operator $\mathcal I_{\text{pure}},\mathcal I_{\text{full}}$, and the superscript the behavior operator $\mathcal B_{x_{0},\text{loc}},\mathcal B_{\text{sens}}$.
Under feature independence, i.e. $\mathcal M = \mathcal M_m =\mathcal M_c$, we have
\[
\mathcal{R}_{\ell \mid \Omega}(\phi^{\text{loc}}_{\text{full}\mid \Omega},\phi^{\text{loc}}_{\text{pure}\mid \Omega}) 
=
\sum_{i\in D}(\phi^\text{sens}_{\text{full}\mid \Omega}(i)-\phi^{\text{sens}}_{\text{pure}\mid \Omega}(i)).
\]

\end{theorem}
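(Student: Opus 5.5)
Under independence within $\Omega$ (the regional product distribution $P^\Omega_X$), I will work with the regional fANOVA decomposition $F=\sum_{S\subseteq D} f_S$ from Section~\ref{sec:fanova}, taken with respect to $P^\Omega$. The components are orthogonal and centered: $\mathbb{E}[f_S(X)\,f_T(X)]=0$ for $S\neq T$, and $\mathbb{E}[f_S(X)\mid X_{S\setminus\{j\}}]=0$ for $j\in S$.

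The first step is to express both operators through these components. The masked model is $F_S(x)=\sum_{T\subseteq S} f_T(x)$, and this holds for both $\mathcal M_m$ and $\mathcal M_c$ because the two coincide under independence.

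For the local behavior at $x$, the pure explanation is
\[
\phi^{\text{loc}}_{\text{pure}}(i)=F_i(x)-F_\emptyset=f_i(x).
\]
The full explanation is
\[
\phi^{\text{loc}}_{\text{full}}(i)=F(x)-F_{-i}(x)=\sum_{T\ni i} f_T(x).
\]
Their difference is therefore $\sum_{T\ni i,|T|\ge 2} f_T(x)$, which matches Theorem~\ref{theorem:min_interactions} with the Möbius coefficients of the local game equal to $f_T(x)$.

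Next I square this difference and average over $x\sim P^\Omega_X$. Orthogonality kills all cross terms, so
\[
\mathbb{E}\Big[\big(\phi^{\text{loc}}_{\text{full}}(i)-\phi^{\text{loc}}_{\text{pure}}(i)\big)^2\Big]=\sum_{T\ni i,|T|\ge2}\mathbb{V}[f_T(X)].
\]
Summing over $i\in D$ gives $\mathcal R_{\ell\mid\Omega}$.

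For the sensitivity behavior, I compute $\nu(S)=\mathbb{V}[F_S(X)]=\sum_{\emptyset\ne T\subseteq S}\mathbb{V}[f_T]$, again by orthogonality. This gives the pure sensitivity effect
\[
\phi^{\text{sens}}_{\text{pure}}(i)=\nu(i)-\nu(\emptyset)=\mathbb{V}[f_i],
\]
and the full sensitivity effect
\[
\phi^{\text{sens}}_{\text{full}}(i)=\nu(D)-\nu(-i)=\sum_{T\ni i}\mathbb{V}[f_T].
\]
Their difference is $\sum_{T\ni i,|T|\ge2}\mathbb{V}[f_T]$. This equals the local term for each $i$, so summing over $i$ proves the claim, and in fact gives the identity feature by feature.

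The main obstacle is justifying the orthogonal fANOVA structure on the restricted distribution $P^\Omega$. This needs $P^\Omega_X$ to remain a product measure, as it does, for example, for axis-aligned rectangular regions of an independent $P$. The statement's hypothesis $\mathcal M_m=\mathcal M_c$ should be read as exactly this assumption, and I would state it explicitly in the proof. Given that, the remaining step is routine: $F_S=\sum_{T\subseteq S}f_T$ follows from the recursive definition~\eqref{eq_fanova_effect} together with the vanishing conditional means of the components.
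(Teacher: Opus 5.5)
Your proof is correct and follows essentially the same route as the paper. You identify the Möbius coefficients of the local game with the regional fANOVA components $f_T(x)$, so the full-minus-pure gap for feature $i$ is $\sum_{T\ni i,\,|T|\ge 2} f_T(x)$; orthogonality and mean-centering then turn both the averaged squared local gap and the sensitivity gap into $\sum_{T\ni i,\,|T|\ge 2}\mathbb{V}[f_T(X)]$, feature by feature. Your remark that independence must hold under the restricted measure $P^\Omega_X$ (automatic, e.g., for axis-aligned leaves of an independent $P$) usefully makes explicit a hypothesis the paper leaves implicit.
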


Importantly, with feature dependencies or other interaction operators, squared local disagreement and sensitivity disagreement do not necessarily coincide; though both GRANITE objectives are variance-based, they can yield different explanations in practice.

\paragraph{\correlation{Minimizing Distribution Influence.}}
We now consider interaction operators $\mathcal I=\mathcal{I}_1 = \mathcal I_2$ and compare two regional explanations $\phi_{1\mid\Omega}$, $\phi_{2\mid\Omega}$ that differ only in their masking operators with $\mathcal{M}^\Omega_1 = \mathcal M_c^\Omega$ and $\mathcal M^\Omega_2 =\mathcal M_m^\Omega$. 
In this setting, the difference between regional explanations reflects the differences in conditional and marginal distribution caused by feature dependencies, which GRANITE aims to minimize:

\begin{theorem}\label{thm:regional-distributional-differences}
Let $w^\Omega_{x_S} := \frac{dP^\Omega_{X_{-S}|X_S=x_S}}{d P^\Omega_{X_{-S}}}$ be well-defined with density $dP$, $\mathcal M^\Omega_1=\mathcal M^\Omega_c$, $\mathcal M^\Omega_2 =\mathcal M^\Omega_m$ with $\mathcal B^\Omega_1 = \mathcal B^\Omega_2 = \mathcal B_{x,\text{loc}}$. 
Then, $\phi_{1\mid\Omega}-\phi_{2\mid\Omega} \equiv \mathcal I (\mathcal B^\Omega_{loc,x_0}\delta^\Omega)$, where
\begin{align*}
    \delta^\Omega(x,S) := \mathbb{E}_{P^\Omega_{X_{-S}}}[(w^\Omega_{x_S}(X_{-S})-1) F(x_S,X_{-S})].
\end{align*}
In words, the disagreement is equal to the local explanation of $\delta^\Omega$, driven by the difference between conditional and marginal distribution.
For $\mathcal B^\Omega_{\text{sens}}$ and $\mathcal B^\Omega_{\text{risk}}$, similar variants of $\delta^\Omega$ are presented in Appendix~\ref{appx_proof_distributions}.
\end{theorem}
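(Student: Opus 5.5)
The plan is to exploit linearity of every operator in the pipeline and then rewrite the conditional expectation as a reweighted marginal expectation. First, the interaction operators in Table~\ref{tab:operators} (pure, partial, full) are all linear in the value function $\nu$: each $\phi(i)$ is a fixed linear combination of the values $\nu(S)$. Second, the local behavior operator $\mathcal B^\Omega_{\text{loc},x_0}$ simply evaluates the masked model at $x_0$, $\nu(S)=F_S(x_0)$, so it is linear in the family $(F_S)_{S\subseteq D}$. Since $\mathcal I_1=\mathcal I_2=\mathcal I$ and the behavior operators agree, we get
\[
\phi_{1\mid\Omega}-\phi_{2\mid\Omega} = \mathcal I\bigl(\mathcal B^\Omega_{\text{loc},x_0}(\mathcal M^\Omega_c F) - \mathcal B^\Omega_{\text{loc},x_0}(\mathcal M^\Omega_m F)\bigr) = \mathcal I\bigl(\mathcal B^\Omega_{\text{loc},x_0}(\mathcal M^\Omega_c F - \mathcal M^\Omega_m F)\bigr).
\]
Here the difference of masked families is taken pointwise in $(x,S)$. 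It therefore suffices to show $(\mathcal M^\Omega_c F)(x,S)-(\mathcal M^\Omega_m F)(x,S)=\delta^\Omega(x,S)$ for every $x\in\Omega$ and $S\subseteq D$.

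For this identity, fix $x$ and $S$. By the assumption that $w^\Omega_{x_S}$ is a well-defined Radon--Nikodym derivative of $P^\Omega_{X_{-S}\mid X_S=x_S}$ with respect to $P^\Omega_{X_{-S}}$, a change of measure gives
\[
\mathbb{E}_{P^\Omega_{X_{-S}\mid X_S=x_S}}[F(x_S,X_{-S})] = \mathbb{E}_{P^\Omega_{X_{-S}}}[w^\Omega_{x_S}(X_{-S})\,F(x_S,X_{-S})].
\]
Subtracting the marginal masking $\mathbb{E}_{P^\Omega_{X_{-S}}}[F(x_S,X_{-S})]$ and combining the two integrals against the same measure $P^\Omega_{X_{-S}}$ yields exactly $\delta^\Omega(x,S)$. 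The edge cases $S=D$ (nothing masked, both sides equal $F(x)$, and $w\equiv 1$ on the trivial space so $\delta^\Omega=0$) and $S=\emptyset$ (conditional equals marginal, so $w\equiv1$) are consistent with the formula. Plugging this back in gives $\phi_{1\mid\Omega}-\phi_{2\mid\Omega}=\mathcal I(\mathcal B^\Omega_{\text{loc},x_0}\delta^\Omega)$, where $\mathcal B^\Omega_{\text{loc},x_0}$ is applied to $\delta^\Omega$ viewed as a masked family.

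The main obstacle is measure-theoretic rather than algebraic. It consists of ensuring integrability of $w^\Omega_{x_S}F$ under $P^\Omega_{X_{-S}}$, which requires $F$ square-integrable with a suitable moment condition on $w$, or simply assuming both expectations are finite, so that the subtraction of the two integrals is legitimate. It also requires that restricting to $\Omega$ preserves absolute continuity, which is covered by the hypothesis. For the sensitivity and risk variants, linearity fails because variance and loss are nonlinear in $F_S$. There I would instead write $\mathcal M_c F = \mathcal M_m F + \delta^\Omega$ and expand, so the analogous $\delta^\Omega$-terms include cross terms such as $2\,\mathrm{Cov}(F_S^m,\delta^\Omega_S)+\mathbb V[\delta^\Omega_S]$.
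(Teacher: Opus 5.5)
Your proposal is correct and follows the paper's proof essentially step for step: additivity of $\mathcal I$ and $\mathcal B^\Omega_{\text{loc},x_0}$ reduces the claim to the masking difference, and the change of measure via $w^\Omega_{x_S}$ turns $\mathcal M^\Omega_c F - \mathcal M^\Omega_m F$ into $\delta^\Omega$. Your sketch for the sensitivity and risk cases also matches the appendix: writing $\mathcal M^\Omega_c F = \mathcal M^\Omega_m F + \delta^\Omega$ and expanding gives the variance and covariance cross terms, and the paper carries out the risk case for the squared loss.
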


\textit{Illustration.}
In the example from Table~\ref{tab:example}, we fix $\mathcal{I}=\mathcal{I}_{\text{pure}}$ and compare the marginal and conditional effects for $X_4$, which differ only in $\mathcal{M}$. Across the feature space, the dependence between $X_3$ and $X_4$ introduces an extra term \correlation{${\mu_{3 \mid 4}}$} in the conditional effect. By Theorem~\ref{thm:regional-distributional-differences}, this disagreement stems solely from the masking distribution. GRANITE’s partitioning along $X_3$ minimizes this difference within regions, aligning marginal and conditional effects.

\subsection{Extension to Feature Groups}\label{sec:feature_groups}

Definition~\ref{def:regional_disagreement_risk} introduced disagreement for individual influence measures $\phi: D\to \mathbb{R}$.
This disagreement generalizes to group explanations $\phi:\mathcal S \to \mathbb{R}$, where $\mathcal S \subseteq 2^D$, by summing over $\sum_{S \in \mathcal S}$ instead of $\sum_{i \in D}$.
Following \cite{fumagalli2025unifying}, we distinguish joint influence of a group from its interaction influence. 
Joint influence only minimizes interactions or distributional effects between groups, which is especially meaningful when features are naturally interpreted together---e.g., genetic \citep{lozano2009grouped} or sensor \citep{chakraborty2008selecting} data. 
Interaction influence instead targets only higher-order interactions or distributional effects. By restricting minimization to interactions above a chosen order, lower-order effects remain intact and interpretable, reducing the number of regions while preserving decomposability up to that order.

\begin{corollary}\label{theorem:feature_groups}
Theorems~\ref{theorem:min_interactions} and \ref{thm:regional-distributional-differences} extend directly to joint and interaction influence measures.
\end{corollary}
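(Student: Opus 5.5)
The plan is to handle the two parts of the corollary separately. In both parts the only change from the individual case is that the attribution map $\phi:D\to\mathbb{R}$ becomes $\phi:\mathcal S\to\mathbb{R}$. Joint and interaction influence measures are still obtained by applying a \emph{linear} interaction operator $\mathcal I$ to the value function $\nu\equiv\mathcal B^\Omega(\mathcal M^\Omega F)$ (Appendix~\ref{app:group_extension}). Both earlier proofs use only this linearity together with the Möbius representation of $\nu$, so it suffices to check that these two ingredients survive the passage from $i\in D$ to $T\in\mathcal S$.

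\textbf{Extension of Theorem~\ref{theorem:min_interactions}.} First I write $\nu$ through its Möbius transform, $\nu(L)=\sum_{S\subseteq L}\Delta(S)$, where $\Delta(S)=\sum_{L\subseteq S}(-1)^{|S|-|L|}\nu(L)$. Next I express each group operator in the form $\phi(T)=\sum_{S\subseteq D} w_{\mathcal I}^{T,S}\,\Delta(S)$.
\begin{itemize}
\item For \emph{joint} effects of $T$, the pure version is $\nu(T)-\nu(\emptyset)$, which gives weights $\mathbb 1[\emptyset\neq S\subseteq T]$.
\item The full joint version is $\nu(D)-\nu(-T)$, which gives weights $\mathbb 1[S\cap T\neq\emptyset]$.
\item Partial joint versions, i.e.\ Shapley-type values of the coalition $T$, give weights that equal $1$ on $\emptyset\neq S\subseteq T$ and lie in $[0,1]$ on the other sets meeting $T$.
\item For \emph{interaction} effects, the pure Möbius interaction is $\Delta(T)$. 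The full (co-Möbius) version gives weights $\mathbb 1[T\subseteq S]$. Shapley--Taylor or Shapley interaction gives weight $1$ on $S=T$ and weights in $[0,1]$ on strict supersets of $T$.
\end{itemize}
The difference of two such operators is $\sum_S (w_1^{T,S}-w_2^{T,S})\Delta(S)$. All weights agree on the sets that the operators share, namely $S\subseteq T$ in the joint case and $S=T$ in the interaction case. So only $\Delta(S)$ with $S$ meeting $T$ and $S\not\subseteq T$ (joint), or $S\supsetneq T$ (interaction), survive. These are exactly the interactions between groups, or of order above $|T|$.

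\textbf{Main obstacle: nonnegativity of $\tilde w^S$.} For the difference of a "larger" and a "smaller" operator (full minus partial minus pure), this holds because each weight family is monotone across the operators. I will verify this weight by weight, as in the individual case in Appendix~\ref{app:proof_thm1}. I expect this verification to require the most care for the partial and Shapley--Taylor weights.

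\textbf{Extension of Theorem~\ref{thm:regional-distributional-differences}.} Here linearity does all the work. With $\mathcal B^\Omega_{\text{loc},x_0}$ fixed, $\mathcal M_c^\Omega F(x,S)-\mathcal M_m^\Omega F(x,S)=\delta^\Omega(x,S)$ for every $S$. This follows by rewriting the conditional expectation with the density ratio $w^\Omega_{x_S}$, exactly as in the individual proof. Since $\mathcal B_{\text{loc}}$ is evaluation at $x_0$ and any group operator $\mathcal I$ is linear in $\nu$, we get $\phi_1(T)-\phi_2(T)=\mathcal I(\mathcal B^\Omega_{\text{loc},x_0}\delta^\Omega)(T)$ for all $T\in\mathcal S$. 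The sensitivity and risk variants follow in the same way from the appendix versions of $\delta^\Omega$. Finally, summing $\ell$ over $T\in\mathcal S$ instead of over $i\in D$ gives the corresponding group regional disagreement.
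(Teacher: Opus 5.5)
Your proposal is correct and follows the paper's route. You express each joint or interaction operator through the M\"obius coefficients $\Delta$ with operator-specific weights ($0$ for pure, probabilistic weights in $[0,1]$ for partial, $1$ for full), so that only between-group or higher-order terms survive in the difference, and you use additivity of $\mathcal I$ (and of $\mathcal B_{\text{loc}}$) to carry the $\delta^\Omega$ representation over to groups. Your flagged obstacle is immediate from those weight ranges ($\tilde w\geq 0$ once the more inclusive operator is taken as $\phi_1$, which the paper leaves implicit), and your joint-effect index set (Möbius sets meeting the group but not contained in it) is correct, whereas the paper's condition $S\cap T\notin\{\emptyset,S,T\}$ (group $S$, Möbius index $T$) drops the strict supersets of the group that $\nu(D)-\nu(-S)$ contains.
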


\textit{Illustration.}
Let $X_i \sim \mathbb{N}(0,1)$, $i = 1,\ldots,4$, and $\textstyle F(X) = (X_1 + 0.7X_1X_2)\cdot \mathds{1}{X_3 \geq 0} + (-X_1 + 0.7X_1X_4)\cdot \mathds{1}{X_3 < 0}$. We apply GRANITE to the interaction influence of feature pairs, with local behavior $\mathcal{B}_{\text{loc}, x_0}$ and marginal masking $\mathcal{M}_m$, restricting minimization to interactions above order two. On $2000$ samples, GRANITE splits at $X_3 = 0$, yielding two regions where higher-order interactions vanish. The regional decompositions, $F_{\mid\Omega_l}(x) = -x_1 + 0.7 x_1 x_4$ and $F_{\mid\Omega_r}(x) = x_1 + 0.7 x_1 x_2$, remain interpretable and require fewer splits (Figure~\ref{fig:interaction_example}). This is especially useful in the presence of continuous interactions, where existing regional frameworks encounter limitations \citep{herbinger2022repid, herbinger2024gadget, laberge2024tackling}. An analogous approach applies to feature dependencies, and a joint influence example is discussed in Appendix~\ref{app:experiments}.
\begin{figure}[tbh]
    \centering
    \includegraphics[width=\linewidth]{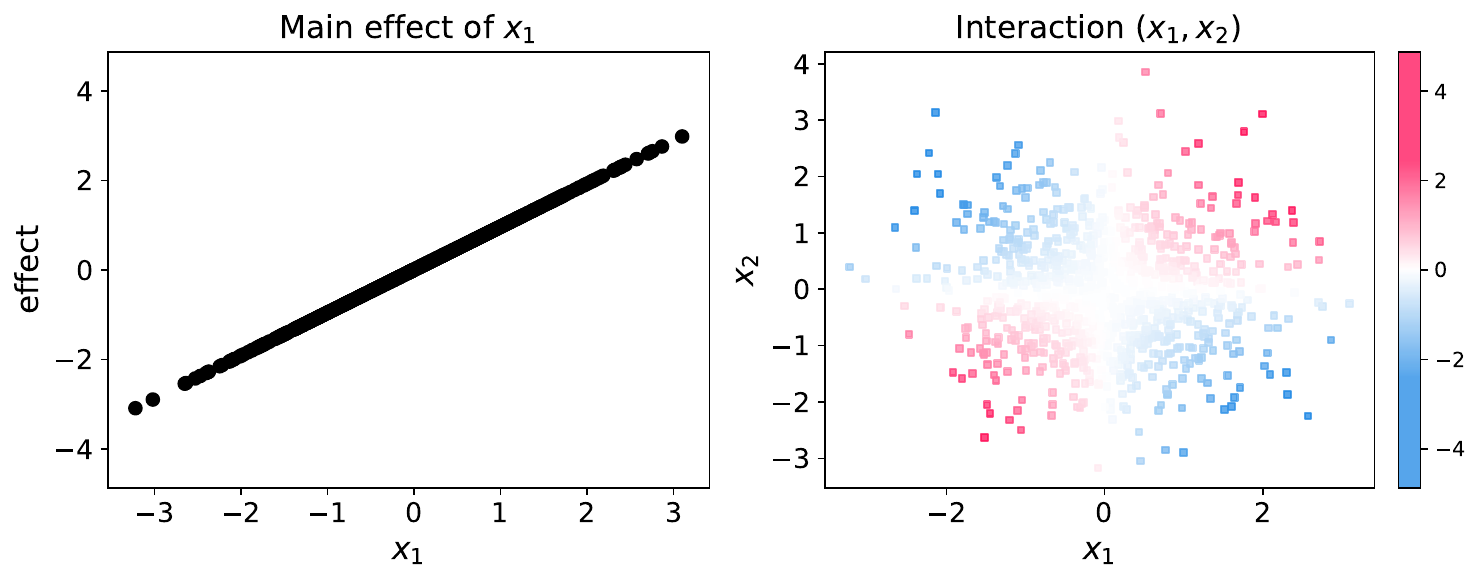}
    \caption{Main effect of $x_1$ (left) and its interaction with $x_2$ (right) in region $\Omega_r$, where $F_{\mid\Omega_r}(x) = x_1 + 0.7\,x_1x_2$.}.
    \label{fig:interaction_example}
\end{figure}

\subsection{Estimations}\label{sec:estimation}
In practice, minimizing the disagreement via Equation \eqref{eq:objective} requires: 1) estimating
explanations $\phi_{\mid \Omega}$
from data
and 2) efficiently exploring the space of partitions.

\textbf{Estimating Explanations.}
For computational efficiency, we consider regional explanations $\phi_{\mid \Omega} := \mathcal I(\mathcal B^\Omega (\mathcal M^\Omega F))$ with \textbf{pure} ($\mathcal I_{\text{pure}})$ and \textbf{full} ($\mathcal I_{\text{full}}$) interaction operators.
This reduces evaluations of $\nu^\Omega = (\mathcal B^\Omega( \mathcal M^\Omega_m F))$ to the subsets $\emptyset, i, -i$, and $D$.
While GRANITE can be used with $\mathcal I=\mathcal I_{\text{partial}}$, minimizing differences between pure and full effects, also reduces the disagreement with partial effects.
 
To estimate $\phi_{\mid \Omega}$, we first estimate $P$ with the empirical distribution over $N$ data points $\{x^{(n)}\}_{n=1}^N$. 
For $S\subseteq D$, we compute all possible combinations
\begin{equation}
   R^S_{n\tilde n} = F(x^{(n)}_S, x^{(\tilde n)}_{-S}), \quad n,\tilde n=1,\dots,N,
\end{equation}
and estimate the marginal masking operator with
\begin{equation}
    (\mathcal{M}_m^\Omega F)(x^{(n)}, S) = \frac{1}{\vert \{\tilde n: x^{(\tilde n)} \in \Omega\} \vert}\sum_{x^{(\tilde n)} \in \Omega} R^S_{n \tilde n},
    \label{eq:marginal_approx}
\end{equation}
For conditional masking, we refer to the Appendix~\ref{app:algorithm}.
Due to the symmetry $\bm{R}^{-i} = (\bm{R}^{i})^T$, GRANITE only needs to compute $\{\bm{R}^{i}\}_{i=1}^d$ to evaluate $\nu^\Omega(i)$ and $\nu^\Omega(-i)$, since this symmetry is preserved when applying the behavior operators $\mathcal{B}^\Omega$ over the instances in $\Omega$.
Moreover, $\nu^\Omega(\emptyset)$ and $\nu^\Omega(D)$ are directly estimated from the predictions.
Further details, including space complexity, are provided in the Appendix~\ref{app:algorithm}.

\textbf{Computing Partitions.}
Solving Equation \eqref{eq:objective} is intractable since the space of partitions of $\mathcal{X}$ is infinite.
Existing regional explanation frameworks address this via clustering or 
recursive partitioning \citep{britton2019vine, herbinger2024gadget, laberge2024tackling, molnar2023model}. 
Clustering often yields non-interpretable regions \citep{herbinger2022repid}, whereas recursive partitioning produces well-defined ones. GRANITE supports any method that identifies distinct regions under our risk constraint.

We adopt a recursive partitioning approach, constraining regions to be the leaves of a binary decision tree with bounded
depth, where each leaf contains at least $N_{\text{min}}$ samples. To minimize disagreement risk, the feature space is recursively split into two subregions, $\Omega_-$ and $\Omega_+$, until any of these constraints is reached.
A bottom-up pruning step based on a regularization term $\alpha K$ with $\alpha \ge 0$ and $K$ being the number of leaves, undoes splits that do not sufficiently reduce risk.

To split along any of the $d$ features, the algorithm considers at most $B$ split candidates.
For each candidate, regional explanations on $\Omega_-$ and $\Omega_+$ are computed using the pre-computed matrices $\bm{R}^S$, and the
corresponding disagreement is evaluated. The optimal split is selected and the procedure repeats.

\emph{Computationally}, at most $\mathcal{O}(2^{\text{max-depth}})$ internal nodes are evaluated, each with $\mathcal{O}(dB)$ split candidates.
Since computing the disagreement risk is $\mathcal{O}(dN^2)$, the overall partitioning complexity is
$\mathcal{O}(2^{\text{max-depth}} d^2 N^2 B)$. The full procedure is summarized in Appendix \ref{app:algorithm}.

\section{REGIONAL UNIFICATION}
Having introduced GRANITE, we now review and unify regional explanation methods.

\begin{table*}[!t]
\caption{Regional explanation frameworks and their unification within GRANITE, showing which disagreement types and behaviors each approach addresses. \cmark: fulfilled, \xmark: not fulfilled, $*$: all possible choices, --: not defined.}\label{tab:overview_frameworks}
\centering
\begin{tabular}{l|c|c|c|c|c|c|c|c}
\toprule
&\multicolumn{3}{|c|}{\textbf{Disagreement}} & \multicolumn{3}{c|}{\textbf{Behavior}} &  & \\
\hline
\textbf{Framework} & \interaction{$\mathcal I_{\text{full}}\rightarrow\mathcal I_{\text{pure}}$} & \interaction{$\mathcal I_{\text{partial}}\rightarrow\mathcal I_{\text{pure}}$} & \correlation{$\mathcal M_c\rightarrow \mathcal M_m$} & $\mathcal B_{loc}$ & $\mathcal B_{risk}$ & $\mathcal B_{sens}$ & $\mathcal M$ & $\mathcal I$ \\
\hline
VINE & \cmark & \xmark & -- & \cmark & \xmark & \xmark & $m$  & -- \\
CohEx & \xmark & \cmark & -- & \cmark & \xmark & \xmark &  $m$ & --  \\
REPID & \cmark & \xmark & -- & \cmark & \xmark & \xmark  & $m$ & -- \\
GADGET & \cmark & \cmark & -- & \cmark & \xmark & \xmark & $m, c$  & -- \\
FD-Trees & \cmark & \cmark & -- & \cmark & \xmark & \xmark  & $m$ & --  \\
Transf. Trees & -- & -- & \cmark & \cmark & \cmark & \xmark & -- & full, pure  \\\midrule
GRANITE & \cmark & \cmark & \cmark & \cmark & \cmark & \cmark & $*$  & $*$ \\
\bottomrule
\end{tabular}
\end{table*}

\textbf{Related Work.}
Prior research has shown that feature-based explanations disagree due to interactions or differences between marginal and conditional perturbations \citep{krishna2024disagreement, herbinger2022repid, chen_true_2020}.
Existing regional frameworks focus on the reduction of disagreement caused by interactions. 
Heuristic approaches, such as VINE \citep{britton2019vine} and CohEx \citep{meng2024cohex}, cluster ICE curves or SHAP values.
REPID \citep{herbinger2022repid} recursively partitions ICE curves with functional ANOVA guarantees, GADGET \citep{herbinger2024gadget} generalizes this to other feature effect methods, and FD-Trees \citep{laberge2024tackling} align PDP, SHAP, and PFI via functional decomposition.
With GRANITE, we provide a principled approach via disagreement to minimize interactions within regions.

In contrast, methods targeting distributional effects are limited: transformation trees \citep{molnar2023model} reduce feature dependencies to align PDPs with MPlots and PFI with CFI.
With GRANITE and our notion of regional disagreement, we provide a mathematically rigorous framework to identify regions that minimize feature dependencies.

\textbf{Regional Unification in GRANITE.}
Above listed regional explanation frameworks are special cases of GRANITE. All minimize disagreement for individual measures but differ in targeting interactions or distributional influence (see Table \ref{tab:overview_frameworks}). Below, we show how each fits into GRANITE, with details in Appendix~\ref{app:unification}.

\emph{\interaction{Minimizing Interactions:}}
VINE and CohEx cluster local explanations under marginal masking: VINE minimizes full-to-pure and CohEx partial-to-pure. REPID, GADGET, and FD-Trees instead use recursive partitioning: REPID targets full-to-pure under marginal masking, GADGET and FD-Trees additionally reduce partial-to-pure, and GADGET also handles full-to-pure under conditional masking.

\emph{\correlation{Minimizing Distribution Influence:}}
transformation trees minimize conditional-to-marginal influence for pure local effects and full global risk behavior.

In summary, existing methods restrict to individual measures and one type of disagreement, each covering only certain behaviors, masking schemes, or interaction operators. GRANITE unifies these cases, closes the gaps for individual measures, and extends beyond them to joint and interaction influence measures.

\section{EXPERIMENTS}\label{sec:experiments}
GRANITE is highly flexible: with three influence types (individual, joint, and interaction) and three behaviors, it provides nine settings to minimize interaction or distribution disagreement, each suited to different interpretability goals. Here, we illustrate several of these on the \textbf{Bikesharing} dataset, addressing key interpretability questions. Additional real-world examples, including interaction-focused analyses and joint influence measures, are provided in Appendix~\ref{app:experiments}; the full implementation and experimental code are available at \url{https://github.com/gablabc/GRANITE}.
The Bikesharing dataset contains hourly bike rentals in the Capital Bikeshare system (2011–2012). We train a gradient boosting model on $80\%$ of the data using ten temporal and weather-related features (see Appendix~\ref{app:experiments} for details). The remaining $20\%$ serves as a test set.
GRANITE is applied with different objectives on the first $N=1000$ test samples to demonstrate how it adapts to distinct 
interpretability questions. Unless stated otherwise, the default hyperparameters $\alpha=0.05,N_{\text{min}}=20,B=40$ are used.

\interaction{Simpler Model Explanations and More Interpretable Decompositions.}
A key challenge in model interpretation is that feature interactions complicate explanations. ICE plots reveal interactions but not their form or strength, 
while PDPs provide simple additive insights but ignore interactions, leading to incomplete understanding. The most interpretable explanations arise when the prediction function can be expressed 
primarily through low-order terms.

We demonstrate this on the Bikesharing data using \textbf{marginal local effects}. GRANITE minimizes interaction disagreement between \textbf{ICE (full effects)} and \textbf{PDP (pure effects)}, producing three regions with strongly reduced disagreement. In \textbf{Figure \ref{fig:bike_a}}, partitioning by \emph{workingday} (and \emph{hour}) reveals distinct patterns for \emph{hour}: bike use peaks during rush hours on working days but in the afternoon on non-working days, explaining much of the disagreement. Remaining disagreement reflects a three-way interaction between \emph{hour}, \emph{temperature}, and \emph{workingday}, which reduces regionally to a continuous hour–temperature effect.
To avoid excessive partitioning, we next limit GRANITE to \textbf{minimizing effects beyond second order}. It splits on \emph{workingday} which isolates the hour–temperature interaction (\textbf{Figure \ref{fig:bike_b}}).
Temperature effects are lessened in the morning and night, amplified in the afternoon on non-working days, and amplified during commute times on working days.

\begin{figure*}[!t]
    \centering
    \includegraphics[width=\linewidth]{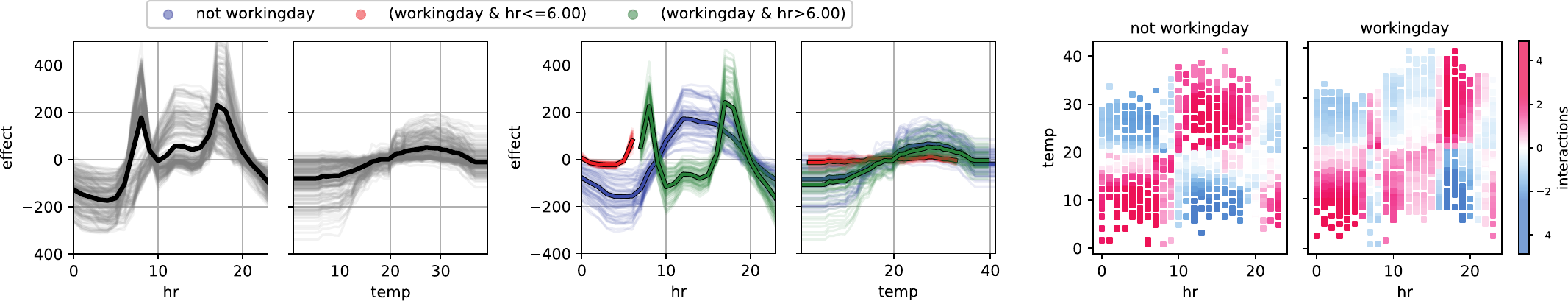}
    
    \begin{subfigure}[t]{0.65\linewidth}
        \centering
        \vspace{-1.25em}
        \caption{Global and regional individual feature effects}
        \label{fig:bike_a}
    \end{subfigure}
    \hfill
    \begin{subfigure}[t]{0.34\linewidth}
        \centering
        \vspace{-1.25em}
        \caption{Interaction effects}
        \label{fig:bike_b}
    \end{subfigure}
    \\[0.25em]
    \caption{\interaction{Minimizing interaction disagreement} with $\mathcal{B}_{loc,x_0}$ and $ \mathcal{M}_m$ for: \textbf{(a) Individual feature influence} with full (ICE curves as thin lines) versus pure (PDP as thick line) shown globally (left) and regionally (right) for \emph{hour} and \emph{temperature}. \textbf{(b) Interaction feature influence} of \emph{hour} and \emph{temperature} for regions depending on \emph{workday}.}
    \label{fig:bike}
\end{figure*}
\begin{figure*}[!t]
    \centering
    \includegraphics[width=.93\textwidth]{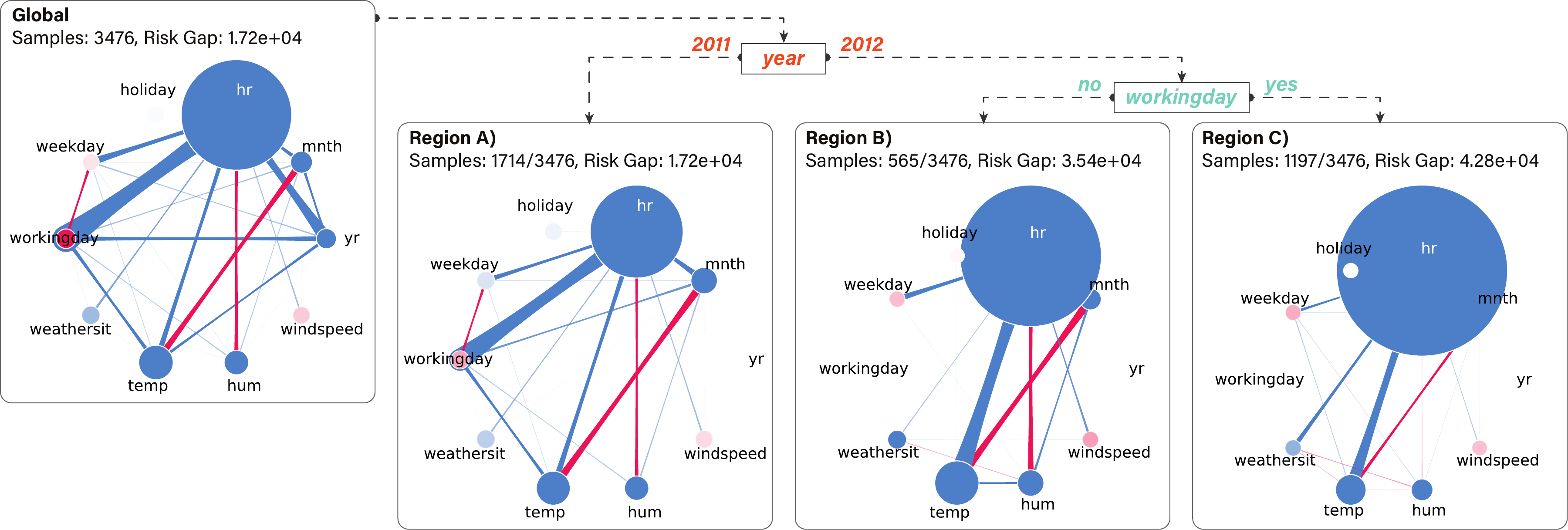}
    \vspace{0.5em}
    \caption{\interaction{Minimizing interaction disagreement} with $\mathcal{B}_{\text{risk}}$ and $\mathcal{M}_m$: GRANITE minimizes full ($\mathcal I_{\text{full}})$ versus pure ($\mathcal I_{\text{pure}}$) global risk, while the plots show first- and second-order partial risk (SAGE-like interactions) globally and regionally.}
    \label{fig:sage}
\end{figure*}

\interaction{Understanding the Role of Interactions in Global Feature Importance.} 
Global feature importance can be sensitivity-based, measuring explained variance, or performance-based, assessing loss when a feature is removed. For both, rankings depend on how interactions are handled. Full measures such as PFI include interactions, obscuring the contribution of main effects and how importance varies across subgroups.

In the Bikesharing data, \emph{workingday} contributes only via interactions, while \emph{hour}, \emph{temperature}, and \emph{year} differ between \textbf{full and pure importance}, producing inconsistent global rankings based on \textbf{Sobol variances (marginal sensitivity; Figure~\ref{fig:global_sens})} \citep{Sobol_2001}. GRANITE disentangles these effects by partitioning on \emph{workingday} and early vs. daytime hours, revealing \textit{which} features matter and \textit{where}: early morning shows no importance, \emph{temperature} dominates \emph{year} on non-working days after morning, and \emph{year} dominates on working days. 
Applied to \textbf{marginal full vs. pure risk (Figure \ref{fig:sage})}, GRANITE shows performance differences e.g., small risk gaps for $yr=2011$ vs. larger gaps for $yr=2012$ on working days. 
First- and second-order Shapley interactions \citep{bordt2022shapley} extend SAGE \citep{covert2020sage} and reveal that regions mitigate hour–workingday/year interactions, while hour–temperature synergy remains strong, requiring both features for accurate predictions.

\correlation{Explanations that Remain True to the Data.}
A key interpretability challenge is ensuring explanations remain faithful to the underlying data distribution. 
Conditional explanations account for feature dependencies but are often complex and computationally demanding, 
whereas marginal explanations are simpler but may misrepresent model behavior.

We illustrate this by comparing \textbf{pure local effects} using \textbf{MPlots} \citep{friedman_greedy_2001, apley_visualizing_2020} \textbf{(conditional)} and \textbf{PDPs (marginal)} for \emph{temperature} and \emph{humidity} (\textbf{Figure \ref{fig:mplot_pdp}}). Large discrepancies between marginal and conditional effects occur at low and high feature 
values, highlighting the impact of ignored dependencies in PDPs. GRANITE resolves these differences by partitioning the feature 
space based on \emph{hour} and \emph{temperature}, after which MPlot and PDP results align closely, producing regionally consistent marginal interpretations that remain faithful to both the model and the data distribution. 
A similar analyses is performed for CFI vs. PFI (full conditional versus full marginal risk) in Appendix~\ref{app:experiments}.

\textbf{Quantitative Analysis.}
Building on the qualitative insights from the visualizations, we now quantify disagreement reduction on the Bikesharing dataset across different models (GBT, MLP, RF) 
and increasing region complexity, controlled by the maximum tree depth and  setting $\alpha=0.01$.
We evaluate interaction disagreement using ICE vs.\ PDP (local, Figure~\ref{fig:bike_a}) and PFI vs.\ pure risk (global, Figure~\ref{fig:sage}), as well as 
distributional disagreement using CFI vs.\ PFI and MPlot vs.\ PDP (masking, Figure~\ref{fig:mplot_pdp}). Across all settings, GRANITE consistently reduces disagreement,
with results reported in Table~\ref{tab:regions_depth}. Disagreement decreases with depth and is most pronounced at Depth~3; for example, in the CFI vs.\ PFI setting,
features become nearly uncorrelated within the learned regions, causing PFI and CFI to coincide in both value and interpretation.

\begin{table*}[!t]
\centering
\caption{Disagreement reduction across depth levels on the Bikesharing dataset for interaction and distribution disagreements under different models and losses. Values report remaining disagreement (in \%) relative to Depth~0 (entire feature space $\mathcal{X}$) on the test set, with regions learned on the training set. Results are averaged over 5 random seeds ($\pm$ std).}
\label{tab:regions_depth}
\begin{tabular}{lllllcccc}
\toprule
\textbf{Target} & \textbf{Behavior} & \textbf{Model} & \textbf{Loss} & \textbf{Depth 0} & \textbf{Depth 1} & \textbf{Depth 2} & \textbf{Depth 3} \\
\midrule
\interaction{$\mathcal{I}_{\text{full}}$ vs $\mathcal{I}_{\text{pure}}$} & $\mathcal{B}_{loc}$ & GBT & ICE vs PDP & 100 & $30 \pm 1.64$ & $12 \pm 1.22$ & $4 \pm 0.71$ \\
\interaction{$\mathcal{I}_{\text{full}}$ vs $\mathcal{I}_{\text{pure}}$} & $\mathcal{B}_{loc}$ & MLP & ICE vs PDP & 100 & $26 \pm 2.88$ & $10 \pm 1.00$ & $3 \pm 0.55$ \\
\interaction{$\mathcal{I}_{\text{full}}$ vs $\mathcal{I}_{\text{pure}}$} & $\mathcal{B}_{loc}$ & RF  & ICE vs PDP & 100 & $30 \pm 1.67$ & $11 \pm 1.48$ & $4 \pm 0.71$ \\
\interaction{$\mathcal{I}_{\text{full}}$ vs $\mathcal{I}_{\text{pure}}$} & $\mathcal{B}_{risk}$ & GBT & PFI vs Pure Risk & 100 & $42 \pm 5.76$ & $16 \pm 2.68$ & $6 \pm 0.84$ \\
\interaction{$\mathcal{I}_{\text{full}}$ vs $\mathcal{I}_{\text{pure}}$} & $\mathcal{B}_{risk}$ & MLP & PFI vs Pure Risk & 100 & $36 \pm 1.82$ & $12 \pm 1.67$ & $3 \pm 0.71$ \\
\interaction{$\mathcal{I}_{\text{full}}$ vs $\mathcal{I}_{\text{pure}}$} & $\mathcal{B}_{risk}$ & RF  & PFI vs Pure Risk & 100 & $41 \pm 7.86$ & $15 \pm 1.14$ & $4 \pm 0.55$ \\
\correlation{$\mathcal{M}_{c}$ vs $\mathcal{M}_{m}$} & $\mathcal{B}_{loc}$ & GBT & MPlot vs PDP & 100 & $36 \pm 3.70$ & $16 \pm 4.72$ & $11 \pm 2.55$ \\
\correlation{$\mathcal{M}_{c}$ vs $\mathcal{M}_{m}$} & $\mathcal{B}_{loc}$ & MLP & MPlot vs PDP & 100 & $37 \pm 5.36$ & $15 \pm 3.08$ & $11 \pm 4.62$ \\
\correlation{$\mathcal{M}_{c}$ vs $\mathcal{M}_{m}$} & $\mathcal{B}_{loc}$ & RF  & MPlot vs PDP & 100 & $37 \pm 4.55$ & $18 \pm 3.70$ & $16 \pm 3.96$ \\
\correlation{$\mathcal{M}_{c}$ vs $\mathcal{M}_{m}$} & $\mathcal{B}_{risk}$ & GBT & CFI vs PFI & 100 & $26 \pm 7.53$ & $6 \pm 1.41$ & $3 \pm 1.34$ \\
\correlation{$\mathcal{M}_{c}$ vs $\mathcal{M}_{m}$} & $\mathcal{B}_{risk}$ & MLP & CFI vs PFI & 100 & $21 \pm 1.82$ & $9 \pm 2.41$ & $2 \pm 0.55$ \\
\correlation{$\mathcal{M}_{c}$ vs $\mathcal{M}_{m}$} & $\mathcal{B}_{risk}$ & RF  & CFI vs PFI & 100 & $22 \pm 1.64$ & $10 \pm 3.49$ & $5 \pm 3.78$ \\
\bottomrule
\end{tabular}
\end{table*}

\begin{figure}[tbh]
    \centering
    \vspace{-0.6em}
    \includegraphics[width=\linewidth]{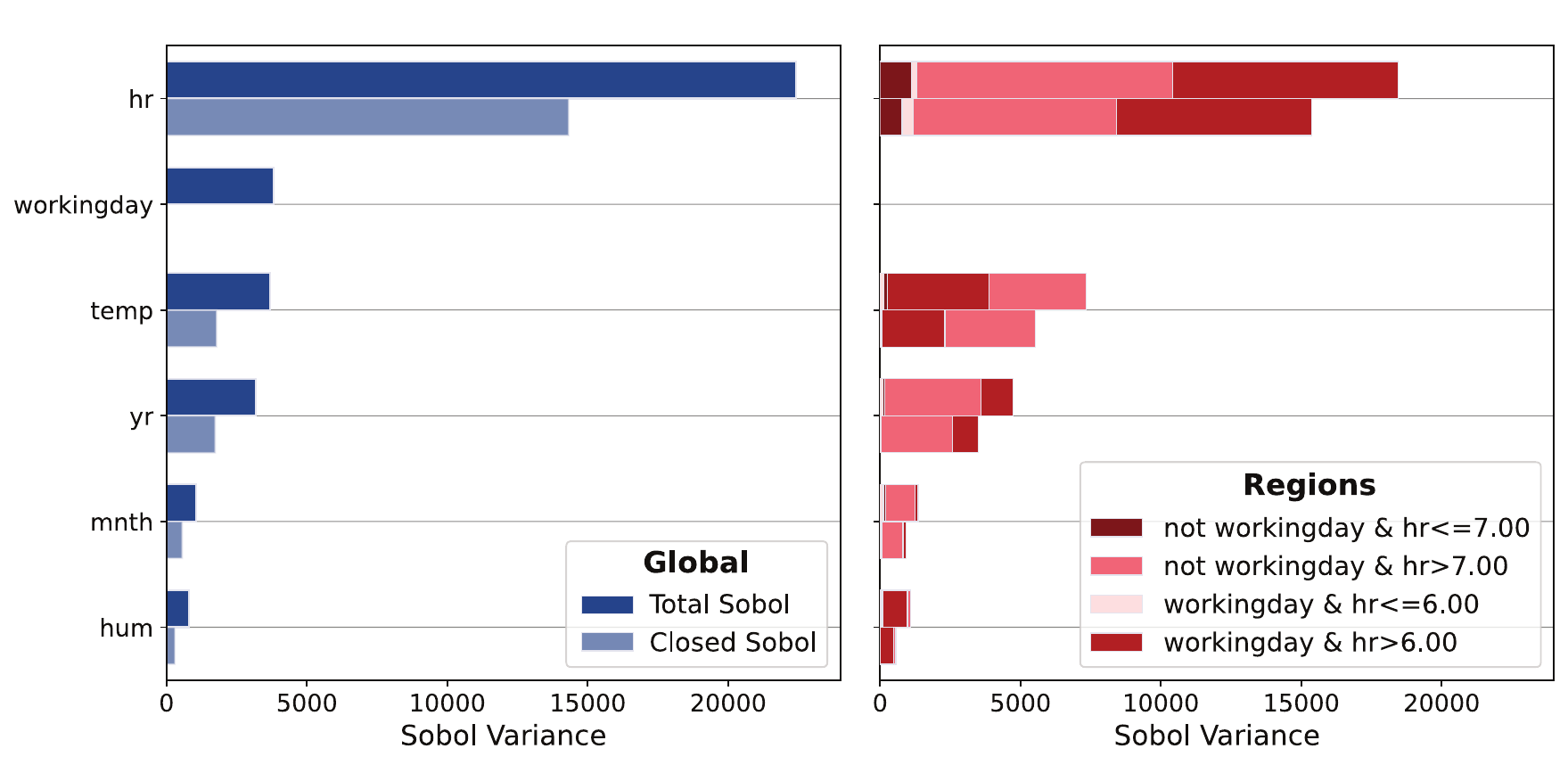}
    \\[0.4em]
    \caption{\interaction{Minimizing interaction disagreement} with $\mathcal{B}_{sens}$ and $\mathcal{M}_m$: Total Sobol' ($\mathcal I_{\text{full}}$, upper bar) and closed Sobol' ($\mathcal I_{\text{pure}}$, lower bar) shown globally (left) and regionally (right) for top-$6$ features.
    Stacked bars sorted by size for the regions create a layered view of regional rankings.}
    \label{fig:global_sens}
\end{figure}
\begin{figure}[h]
    \centering
    \includegraphics[width=\linewidth]{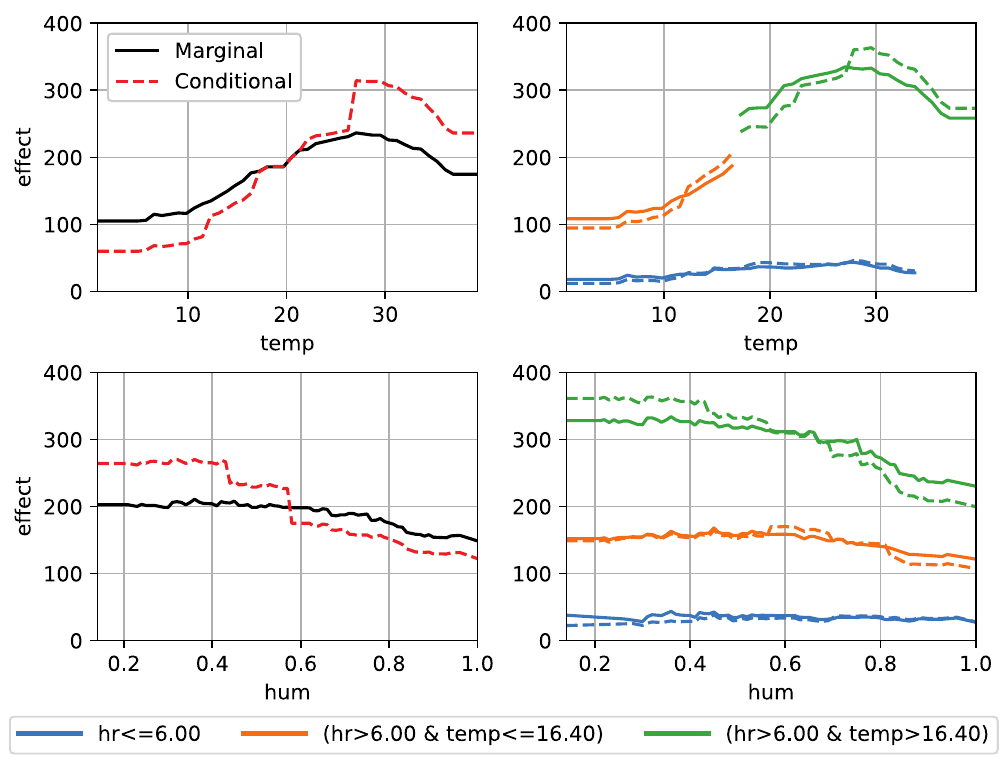}
    \\[0.4em]
    \caption{\correlation{Minimizing distribution disagreement} with $\mathcal{B}_{loc,x_0}$ and $\mathcal{I}_{pure}$ between MPlot ($\mathcal M_c$) and PDP ($\mathcal M_m$). 
    Global plots (left) disagree, while regional plots (right) agree for \emph{temperature} (top) and \emph{humidity} (bottom).}
    \label{fig:mplot_pdp}
\end{figure}

\section{DISCUSSION AND LIMITATIONS}
We presented GRANITE, a principled framework to align feature-based explanations by minimizing disagreement of different choices of masking ($\mathcal M$) and interaction ($\mathcal I$) operators. Our theoretical results demonstrated that this disagreement is caused by interactions for $\mathcal I$ and feature dependencies for $\mathcal M$.
GRANITE aligns explanations by efficiently partitioning the feature space into regions, and applies to local and global explanations ($\mathcal B$), as well as feature groups.

\paragraph{Limitations and Future Work.}
First, GRANITE works best when effects change abruptly, yielding a small number of clearly interpretable regions. 
When effects vary smoothly, many regions may be required, reducing interpretability.
GRANITE partially mitigates this through explicit control of region complexity via the partitioning depth and by increasing expressiveness through feature groups (e.g., including pairwise interactions), which can reduce the number of required regions.
Selecting such feature groups in a principled manner remains an important direction for future work.
Second, we minimized interaction or distributional influences separately, but both can be addressed simultaneously, enabling low-order additive decompositions under independence.
However, achieving full agreement may require many splits, highlighting a trade-off between interpretability and alignment.
Third, memory usage grows quadratically with the number of data points used to estimate explanations. While sub-sampling $1000$ points produced reasonable regions,
future work could
choose subsample sizes based on uncertainty in the regional explanations \citep{molnar2023relating}.
Finally, while applied on tabular data, GRANITE can be extended to 
concepts for vision models or time-dependent prediction problems.

\clearpage
\section*{ACKNOWLEDGEMENTS}
We gratefully thank the anonymous reviewers for their valuable feedback for improving this work.
Julia Herbinger and Marvin N. Wright gratefully acknowledge funding by the German Research Foundation (DFG), Emmy Noether Grant 437611051.
Maximilian Muschalik and Fabian Fumagalli gratefully acknowledge funding by the Deutsche Forschungsgemeinschaft (DFG, German Research Foundation): TRR 318/1 2021 – 438445824. Yann Pequignot expresses appreciation for the support from the DEEL Project CRDPJ 537462-18 funded by the Natural Sciences and Engineering Research Council of Canada (NSERC) and the Consortium for Research and Innovation in Aerospace in Québec (CRIAQ), together with its industrial partners Thales Canada inc, Bell Textron Canada Limited, CAE inc and Bombardier inc.\footnote{\url{https://deel.quebec}}
The authors of this work take full responsibility for its content.

\renewcommand{\refname}{REFERENCES}
\bibliography{bib}

\newpage
\appendix
\onecolumn 

\startcontents[sections]
\printcontents[sections]{l}{1}{\setcounter{tocdepth}{3}}

\clearpage

\section{PROOFS}\label{app:proofs}

In this section, we provide detailed proofs of the results stated in the main paper.

\subsection{Proof of Theorem 1}\label{app:proof_thm1}

\begin{proof}
The value function of both regional explanations are defined by $\nu \equiv \mathcal B^\Omega(\mathcal M^\Omega F)$ with $ \mathcal M^\Omega = \mathcal M_1^\Omega = \mathcal M_2^\Omega$.
The pure interactions of the value function are given by
\begin{align*}
\Delta(S) := \sum_{L \subseteq S} (-1)^{\vert S \vert - \vert L \vert} \nu(L),
\end{align*}
and for each interaction operator we obtain for $\phi_{\mid \Omega} \equiv \mathcal I \nu$ by \citet{fumagalli2025unifying}[Table 2]
\begin{align*}
    \phi_{\mid \Omega}(i) = \Delta(i) + \sum_{S\subseteq D, i \in S, \vert S \vert \geq 2} w^S \Delta(S)
\end{align*}
with 
\begin{align*}
    w^S := \begin{cases}
        0, &\text{for } \mathcal I = \mathcal I_{\text{pure}},
        \\
        \frac{1}{\vert S \vert}, &\text{for } \mathcal I = \mathcal I_{\text{partial}},
        \\
        1 &\text{for } \mathcal I = \mathcal I_{\text{full}}.
    \end{cases}
\end{align*}

Therefore, we obtain for two regional explanations
\begin{align*}
    \phi_{1\mid\Omega}(i) - \phi_{2\mid\Omega}(i) = \sum_{S\subseteq D, i \in S, \vert S \vert \geq 2} \tilde w^S \Delta(S),
\end{align*}
where $\tilde w^S := w_1^S - w_2^S$, where $w_1^S,w_2^S$ are the weights associated with $\phi_{1\mid\Omega},\phi_{2\mid\Omega}$, respectively.
\end{proof}

\subsection{Proof of Theorem 2}

\begin{proof}
    We let $\ell(x) := x^2$, and compute regional explanations with respect to the local value function $\nu_x \equiv \mathcal B_{x,\text{loc}}(\mathcal M F)$ with $\mathcal M = \mathcal M_m = \mathcal M_c$.
    The difference between the two local regional explanations is given by the previous results in the proof of Theorem~\ref{theorem:min_interactions} as
    \begin{align*}
        \phi^{\text{loc}}_{\text{full}\mid\Omega}(i) - \phi^{\text{loc}}_{\text{pure}\mid\Omega}(i) = \sum_{S \subseteq D, i \in S,\vert S \vert \geq 2} \Delta_x(S),
    \end{align*}
    since $\tilde w^S \equiv 1$ in this case, where we have added the dependency on $x$ in $\Delta$ as a subscript.
    Due to feature independence, the pure interaction effects $\Delta_x(S)$ directly correspond to the fANOVA effects $f_S(x)$ in this case, cf \citep{fumagalli2025unifying}[Corollary 1], which are are orthogonal and mean-centered yielding the variance decomposition property \citep{Owen_2013}.
    We can thus compute the regional disagreement as
    \begin{align*}
        \mathcal R_{\ell \mid \Omega}(\phi^{\text{loc}}_{\text{full}\mid\Omega},\phi^{\text{loc}}_{\text{pure}\mid\Omega}) &= \mathbb{E}_{P_X}\bigg[\sum_{i \in D} \ell(\phi^{\text{loc}}_{\text{full}\mid\Omega}(i) - \phi^{\text{loc}}_{\text{pure}\mid\Omega}(i))\bigg] 
        \\
        &= \sum_{i \in D} \mathbb{E}_{P_X}\bigg[\ell(\sum_{S \subseteq D, i \in S,\vert S \vert \geq 2} \Delta(S))\bigg]
        \\
        &= \sum_{i \in D} \mathbb{V}_{P_X}\bigg[\sum_{S \subseteq D, i \in S,\vert S \vert \geq 2} \Delta(S)\bigg] 
        \\
        &= \sum_{i \in D} \sum_{S \subseteq D, i \in S,\vert S \vert \geq 2} \mathbb{V}_{P_X}[\Delta(S)].
    \end{align*}
    On the other hand, we obtain for the sensitivity behavior $\mathcal B_{\text{sens}}$ with value function $\nu \equiv \mathbb{V}_{P_X}[\nu_X(\cdot)]$
    \begin{align*}
        \phi^{\text{sens}}_{\text{full}\mid\Omega}(i) - \phi^{\text{sens}}_{\text{pure}\mid\Omega}(i) = \mathbb{V}_{P_X}\bigg[\sum_{S\subseteq D, i \in S} \Delta_X(S)\bigg] - \mathbb{V}_{P_X}[\Delta_X(i)] 
        = \sum_{S\subseteq D, i \in S, \vert S \vert \geq 2}\mathbb{V}_{P_X}[\Delta_X(S)],
    \end{align*}
    where we have used $w^S \equiv 1$ for full effects, and orthogonality with mean-centeredness to apply the variance to the sum.
    We thus obtain
    \begin{align*}
        \mathcal R_{\ell \mid \Omega}(\phi^{\text{loc}}_{\text{full}\mid\Omega},\phi^{\text{loc}}_{\text{pure}\mid\Omega}) = \sum_{i \in D} \sum_{S \subseteq D, i \in S,\vert S \vert \geq 2} \mathbb{V}_{P_X}[\Delta(S)] = \sum_{i \in D}  \left(\phi^{\text{sens}}_{\text{full}\mid\Omega}(i) - \phi^{\text{sens}}_{\text{pure}\mid\Omega}(i)\right),
    \end{align*}
    which concludes the proof.
\end{proof}

\subsection{Proof of Theorem 3}\label{appx_proof_distributions}

\begin{proof}
Let $w^\Omega_{x_S} := \frac{dP^\Omega_{X_{-S}|X_S=x_S}}{d P^\Omega_{X_{-S}}}$ be well-defined with density $dP$, and $\mathcal M^\Omega_1=\mathcal M^\Omega_c$ and $\mathcal M^\Omega_2 =\mathcal M^\Omega_m$. 
Moreover, consider the local behavior operator, i.e., $\mathcal B^\Omega_1 = \mathcal B^\Omega_2 = \mathcal B^\Omega_{x,\text{loc}}$.
Due to additivity of $\mathcal I$ and $\mathcal B^{\Omega}_{x,\text{loc}}$, we directly obtain
\begin{align*}
   \phi_{1\mid\Omega}(i)-\phi_{2\mid\Omega}(i) = (\mathcal I (\mathcal B^\Omega_{loc,x_0}(\mathcal M_c^\Omega F - \mathcal M_m^\Omega F)))(i) \text{ for } i \in D.
\end{align*}
Using $w^\Omega_{x_S}$, we can express the conditional masking operator in terms of the marginal masking operator as
\begin{align*}
    (\mathcal M_c F)(x,S) &= \mathbb{E}_{P^\Omega_{X_{-S}\mid X_S = x_S}}[F(x_S,X_{-S})] 
    = \int_\Omega F(x_S,z) dP^\Omega_{X_{-S}\mid X_S = x_S}(z) 
    \\
    &= \int_\Omega F(x_S,z) w^\Omega_{x_S}(z) dP^\Omega_{X_{-S}}(z)
    = \mathbb{E}_{P^\Omega_{X_{-S}}}[w^\Omega_{x_S}(X_{-S})F(x_S,X_{-S})].
\end{align*}
Hence,
\begin{align*}
    (\mathcal M_c F-\mathcal M_m F)(x,S)= \mathbb{E}_{P^\Omega_{X_{-S}}}[(w^\Omega_{x_S}(X_{-S})-1) F(x_S,X_{-S})] = \delta^\Omega(x,S).
\end{align*}
For local behavior $\mathcal B_{x,\text{loc}}$, we thus obtain
\begin{align*}
   \phi_{1\mid\Omega}(i)-\phi_{2\mid\Omega}(i) = \mathcal I (\mathcal B^\Omega_{loc,x_0}\delta^\Omega)(i) \text{ for } i \in D,
\end{align*}
which concludes the first part of the proof.

\paragraph{Global sensitivity behavior.}
We now consider $\mathcal B^\Omega_1 = \mathcal B^\Omega_2 = \mathcal B^\Omega_{\text{sens}}$ and $\delta^\Omega_\text{loc} := \delta^\Omega$.
We cannot directly obtain the desired result, since, in contrast to $\mathcal B^\Omega_{\text{loc}}$, $\mathcal B^\Omega_{\text{sens}}$ is not additive.
We thus now express the variance of the conditionally masked model through the variance of the marginally masked model:
\begin{align*}
    (\mathcal B^\Omega_{\text{sens}}(\mathcal M^\Omega_c F))(S) &= \mathbb{V}_{P^\Omega_{X_S}}[\mathbb{E}_{P^\Omega_{X_{-S}\mid X_S = X_S}}[F(X_S,X_{-S})]] = \mathbb{V}_{P_{X_S}}[\mathbb{E}_{P^\Omega_{X_{-S}}}[w^\Omega_{X_S}(X_{-S})F(X_S,X_{-S})]]
    \\
    &= \mathbb{V}_{P^\Omega_{X_S}}[\mathbb{E}_{P^\Omega_{X_{-S}}}[F(X_S,X_{-S})] + \delta_{\text{loc}}^\Omega(X,S)]
    \\
    &= \mathbb{V}_{P^\Omega_{X_S}}[\mathbb{E}_{P^\Omega_{X_{-S}}}[F(X_S,X_{-S})]] + \mathbb{V}_{P^\Omega_{X_S}}[\delta_{\text{loc}}^\Omega(X,S)] + 2\mathrm{Cov}(\mathbb{E}_{P^\Omega_{X_{-S}}}[F(X_S,X_{-S})],\delta_{X,\text{loc}}^\Omega(X,S))
    \\
    &= (\mathcal B^\Omega_{\text{sens}}(\mathcal M^\Omega_m F))(S) + \mathbb{V}_{P^\Omega_{X_S}}[\delta_{\text{loc}}^\Omega(X,S)] + 2\mathrm{Cov}(\mathbb{E}_{P^\Omega_{X_{-S}}}[F(X_S,X_{-S})],\delta_{X,\text{loc}}^\Omega(X,S)).
\end{align*}
We then compute
\begin{align*}
    (\mathcal B^\Omega_{\text{sens}}(\mathcal M^\Omega_c F))(S)-(\mathcal B^\Omega_{\text{sens}}(\mathcal M^\Omega_m F))(S)
    &= \mathbb{V}_{P^\Omega_{X_S}}[\delta_{\text{loc}}^\Omega(X,S)] + 2\mathrm{Cov}(\mathbb{E}_{P^\Omega_{X_{-S}}}[F(X_S,X_{-S})],\delta_{X,\text{loc}}^\Omega(X,S))
    \\
    &=: \delta_{\text{sens}}^\Omega(S).
\end{align*}
Hence, the regional disagreement is characterized by the regional explanation of $\delta_{\text{sens}}^\Omega$, which is influenced by the difference between marginal and conditional distribution.

\paragraph{Global risk behavior.}
We now consider $\mathcal B^\Omega_1 = \mathcal B^\Omega_2 = \mathcal B^\Omega_{\text{risk}}$ and the squared loss $\ell(x) := x^2$.
We again express the risk behavior of the conditionally masked model through the risk behavior of the marginally masked model:
\begin{align*}
    (\mathcal B^\Omega_{\text{risk}}(\mathcal M^\Omega_c F))(S) &= -\mathbb{E}_{P^\Omega_{X,Y}}[\ell(\mathbb{E}_{P^\Omega_{X_{-S}\mid X_S = X_S}}[F(X_S,X_{-S})],Y)] 
    \\
    &=  -\mathbb{E}_{P^\Omega_{X,Y}}[\ell(\mathbb{E}_{P^\Omega_{X_{-S}}}[w^\Omega_{X_S}(X_{-S})F(X_S,X_{-S})],Y)]  
    \\
    &=  -\mathbb{E}_{P^\Omega_{X,Y}}[\ell(\mathbb{E}_{P^\Omega_{X_{-S}}}[F(X_S,X_{-S})] + \delta_{\text{loc}}^\Omega(X,S),Y)]  
    \\
    &=  -\mathbb{E}_{P^\Omega_{X,Y}}[(\mathbb{E}_{P^\Omega_{X_{-S}}}[F(X_S,X_{-S})] + \delta_{\text{loc}}^\Omega(X,S)-Y)^2]  
    \\
    &=  -\mathbb{E}_{P^\Omega_{X,Y}}[(\mathbb{E}_{P^\Omega_{X_{-S}}}[F(X_S,X_{-S})]-Y)^2 + \delta_{\text{loc}}^\Omega(X,S)^2 
    \\
    &+ 2 (\mathbb{E}_{P^\Omega_{X_{-S}}}[F(X_S,X_{-S})]-Y) \delta_{\text{loc}}^\Omega(X,S)]  
    \\
    &= (\mathcal B^\Omega_{\text{risk}}(\mathcal M^\Omega_m F))(S) -\mathbb{E}_{P^\Omega_{X,Y}} [\delta_{\text{loc}}^\Omega(X,S)^2 
    + 2 (\mathbb{E}_{P^\Omega_{X_{-S}}}[F(X_S,X_{-S})]-Y) \delta_{\text{loc}}^\Omega(X,S)].
\end{align*}

Hence, we can compute the difference as
\begin{align*}
 (\mathcal B^\Omega_{\text{risk}}(\mathcal M^\Omega_c F))(S) -  (\mathcal B^\Omega_{\text{risk}}(\mathcal M^\Omega_m F))(S) &= -\mathbb{E}_{P^\Omega_{X,Y}} [\delta_{\text{loc}}^\Omega(X,S)^2 + 2 (\mathbb{E}_{P^\Omega_{X_{-S}}}[F(X_S,X_{-S})]-Y) \delta_{\text{loc}}^\Omega(X,S)] 
 \\
 &=: \delta^\Omega_{\text{risk}}(S).
\end{align*}
In conclusion, the regional disagreement is characterized by the regional explanation of the value function $\delta^\Omega_{\text{risk}}$, which is influenced by the difference between marginal and conditional distribution.
\end{proof}

\subsection{Proof of Corollary 1}
\begin{proof}
Theorem~\ref{theorem:min_interactions} and Theorem~\ref{thm:regional-distributional-differences} assumed a feature influence measure for individual features, i.e., $\phi: D \to \mathbb{R}$.
We now extend these results to general influence measures defined for $\phi: \mathcal S \to \mathbb{R}$ with $\mathcal S \subseteq 2^D$.

\subsubsection{Extension of Theorem 1}
We distinguish between interaction effects and joint effects.
\paragraph{Extension to interactions.}
For \emph{interaction effects}, the representation of $\phi_{\mid\Omega}$ for $S \subseteq D$ is directly extended according to \citet{fumagalli2025unifying}[Table 2] as
\begin{align*}
    \phi_{\mid\Omega}(S) = \Delta(S) + \sum_{T\subseteq D, S \subset T} w^T \Delta(T)
\end{align*}
with 
\begin{align*}
    w^S := \begin{cases}
        0, &\text{for } \mathcal I = \mathcal I^{\text{interaction}}_{\text{pure}},
        \\
        \tau^{\vert S\vert}_{\vert T \setminus S \vert}, &\text{for } \mathcal I = \mathcal I^{\text{interaction}}_{\text{partial}},
        \\
        1 &\text{for } \mathcal I = \mathcal I^{\text{interaction}}_{\text{full}},
    \end{cases}
\end{align*}
where $\tau$ is a index-specific weight.
Consequently, the difference between two regional explanations is again characterized by
\begin{align*}
     \phi_{1\mid\Omega}(S) - \phi_{2\mid\Omega}(S) = \sum_{T \subseteq D, S \subset T} \tilde w^T \Delta(T)
\end{align*}
where $\tilde w^T := w_1^T - w_2^T$, where $w_1^T,w_2^T$ are the weights associated with $\phi_{1\mid\Omega},\phi_{2\mid\Omega}$, respectively.

\paragraph{Extension to joint effects.}
Lastly, for \emph{joint effects}, we obtain the representation according to \citep{fumagalli2025unifying}[Table 2] as
\begin{align*}
    \phi_{\mid\Omega}(S) = \sum_{L \subseteq S, L \neq \emptyset}\Delta(L) + \sum_{T\subseteq D, S \cap T \notin \{\emptyset,S,T\}} w^T \Delta(T)
\end{align*}
with
\begin{align*}
    w^S := \begin{cases}
        0, &\text{for } \mathcal I = \mathcal I^{\text{joint}}_{\text{pure}},
        \\
        \tau^{\vert S\vert}_{\vert T \setminus S \vert}, &\text{for } \mathcal I = \mathcal I^{\text{joint}}_{\text{partial}},
        \\
        1 &\text{for } \mathcal I = \mathcal I^{\text{joint}}_{\text{full}},
    \end{cases}
\end{align*}
Therefore, for two regional explanations with joint effects 
\begin{align*}
     \phi_{1\mid\Omega}(S) - \phi_{2\mid\Omega}(S) = \sum_{T \subseteq D, T \cap S \notin \{\emptyset,S,T\}} \tilde w^T \Delta(T)
\end{align*}
where $\tilde w^T := w_1^T - w_2^T$, where $w_1^T,w_2^T$ are the weights associated with $\phi_{1\mid\Omega},\phi_{2\mid\Omega}$, respectively.
Note that in case of joint effects, the difference of regional explanations is also influenced by lower-order interactions  $(\vert T \vert < \vert S \vert)$ that are not fully contained in $S$, i.e., lower-order interactions of features between groups.

\subsubsection{Extension of Theorem 3}
In the proof of Theorem~\ref{thm:regional-distributional-differences}, we have used the additivity of the interaction operator $\mathcal I$.
For joint effects and interaction effects, corresponding to generalized values \citep{DBLP:journals/dam/MarichalKF07} and cardinal-probabilistic interactions \citep{Fujimoto.2006}, respectively, the additivity axiom is satisfied similar to the Shapley value.
Therefore, all previous arguments apply in this case, and the regional disagreement is characterized by the regional explanation of the corresponding $\delta^\Omega$.

\end{proof}

\newpage
\section{DEFINITIONS FOR FEATURE GROUP EXPLANATIONS}\label{app:group_extension}

In this section, we define the interaction operators for feature group explanations, encompassing both \emph{joint} and \emph{interaction} influence measures.

We begin by extending the notion of individual marginal influence for a feature $i \in D$ to the \emph{joint feature influence} of a feature set $S \subseteq D$, defined as
\[
\nu(T \cup S) - \nu(T),
\]
which represents the marginal contribution of the feature group $S$ given the presence of a feature set $T \subseteq -S$.
Building on this, we introduce the \emph{discrete derivative}, which generalizes the notion of pure (interaction) influence to any coalition $T$.
This extends the definition from Theorem~\ref{theorem:min_interactions}, where the pure effect was measured relative to the empty set (via the Möbius transform), to the case of an arbitrary coalition $T$.

Formally, the discrete derivative is defined as
\begin{align*}
    \Delta(S, T) := \sum_{L\subseteq S} (-1)^{|S|-|L|} \nu (T \cup L),
\end{align*}
where $T \subseteq -S$, $\nu(S) := (\mathcal{B}(\mathcal{M} F))(S)$ and $S \subseteq D$.
The discrete derivative $\Delta(S, T)$ quantifies the \emph{pure influence} of a feature group $S$ in the presence of a feature set $T$.
For $T = \emptyset$, it reduces to the Möbius transform \citep{grabisch2000equivalent}, which captures the pure interaction influence and is used in Theorem~\ref{theorem:min_interactions}, where $\Delta(S) = \Delta(S,\emptyset)$.

We now define the interaction operators based on the joint feature influence and the discrete derivative. Definitions based on the Möbius representation are provided in \cite{fumagalli2025unifying}.

\begin{table}[htb]
\centering
\caption{Definitions of joint and interaction influence operators $\mathcal{I}$ for feature groups, expressed in terms of the joint feature influence and the discrete derivative. The weights $w_{S,T}$ denote index-specific coefficients used to aggregate partial operators across coalitions $T \subseteq  -S$.}
\begin{tabular}{l|l|l}
\textbf{Type} &\textbf{Operator} & \textbf{Definition}\\\toprule
joint & $\mathcal I_{\text{pure}}^{\text{joint}}$ & $\nu(S) - \nu(\emptyset)$  \\
joint & $\mathcal I_{\text{partial}}^{\text{joint}}$ & $\sum_{T \subseteq -S} w_{S,T}(\nu (T \cup S) - \nu(T))$  \\
joint & $\mathcal I_{\text{full}}^{\text{joint}}$ & $\nu(D) - \nu(-S)$\\\midrule
interaction & $\mathcal I_{\text{pure}}^{\text{interaction}}$ & $\Delta(S, \emptyset)$  \\
interaction & $\mathcal I_{\text{partial}}^{\text{interaction}}$ &  $\sum_{T \subseteq D \subseteq -S} w_{S,T} \Delta(S,T)$ \\
interaction & $\mathcal I_{\text{full}}^{\text{interaction}}$ & $\Delta(S, -S)$
\end{tabular}
\label{tab:interaction_operators}
\end{table}

\newpage
\section{ALGORITHM DETAILS}\label{app:algorithm}

The \textsc{Granite} algorithm requires two empirical estimates: (1) an estimate of the explanations and (2) an estimate of the optimal partition of regions.
Both estimates are discussed separately, followed by a scalability experiment.

\subsection{Estimating Explanations}\label{app:algorithm:explain}
To compute $\phi_{\mid \Omega}$, we estimate the theoretical distribution $P$ with the empirical distribution $\widehat{P}$ over $N$ data 
points $\{x^{(n)}\}_{n=1}^N$. For $S\subseteq D$, we compute all possible $N\times N$ combinations
\begin{equation}
    R^S_{n\tilde n} = F(x^{(n)}_S, x^{(\tilde n)}_{-S}), \quad n,\tilde n=1,\dots,N,
    \label{eq:R_matrix_app}
\end{equation}
which can be done in a model-agnostic fashion, although more efficient implementations are available for decision trees and Explainable Boosting Machines 
\citep[Chapter 5.4]{laberge2024trustworthy}. Marginal masking using $\widehat{P}^\Omega$ is then computed as
\begin{equation}
    (\mathcal{M}_m^\Omega F)(x^{(n)}, S) = \frac{1}{\vert \{\tilde n: x^{(\tilde n)} \in \Omega\} \vert}\sum_{x^{(\tilde n)} \in \Omega} R^S_{n \tilde n}.
    \label{eq:marginal_approx_app}
\end{equation}

For conditional masking, we assume that bins 
$b_1, b_2, \ldots, b_B$ have been pre-computed for $x_S$. If $S=\{i\}$, these bins may correspond to univariate quantiles of $x_i$.
For $S=D\setminus \{i\}$, bins may instead correspond to the leaves of a Decision Tree trained to predict $x_i$ from $x_{-i}$ \citep{strobl2008conditional}.
Let $t^{(i)}$ be the index of the bin that captures $x^{(i)}$, then the conditional masking is computed as
\begin{equation}
   (\mathcal M_c^\Omega F)(x^{(n)}, S) =\frac{1}{C_i} \sum_{
           \substack{\tilde n : x^{(\tilde n)} \in \Omega\,\wedge\, x^{(\tilde n)}_S \in b_{t^{(i)}}}} R^S_{n \tilde n},
   \label{eq:conditional_approx}
\end{equation}
where $C_i = |\{\tilde n : x^{(\tilde n)} \in \Omega \,\wedge \,x^{(\tilde n)}_S \in b_t^{(i)}\}|$. 

Behavior operators $\mathcal{B}^\Omega$ aggregate these masked evaluations for instances in $\Omega$ via $\nu^\Omega \equiv \mathcal 
B^\Omega( \mathcal M^\Omega_m F)$. By focusing on disagreement between \textbf{full} ($\mathcal I_{\text{full}})$ and \textbf{pure} 
($\mathcal I_{\text{pure}}$) interaction operators, it is only necessary to evaluate $\nu^\Omega(D), \nu^\Omega(-i), \nu^\Omega(i), \nu^\Omega(\emptyset)$ 
for each feature $i=1,2,\ldots, d$. Since $\nu^\Omega(-i)$ can be derived from the transposed matrix $(\bm{R}^{i})^T$, GRANITE requires only the matrices 
$\{\bm{R}^{i}\}_{i=1}^d$ to compute explanations.

The space complexity of explanation computations varies depending on the stage of the algorithm. First, fitting GRANITE (compute explanations to find the regions) 
requires $\mathcal{O}(d N^2)$ space complexity, which motivates subsampling (1000-2000 samples in our experiments). Second, applying a fitted GRANITE on new data 
is cheaper: If $M$ regions share data evenly, explanation computation requires only $\mathcal{O}(N^2d/M)$ space complexity. This allows us to use the full 
test sets when reporting the final regional explanations.

\subsection{Estimating Partitions}\label{app:algorithm:partition}

GRANITE aims to identify a partition $\{\Omega_1, \ldots , \Omega_K\}$ of disjoint regions 
$\Omega_1,\dots,\Omega_K \subseteq \mathcal X$ such that two explanations exhibit increased agreement when restricted to 
these regions. Simplifying the regional disagreement risk to
$\mathcal{R}(\Omega)\equiv \mathcal{R}_{\ell\mid\Omega_k} (\phi_{1\mid\Omega_k}, \phi_{2\mid\Omega_k})$, we must solve
\begin{equation}
   \min_{\{\Omega_k\}_{k=1}^K} \sum_{k=1}^K \mathcal{R}(\Omega_k).
   \label{eq:objective_app}
\end{equation}

Since the set of all possible partitions of $\mathcal{X}$ is infinite, it is necessary to simplify the problem. 
We do so by constraining the regions to be the leaves of a binary decision tree with bounded depth. 
Moreover, the leaves are constrained to contain at 
least $N_{\text{min}}$ samples.
\begin{equation}
    \min_{
        \{\Omega^{\text{tree}}_k\}_{ k=1}^K:
        K \leq 2^{\text{max-depth}},
        \widehat{P}(\Omega^{\text{tree}}_k) \geq N_{\text{min}}/N
    } \quad
    \sum_{k=1}^K \mathcal{R}(\Omega^{\text{tree}}_k) + \alpha K
   \label{eq:objective_approx_app}
\end{equation}
This objective employs a hyperparameter $\alpha \geq 0$ to regularize the decision tree w.r.t. its number of leaves $K$.
Taking inspiration from existing greedy strategies to learn decision trees (e.g. CART), we start from a single region $\Omega=\mathcal X$ and split 
it into two sub-regions $\Omega_-$ and $\Omega_+$ based on a univariate split
$\Omega^{j, \gamma}_- = \Omega \cap \{x\in \mathcal X : x_j\leq \gamma\}$, $\Omega^{j, \gamma}_+ = \Omega \cap \{x\in \mathcal X : x_j> \gamma\}$.
Ideally, the univariate split would minimize disagreements
\begin{equation}
   j^\star, \gamma^\star = 
        \text{argmin}_{j\in D: \gamma\in \mathbb{R}} \mathcal{R}(\Omega^{j, \gamma}_-)
        + \mathcal{R}(\Omega^{j, \gamma}_+),
   \label{eq:objective_approx_local_app}
\end{equation}
(ignoring regularization since any split introduces the same number of leaves). However, we cannot span across
all possible values of $\gamma\in \mathbb{R}$, so we instead discretize the set $\{x^{(n)}_j: x^{(n)} \in \Omega\}$
into $B$ bins, leading to a set $S_{j,B,\Omega}$ of $B-1$ bin edges, and solve
\begin{equation}
   j^\star, \gamma^\star =
        \text{argmin}_{j\in D: \gamma\in S_{j,B, \Omega}} \mathcal{R}(\Omega^{j, \gamma}_-)
        + \mathcal{R}(\Omega^{j, \gamma}_+).
   \label{eq:objective_approx_local_app_bin}
\end{equation}
Once the optimal split according to Equation \ref{eq:objective_approx_local_app_bin} is performed, the procedure
is repeated recursively on the two sub-regions $\Omega^{j^\star,\gamma^\star}_-$ and $\Omega^{j^\star,\gamma^\star}_+$.
The recursion terminates when any of the following conditions is met: the number of data points in a region falls 
below $2 N_{\text{min}}$ (as no further split could satisfy the constraints), the maximum tree depth is reached,
or the risk falls below $\alpha$ (indicating that no further reduction in risk can compensate for the
regularization). Once the recursion stops, the algorithm backtracks through each parent node to assess
whether its split sufficiently decreased the objective relative to the number of leaves introduced.
If not splitting the internal node results in a better objective, the internal node is converted into a
leaf. This backtracking step is analogous to the pruning stage performed after greedy tree growth in CART.
The full procedure is presented in Algorithm \ref{alg:GRANITE}.


The method splits at most $\mathcal{O}(2^{\text{max-depth}})$ nodes, each requiring the evaluation
of the risk over $\mathcal{O}(dB)$ split candidates. Since evaluating the disagreement risk (based on the 
$\{\bm{R}^i\}_{i=1}^d$ tensors) has a complexity of $\mathcal{O}(d N^2)$, the overall complexity of the 
partitioning algorithm is $\mathcal{O}(2^{\text{max-depth}} d^2 N^2 B)$.

\begin{algorithm}
    \caption{GRANITE}
    \begin{algorithmic}[1]
    \Require Pre-computing the empirical distribution $\widehat{P}$ over a finite dataset $\{x^{(n)}\}_{n=1}^N$.
    \Require Pre-computing the model evaluation matrices $\{\bm{R}^i\}_{i=1}^d$.
    \Require Being able to compute the risk $\mathcal{R}(\Omega) \equiv \mathcal{R}_{\ell\mid\Omega} (\phi_{1\mid\Omega}, \phi_{2\mid\Omega})$ on any region.
    \Require Regularization hyperparameter $\alpha \geq 0$ (default is $\alpha=0.05$).
    \Require Regularization hyperparameter $N_{\text{min}}$ (default is $N_{\text{min}}=20$).
    \Require Number of bins $B$ (default is $B=40$).
    \State \textcolor{codegreen}{\# This function is called recursively to return a tree structure.}
    \Procedure{GRANITE}{$\Omega, \text{curr\_depth}, \text{curr\_risk}$}
        \State Initialize LeafNode.
        \State \textcolor{codegreen}{\# If the objective cannot be increased further under the constraints, this node becomes a leaf.}
        \If{ $\text{curr\_depth = max-depth \textbf{or} curr\_risk} < \alpha \textbf{ or } \widehat{P}(\Omega) < 2 N_{\text{min}}/N$}
            \State \Return $\text{LeafNode}, \text{curr\_risk}, 1$
        \EndIf
        \State Initialize InternalNode
        \State Define $S_{j,B,\Omega}$ as the $B-1$ bin edges resulting from binning $\{x^{(n)}_j: x^{(n)}\in\Omega\}$.
        \State Define $\Omega^{j, \gamma}_- = \Omega \cap \{x\in \mathcal X : x_j\leq \gamma\}$
        \State Define $\Omega^{j, \gamma}_+ = \Omega \cap \{x\in \mathcal X : x_j > \gamma\}$
        \State $j^\star, \gamma^\star = \text{argmin}_{j\in D: \gamma\in S_{j,B,\Omega}}  \mathcal{R}(\Omega^{j,\gamma}_-) + \mathcal{R}(\Omega^{j, \gamma}_+)$
        \State $\text{InternalNode.child\_left, new\_risk\_left, n\_leaves\_left} \qquad= $ \Call{GRANITE}{$\Omega^{j^\star, \gamma^\star}_-, \text{curr\_depth} + 1, \mathcal{R}(\Omega^{j^\star,\gamma^\star}_-)$}
        \State $\text{InternalNode.child\_right, new\_risk\_right, n\_leaves\_right} = $ \Call{GRANITE}{$\Omega^{j^\star, \gamma^\star}_+, \text{curr\_depth} + 1, \mathcal{R}(\Omega^{j^\star,\gamma^\star}_+)$}

        \State \textcolor{codegreen}{\# While backtracking, regularization may force us to let the current node be a leaf instead.}
        \If{ $\text{curr\_risk} + \alpha < \text{new\_risk\_left} + \text{new\_risk\_right} + \alpha (\text{n\_leaves\_left} + \text{n\_leaves\_right})$}
            \State \Return LeafNode, curr\_risk, 1
        \Else
            \State \Return InternalNode, new\_risk\_left + new\_risk\_right, n\_leaves\_left + n\_leaves\_right
        \EndIf
    \EndProcedure
    \State Tree, disagreement\_risk, n\_leaves = \Call{GRANITE}{$\mathcal X, 0 ,0$}
    \end{algorithmic}
    \label{alg:GRANITE}
\end{algorithm}

\begin{figure}[t]
    \centering
    \begin{subfigure}{0.32\textwidth}
        \centering
        \includegraphics[width=\linewidth]{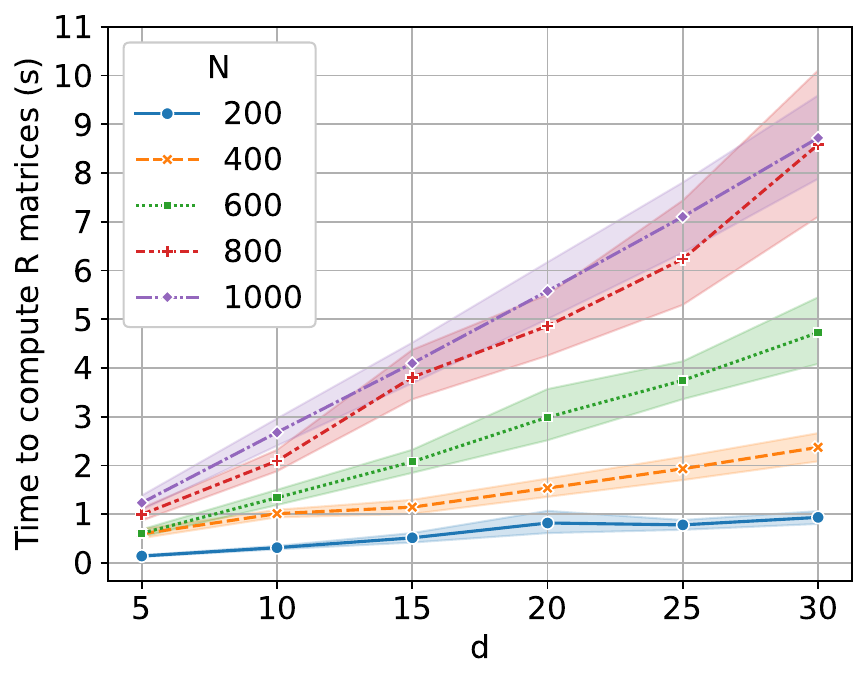}
        \caption{}
    \end{subfigure}
    \begin{subfigure}{0.32\textwidth}
        \centering
        \includegraphics[width=\linewidth]{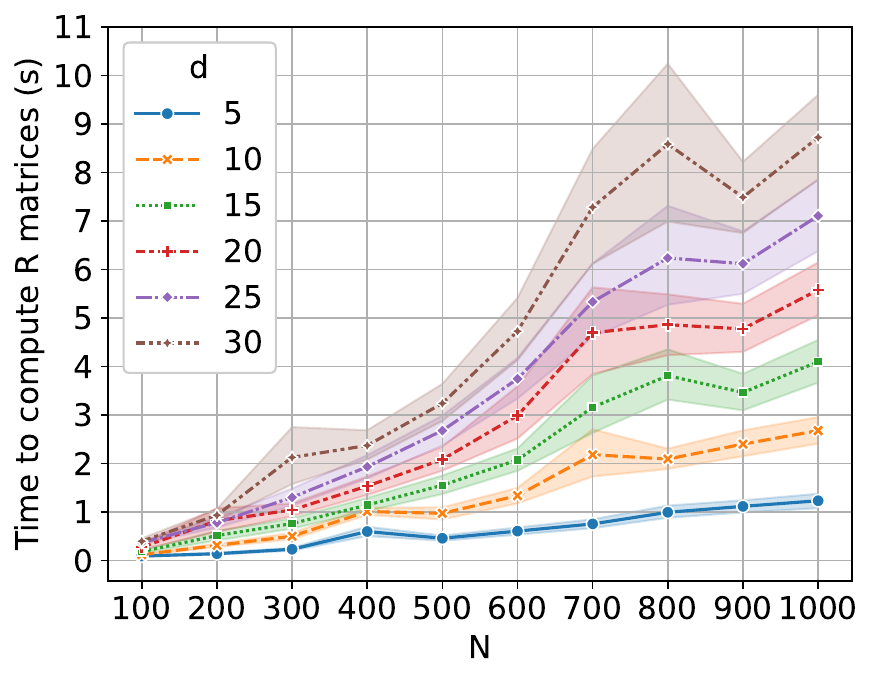}
        \caption{}
    \end{subfigure}
    \vspace{0.2cm}
    \caption{Runtimes for computing the $\{\bm{R}^i\}_{i=1}^d$ matrices. The theoretical complexity is $\mathcal{O}(d N^2)$.}
    \label{fig:decomposition_time}
\end{figure}

\begin{figure}[t]
    \centering
    \begin{subfigure}{0.32\textwidth}
        \centering
        \includegraphics[width=\linewidth]{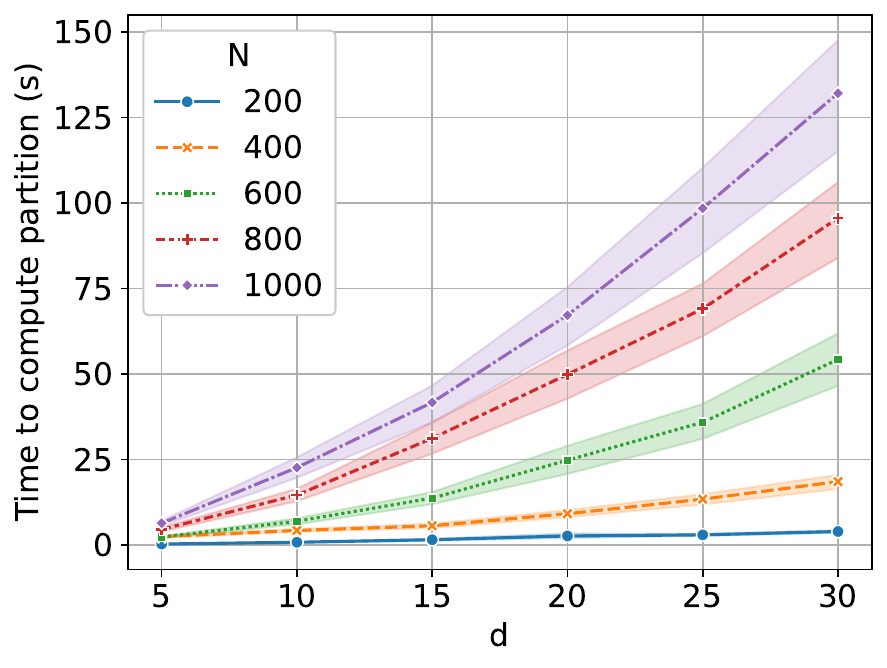}
        \caption{}
    \end{subfigure}
    \begin{subfigure}{0.32\textwidth}
        \centering
        \includegraphics[width=\linewidth]{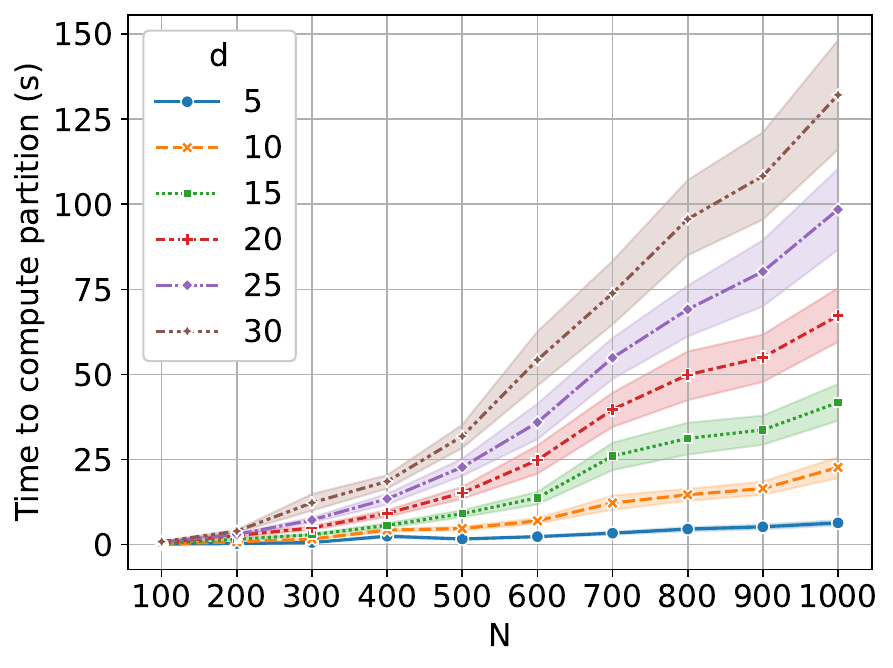}
        \caption{}
    \end{subfigure}
    \begin{subfigure}{0.32\textwidth}
        \centering
        \includegraphics[width=\linewidth]{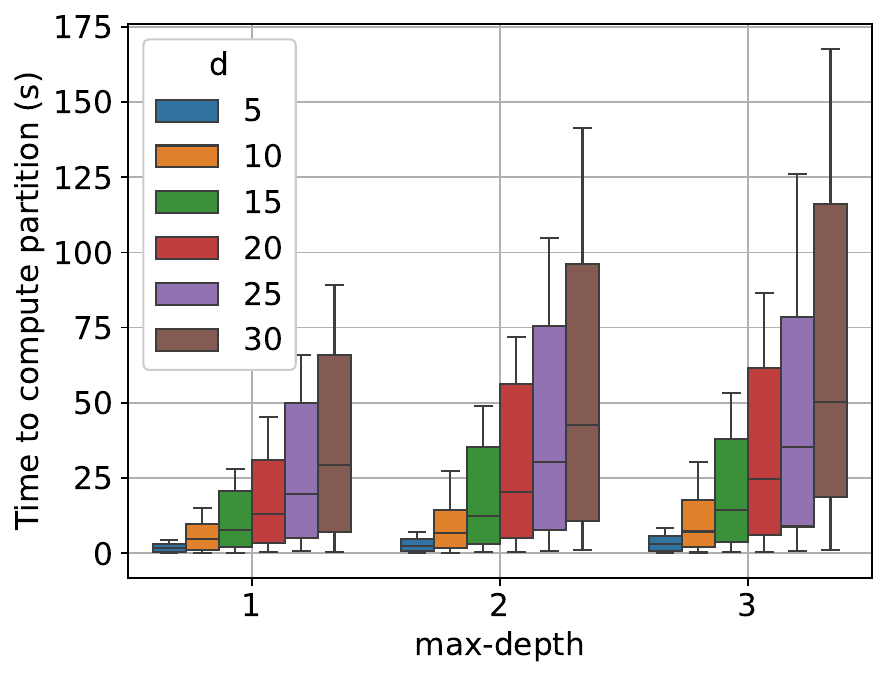}
        \caption{}
    \end{subfigure}
    \vspace{0.2cm}
    \caption{Runtimes for computing the partition (i.e. Algorithm \ref{alg:GRANITE}). The theoretical complexity is $\mathcal{O}(2^{\text{max-depth}} d^2 N^2 B)$.}
    \label{fig:partition_time}
\end{figure}

\newpage
\subsection{Scalability Experiments}\label{app:algorithm:experiment}

We investigate the scalability of explanations and partition computations on toy examples where the parameters 
$N, d$ and \texttt{max-depth} are controlled. For simplicity, we sample $N$ data points from a $d$-dimensional
isotropic Gaussian and generate a random model that is piece-wise linear over the leaves of a randomly generated
decision tree of depth \texttt{max-depth}. Figure \ref{fig:decomposition_time} presents the required time to compute the 
$\bm{R}^i$ matrices for various values of $N$ and $d$. We indeed observe a linear runtime for $d$ and a quadratic one for 
$N$. This is not surprising seeing as the model has to be called $dN^2$ times. Also, note that it never takes more than 
$10$ seconds to compute the matrices, even when studying $1000$ samples with $30$ features. These runtimes are 
acceptable but they could be further improved by using a model-specific implementation.

Figure \ref{fig:partition_time} shows the runtime of the partitioning algorithm given the parameters $d$, $N$ and
\texttt{max-depth}. The hyperparameters $\alpha,N_{\text{min}}$ and $B$ are fixed to their default value. 
We observe quadratic complexity w.r.t. $d$ and $N$, while the expected exponential complexity w.r.t \texttt{max-depth}
is not apparent. Indeed, we would expect the runtimes with \texttt{max-depth=3} to be 4 times higher than the 
runtimes of \texttt{max-depth=1}. We think this occurs because the regularization $\alpha=0.05$, although small, 
restricts the algorithm from growing the tree unless reductions in risk are significant.

\newpage
\section{FRAMEWORK UNIFICATION}\label{app:unification}

In this section, we provide details on how existing regional frameworks are unified within GRANITE by showing that each of them minimizes a special case of the regional disagreement of Definition \ref{def:disagree}:
\begin{align}\label{eq:disagreement}
    \mathcal R_{\ell \mid \Omega}(\phi_{1 \mid \Omega},\phi_{2\mid\Omega}) := \mathbb{E}_{P_{X}^\Omega}\left[ \sum_{i \in D} \ell\left(\phi_{1\mid\Omega}(i)-\phi_{2\mid\Omega}(i)\right)\right]
\end{align}

\paragraph{VINE \citep{britton2019vine}:}
VINE clusters derivative ICE (d-ICE) curves \citep{goldstein_peeking_2015} using Euclidean distance. This means regions are found by a clustering algorithm that minimizes the Euclidean distance between d-ICE curves and their cluster centroid being the derivative PDP (d-PDP).
A d-ICE curve for the $i$-th feature and for a fixed observation $x^\star$ can be decomposed by functional ANOVA (see Section \ref{sec:fanova}) into
\begin{align*}
    \frac{\partial f(x_i,x_{-i}^\star)}{\partial x_i} = \sum_{S\subseteq D}\frac{\partial f_S(x_i,x_{-i}^\star)}{\partial x_i} = \sum_{S\subseteq D: i \in S}\frac{\partial f_S(x_i,x_{-i}^\star)}{\partial x_i},
\end{align*}
where the last step follows from the fact that all effects $f_S$ independent of $x_i$ become zero. 
Hence, d-ICE curves fully account for interactions with the feature of interest ($x_i$) and when considered for all observations together it recovers the marginal distribution of features in $-i$. Therefore, d-ICE curves are a \textbf{full} explanation method to measure \textbf{individual feature influence}, under \textbf{local behavior} $\mathcal{B}_{loc, x_0}$ and \textbf{marginal masking} $\mathcal{M}_m$.

The respective d-PDP is then defined by
\begin{align*}
    \mathbb{E}_{X_{-j}}\left[\frac{\delta f(x_i,X_{-i})}{\delta x_i}\right] = \sum_{S\subseteq D: i \in S}\mathbb{E}_{X_{-j}}\left[\frac{\delta f_S(x_i, X_{-j})}{\delta x_i}\right],
\end{align*}
where the influence of features in $-i$ is integrated out using their joint marginal distribution $P_{X_{-i}}$. Therefore, d-PDP is a \textbf{pure} explanation method to measure \textbf{individual feature influence}, under \textbf{local behavior} $\mathcal{B}_{loc, x_0}$ and \textbf{marginal masking} $\mathcal{M}_m$ \citep{fumagalli2025unifying}.

Hence, VINE minimizes the regional disagreement between d-ICE ($\phi_{1\mid\Omega}$) and d-PDP ($\phi_{2\mid\Omega}$), which are defined by
\begin{align*}
    \phi_{1\mid\Omega}(i) = (\mathcal{I}_{\text{full}}(\mathcal{B}_{loc,x_0}^\Omega(\mathcal{M}^\Omega_m F)))(i) \quad \text{and} \quad \phi_{2\mid\Omega}(i) = (\mathcal{I}_{\text{pure}}(\mathcal{B}_{loc,x_0}^\Omega(\mathcal{M}^\Omega_m F)))(i),
\end{align*}
using $\ell(\phi_{1\mid\Omega}(i) - \phi_{2\mid\Omega}(i)) = \sqrt{(\phi_{1\mid\Omega}(i) - \phi_{2\mid\Omega}(i))^2}$ and the joint marginal distribution $P_{X_{-i}}^\Omega$.

It follows that VINE \interaction{minimizes interactions} for \textbf{individual feature influence} measures between \textbf{full} and \textbf{pure} effects under \textbf{local behavior} and \textbf{marginal masking} using a clustering algorithm. VINE only works for single features, i.e., we can only minimize interactions between one feature $i$ and all other features $-i$. Hence, the disagreement risk is defined by
$$
\mathcal R_{\ell \mid \Omega}(\phi_{1 \mid \Omega},\phi_{2\mid\Omega}) := \mathbb{E}_{P_{X}^\Omega}\left[ \ell\left(\phi_{1\mid\Omega}(i)-\phi_{2\mid\Omega}(i)\right)\right].
$$
Applying VINE to different features is possible but will most likely lead to different clusters, complicating interpretation.

\paragraph{Cohex \citep{meng2024cohex}}:
Cohex clusters local explanations by minimizing the mean squared error between local explanations and their cluster (regional) mean. While they argue that Cohex is applicable to many local explanation methods, they focus on marginal (interventional) SHAP values within the paper. Marginal SHAP is a \textbf{partial} explanation method \citep{fumagalli2025unifying, herbinger2024gadget} to measure \textbf{individual feature influence}, under \textbf{local behavior} $\mathcal{B}_{loc, x_0}$ and \textbf{marginal masking} $\mathcal{M}_m$.
The cluster (regional) mean of SHAP values per feature $i$ integrates out the influence of features in $-i$ using their joint marginal distribution $P_{X_{-i}}$. Thus, the cluster mean is a \textbf{pure} explanation method to measure \textbf{individual feature influence}, under \textbf{local behavior} $\mathcal{B}_{loc, x_0}$ and \textbf{marginal masking} $\mathcal{M}_m$ \citep{herbinger2024gadget}.

Hence, Cohex minimizes the regional disagreement between marginal SHAP ($\phi_{1\mid\Omega}$) and its regional mean ($\phi_{2\mid\Omega}$), which are defined by
\begin{align*}
    \phi_{1\mid\Omega}(i) = (\mathcal{I}_{\text{partial}}(\mathcal{B}_{loc,x_0}^\Omega(\mathcal{M}^\Omega_m F)))(i) \quad \text{and} \quad \phi_{2\mid\Omega}(i) = (\mathcal{I}_{\text{pure}}(\mathcal{B}_{loc,x_0}^\Omega(\mathcal{M}^\Omega_m F)))(i),
\end{align*}
using $\ell(\phi_{1\mid\Omega}(i) - \phi_{2\mid\Omega}(i)) = (\phi_{1\mid\Omega}(i) - \phi_{2\mid\Omega}(i))^2$ and the joint marginal distribution $P_{X_{-i}}^\Omega$.

Cohex \interaction{minimizes interactions} for \textbf{individual influence measures} between \textbf{partial} and \textbf{pure} effects under \textbf{local behavior} and \textbf{marginal masking} using clustering.
In contrast to VINE, Cohex directly minimizes the sum (L2 norm) of all features and thus minimizes interactions between all features as defined in Eq.~\eqref{eq:disagreement}.

\paragraph{REPID \citep{herbinger2022repid}:}
REPID applies recursive partitioning to minimize the squared error between mean-centered ICE curves and mean-centered PDPs within each region for one feature of interest. \cite{herbinger2022repid} and \cite{fumagalli2025unifying} show that mean-centered ICE curves \textbf{fully} account for interactions for the respective feature. 
\cite{fumagalli2025unifying} classifies single mean-centered ICE curves as \textbf{individual feature influence} measures, under \textbf{local behavior} $\mathcal{B}_{loc, x_0}$ and \textbf{baseline masking} $\mathcal{M}_b$. However, when all ICE curves within a region are considered, the values used for features in $-i$ reflect their joint regional marginal distribution $P_{X_{-i}}^\Omega$, thus considered together they reflect \textbf{marginal masking} $\mathcal{M}_m$. 
The mean-centered PDP is the average of the mean-centered ICE curves and integrates out the influence of features in $-i$ using their joint marginal distribution $P_{X_{-j}}$ \citep{herbinger2022repid}. Thus, the regional mean-centered PDP is a \textbf{pure} explanation method to measure \textbf{individual feature influence} under \textbf{local behavior} $\mathcal{B}_{loc, x_0}$ and \textbf{marginal masking} $\mathcal{M}_m$ \citep{fumagalli2025unifying}.

Hence, REPID minimizes the regional disagreement between mean-centered ICE curves ($\phi_{1\mid\Omega}$) and mean-centered regional PDPs ($\phi_{2\mid\Omega}$), which are defined by
\begin{align*}
    \phi_{1\mid\Omega}(i) = (\mathcal{I}_{\text{full}}(\mathcal{B}_{loc,x_0}^\Omega(\mathcal{M}^\Omega_m F)))(i) \quad \text{and} \quad \phi_{2\mid\Omega}(i) = (\mathcal{I}_{\text{pure}}(\mathcal{B}_{loc,x_0}^\Omega(\mathcal{M}^\Omega_m F)))(i),
\end{align*}
using $\ell(\phi_{1\mid\Omega}(i) - \phi_{2\mid\Omega}(i)) = (\phi_{1\mid\Omega}(i) - \phi_{2\mid\Omega}(i))^2$ and the joint marginal distribution $P_{X_{-i}}^\Omega$.

Similarly to VINE, REPID \interaction{minimizes interactions} for \textbf{individual influence measures} between \textbf{full} and \textbf{pure} effects under \textbf{local behavior} and \textbf{marginal masking} for only one feature of interest. Thus, the disagreement risk is also given by
$$
\mathcal R_{\ell \mid \Omega}(\phi_{1 \mid \Omega},\phi_{2\mid\Omega}) := \mathbb{E}_{P_{X}^\Omega}\left[ \ell\left(\phi_{1\mid\Omega}(i)-\phi_{2\mid\Omega}(i)\right)\right].
$$

Compared to VINE, REPID uses a recursive partitioning algorithm instead of clustering, which has the advantage that it leads to interpretable regions in the feature space.

\paragraph{GADGET \citep{herbinger2024gadget}:}
GADGET minimizes the disagreement between any local feature effect method that can be decomposed into components $h$ that contain only effects that depend on $i$, i.e.,
$$
\phi_{1\mid\Omega}(i) = h_i(x_i) + \sum_{S\subseteq D: i\in S, |S|\geq 2} h_S(x),
$$
and its regional mean, i.e.,
$$
\phi_{2\mid \Omega}(i) = h_i(x_i) + \sum_{S\subseteq D: i\in S, |S|\geq 2} \mathbb{E}_{X_{-i}}[h_S(X)].
$$

Since their approach is based on local effects, the explanations are defined by
\begin{align*}
    \phi_{1\mid\Omega}(i) = (\mathcal{I}_1(\mathcal{B}_{loc,x_0}^\Omega(\mathcal{M}_1^\Omega F)))(i) \quad \text{and} \quad \phi_{2\mid\Omega}(i) = (\mathcal{I}_{\text{pure}}(\mathcal{B}_{loc,x_0}^\Omega(\mathcal{M}_2^\Omega F)))(i),
\end{align*}
where $\mathcal{I}_1 \in \{\mathcal{I}_{\text{partial}}, \mathcal{I}_{\text{full}}\}$ and $\mathcal{M}_1 = \mathcal{M}_2$.
The disagreement loss is defined by the L2 loss, i.e., $\ell(\phi_{1\mid\Omega}(i) - \phi_{2\mid\Omega}(i)) = (\phi_{1\mid\Omega}(i) - \phi_{2\mid\Omega}(i))^2$.

In \cite{herbinger2024gadget} they particularly derive it for the following three effect methods:
\begin{itemize}
    \item \textbf{Mean-centered ICE curves and PDPs:} With $h_S(x) = f_S(x)$ and thus $\mathcal{I}_1 = \mathcal{I}_{\text{full}}$ and $\mathcal{M} = \mathcal{M}_m$.
    \item \textbf{Marginal SHAP values and Average SHAP Depedence:} With $h_S(x) = \frac{1}{|S|}f_S(x)$ and thus $\mathcal{I}_1 = \mathcal{I}_{\text{partial}}$ and $\mathcal{M} = \mathcal{M}_m$.
    \item \textbf{Local Derivatives and Accumulated Local Effects (ALE) Plots \citep{apley_visualizing_2020}:} With $h_S(x) = \frac{\delta}{\delta x_S}f_S(x)$ and thus $\mathcal{I}_1 = \mathcal{I}_{\text{full}}$ and $\mathcal{M} = \mathcal{M}_c$.
\end{itemize}

GADGET \interaction{minimizes interactions} for \textbf{individual influence measures} between \textbf{full} and \textbf{pure} as well as between \textbf{partial} and \textbf{pure} effects under \textbf{local behavior} and \textbf{marginal} and \textbf{conditional} masking \citep{herbinger2024gadget, fumagalli2025unifying}. While GADGET also uses recursive partitioning and the L2 loss comparably to REDPID, GADGET does consider all (or a subset of) features as defined in Eq.~\eqref{eq:disagreement}.

\paragraph{FD-Trees \citep{laberge2024tackling}:}
Similarly to GADGET, FD-Trees minimize disagreement between local feature effect methods that can be decomposed into the components $h$ as defined above. However, compared to GADGET, the explanations in FD-Trees can take the following operator choices:
\begin{align*}
    \phi_{1\mid\Omega}(i) = (\mathcal{I}_1(\mathcal{B}_{loc,x_0}^\Omega(\mathcal{M}_m^\Omega F)))(i) \quad \text{and} \quad \phi_{2\mid\Omega}(i) = (\mathcal{I}_{2}(\mathcal{B}_{loc,x_0}^\Omega(\mathcal{M}_m^\Omega F)))(i),
\end{align*}
where $\mathcal{I}_1, \mathcal{I}_2 \in \{\mathcal{I}_{\text{pure}},\mathcal{I}_{\text{partial}}, \mathcal{I}_{\text{full}}\}$ with $\mathcal{I}_1 \neq \mathcal{I}_2$.
The disagreement loss is defined by the L2 loss, i.e., $\ell(\phi_{1\mid\Omega}(i) - \phi_{2\mid\Omega}(i)) = (\phi_{1\mid\Omega}(i) - \phi_{2\mid\Omega}(i))^2$.

In \cite{laberge2024tackling} they particularly consider the following three disagreements:
\begin{itemize}
    \item \textbf{PFI on Predictions (Mean-centered ICE curves) and PDPs:} With $h_S(x) = f_S(x)$ and thus $\mathcal{I}_1 = \mathcal{I}_{\text{full}}$ and $\mathcal{M}_1 = \mathcal{M}_m$ for $\phi_1$ and $\phi_2$ being the expectation w.r.t. features in $-i$ as defined above.
    \item \textbf{Marginal SHAP values and PDPs (Average SHAP Depedence):} With $h_S(x) = \frac{1}{|S|}f_S(x)$ and thus $\mathcal{I}_1 = \mathcal{I}_{\text{partial}}$ and $\mathcal{M}_1 = \mathcal{M}_m$ for $\phi_1$ and $\phi_2$ being the expectation w.r.t. features in $-i$ as defined above.
    \item \textbf{PFI on Predictions (Mean-centered ICE curves) and Marginal SHAP values:} With $h_S(x) =f_S(x)$ and thus $\mathcal{I}_1 = \mathcal{I}_{\text{full}}$ and $\mathcal{M}_1 = \mathcal{M}_m$ for $\phi_1$ and $h_S =\frac{1}{|S|}f_S(x)$ and thus $\mathcal{I}_1 = \mathcal{I}_{\text{partial}}$ and $\mathcal{M}_1 = \mathcal{M}_m$.
\end{itemize}

FD-Trees \interaction{minimize interactions} for \textbf{individual influence measures} between \textbf{full} and \textbf{pure}, \textbf{full} and \textbf{partial} as well as between \textbf{partial} and \textbf{pure} effects under \textbf{local behavior} and \textbf{marginal} masking \citep{laberge2024tackling, fumagalli2025unifying}. Similarly to GADGET, FD-Trees also apply recursive partitioning and do consider all features as defined in Eq.~\eqref{eq:disagreement}.

\paragraph{Transformation Trees \citep{molnar2023model}:}
Transformation Trees minimize the disagreement between the conditional distribution of the $j$-th feature $P_{X_j\mid X_{-j} = x_{-j}}$ and its marginal distribution $P_{X_j}$ using recursive partitioning. In the paper, the authors argue that this approach minimizes the disagreement between CFI (conditional) and PFI (marginal), as well as between MPlot (conditional) and PDP (marginal).

More generally, this approach minimizes disagreement between full conditional and full marginal individual influence measures since these measures are defined by integrating out only the $j$-th feature based on its conditional and marginal distribution, which directly corresponds to $P_{X_j\mid X_{-j} = x_{-j}}$ and $P_{X_j}$. 
\begin{align*}
    \phi_{1\mid\Omega}(i) = (\mathcal{I}_{\text{full}}(\mathcal{B}^\Omega(\mathcal{M}_c^\Omega F)))(i) \quad \text{and} \quad \phi_{2\mid\Omega}(i) = (\mathcal{I}_{\text{full}}(\mathcal{B}^\Omega(\mathcal{M}_m^\Omega F)))(i),
\end{align*}

Thus, it follows that the disagreement between CFI and PFI is minimized using Transformation Trees since they are both full individual influence measures \citep{fumagalli2025unifying}.

However, the disagreement between pure conditional and pure marginal individual influence measures, such as MPlot and PDP, is not explicitly targeted, as this would require minimizing the discrepancy between \(P_{X_{-j}\mid X_{j} = x_{j}}\) and \(P_{X_{-j}}\), which is not the objective of Transformation Trees. Nevertheless, if Transformation Trees achieve complete independence between \(X_j\) and \(X_{-j}\) within the final regions, it follows not only that \(P_{X_j\mid X_{-j} = x_{-j}} = P_{X_j}\), but also that \(P_{X_{-j}\mid X_{j} = x_{j}} = P_{X_{-j}}\). In this case, the conditional and marginal pure effects coincide.
Transformation Trees generally aim to \correlation{minimize distributional influence} for \textbf{individual influence measures} between \textbf{conditional} and \textbf{marginal} masking in the case of \textbf{full} measures. However, the paper considers \textbf{full} measures in the context of \textbf{risk behavior} and \textbf{pure} measures in the context of \textbf{local behavior} \citep{molnar2023model, fumagalli2025unifying}. In the latter case, the disagreement between conditional and marginal measures is not explicitly minimized but vanishes if the tree’s optimization leads to terminal regions where $X_j$ is independent of $X_{-j}$. Similar to REPID, Transformation Trees rely on recursive partitioning and consider only one feature of interest.

\newpage
\section{ADDITIONAL EXPERIMENTS}\label{app:experiments}

In addition to Section~\ref{sec:experiments}, we present further applications of GRANITE on various real-world datasets to address different interpretability questions and provide additional insights. We first extend the analysis of the \textit{Bikesharing} dataset, followed by an investigation of the \textit{Kin8nm} dataset to explore higher-order interactions, and conclude with the \textit{Diabetes} dataset to examine joint feature influence, with full implementation and code for the experiments available at \url{https://github.com/gablabc/GRANITE}.

\subsection{Bikesharing Extended}\label{app:experiments:bikesharing}

Bikesharing is a regression dataset whose task is to predict the number of bike rentals given 17K examples and 10 features: \emph{yr}, \emph{mnth}, \emph{hr}, 
\emph{holiday}, \emph{weekday}, \emph{workingday}, \emph{weathersit}, \emph{temp}, \emph{hum}, and \emph{windspeed}. The dataset was split into train and test sets
with an 80-20 ratio, resulting in 14K training instances and 3K test instances. A \texttt{HistGradientBoostingRegressor}\footnote{\url{https://scikit-learn.org/stable/modules/generated/sklearn.ensemble.HistGradientBoostingRegressor.html}} was tuned using 5-fold cross-validation on the training set, with hyperparameters selected via random search. The final model attained a Root-Mean-Squared-Error of 43.08 on the test set, while the target has a standard deviation of 181.45. Given that the error is more than four times lower than the model variability, we can safely assume that its post-hoc explanations are able to highlight meaningful trends in the data.

\interaction{Minimizing Feature Interactions.} We first investigated the model behavior by reporting the ICE curves (full-marginal) and PDP (pure-marginal) for the 10
features of Bikesharing (see Figure \ref{fig:bike_full_pure_global_all}). It is apparent that there are strong disagreements between the ICE and PDP curves for features
\emph{hr}, \emph{workingday}, \emph{temp}, and \emph{yr}, which makes it difficult to understand how these features individually impact the model response.
By partitioning the input into three disjoint regions, however, GRANITE is able to reduce the disagreements between the pure and full explanations 
(see Figure \ref{fig:bike_full_pure_regional_all}). We note that all ICE/PDP curves are flat when it is early in the morning on a working day. This indicates that the 
model does not exhibit any interesting variability during these times. This regional trend was initially hidden in the non-regional explanations, since 
they mask features using the whole dataset as a reference.

\begin{figure}[htb]
    \centering
    \includegraphics[width=0.85\linewidth]{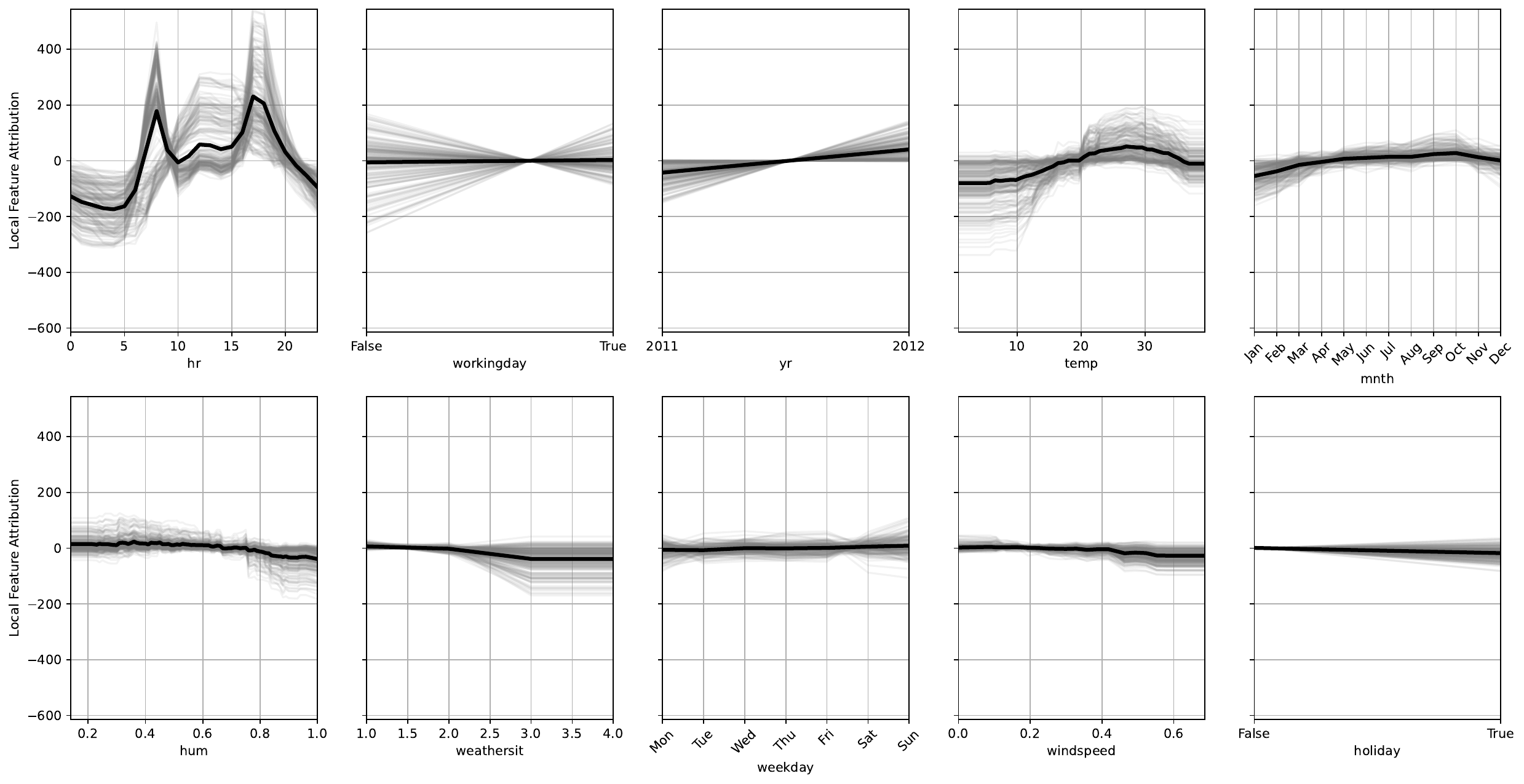}
    \caption{\interaction{Interaction disagreement} with $\mathcal{B}_{loc,x_0}$ and $ \mathcal{M}_m$ between full (ICE curves as thin lines) and pure (PDP as thick line) 
    explanations. \emph{Hour}, \emph{temperature}, \emph{yr}, and \emph{workingday} exhibit the highest disagreements between ICE and PDP.}
    \label{fig:bike_full_pure_global_all}
\end{figure}
\begin{figure}[!t]
    \centering
    \includegraphics[width=0.85\linewidth]{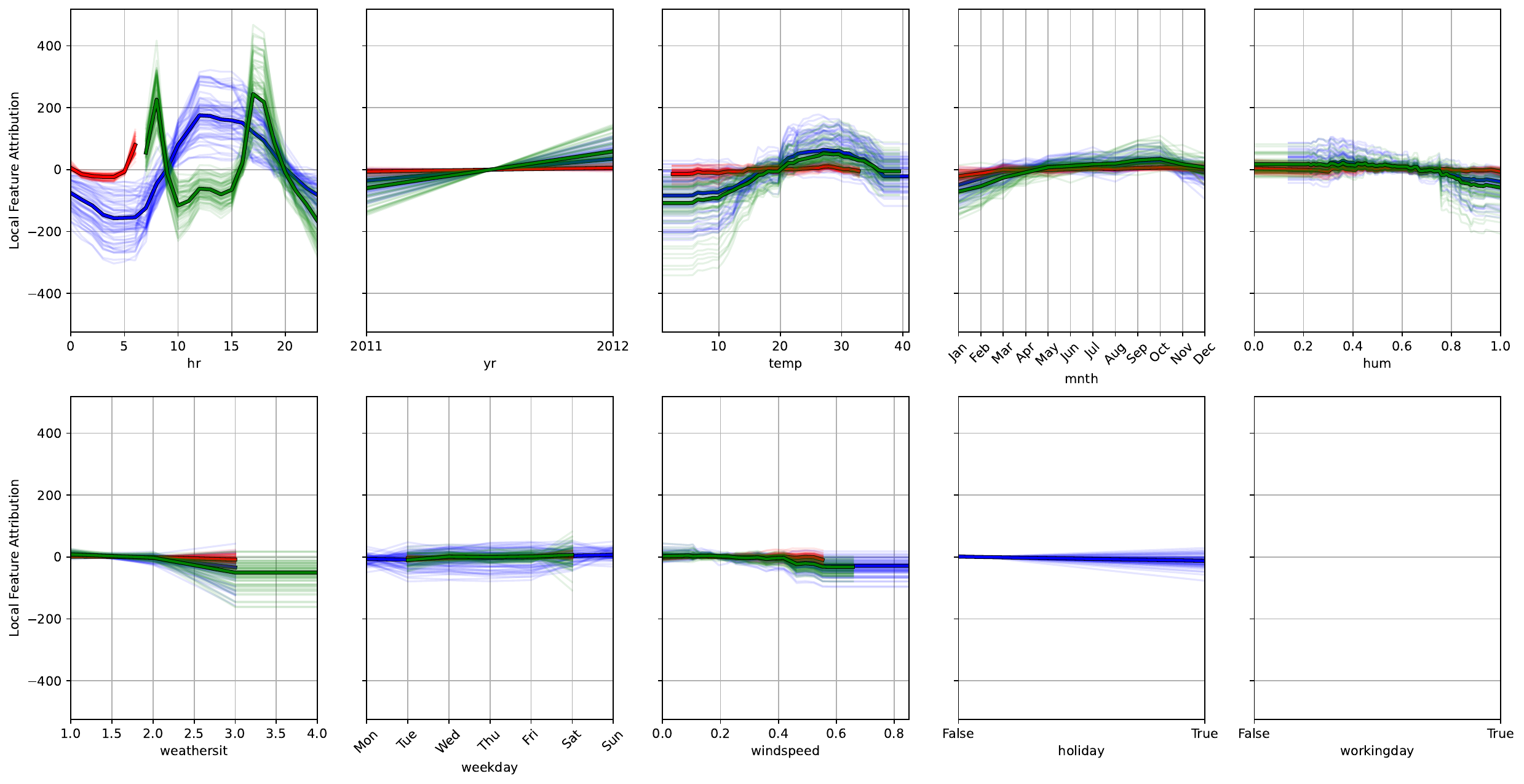}
    \includegraphics[width=0.5\linewidth]{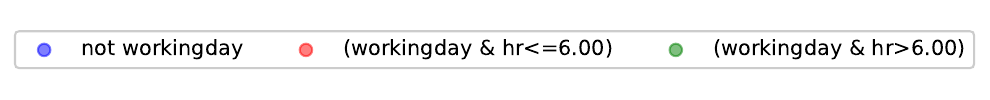}
    \caption{\interaction{Minimizing Interaction disagreement} with $\mathcal{B}_{loc,x_0}$ and $ \mathcal{M}_m$ between full (ICE curves, thin lines) and pure 
    (PDP, thick line) explanations. The disagreements are lessened when the explanations are restricted to the three regions.}
    \label{fig:bike_full_pure_regional_all}
\end{figure}

\correlation{Minimizing Feature Dependencies.} Besides feature interactions, dependencies can complicate the interpretation of black-box models, because marginal masking may evaluate the model on unrealistic inputs. Since model behavior is often unpredictable outside the observed data, marginal-based explanations should be interpreted with caution.
To confirm that extrapolation is occurring in marginal
explanations, we compared the MPlot (pure-conditional-local effects) alongside the PDP (pure-marginal-local effects) in Figure \ref{fig:bike_marg_cond_global_all}.
We indeed observe that, for weather-related features (\emph{mnth}, \emph{weathersit}, \emph{temps}, \emph{hum}), the MPlot and PDP tend to disagree on
extreme values of the feature. To reduce the disagreement between the conditional and marginal explanations, GRANITE suggests partitioning the input space into
three regions that separate early/daytime hours and hot/cold temperatures. Figure \ref{fig:bike_marg_cond_regional_all} shows the corresponding regional explanations.
It is apparent that disagreements between PDP and MPlot are greatly reduced, which implies that regional marginal explanations are less affected by extrapolation
than their global counterparts.

A similar conclusion can be drawn when comparing CFI and PFI (full-conditional-risk versus full-marginal-risk). When using the whole dataset as the reference distribution,
the two methods disagree on the feature importance rankings, as shown in Figure \ref{fig:bike_pfi_vs_cfi}~(a). Notably, CFI and PFI contradict each other on
the importance of \emph{workingday}: PFI identifies it as the second most important feature, whereas CFI attributes it no importance. This disagreement is induced by a dependency between
\emph{workingday} and the features \emph{weekday} and \emph{holiday}. Indeed, one feature can be determined by knowing the other two:
$\emph{workingday} = [\emph{weekday}\notin{Sat, Sun}]\wedge [\neg\emph{holiday}]$. This redundancy induces conflicting interpretations between the marginal
and conditional explanations. PFI considers \emph{workingday} important because the model relies on it to make predictions, while CFI claims it is not important because
the learning algorithm does not \textbf{need} it to yield an accurate model (the other two features could be used instead). GRANITE addresses
this ambiguity by restricting the distribution to well-chosen regions (see Figure \ref{fig:bike_pfi_vs_cfi}~(b)). In this case, both methods attribute no regional
importance to \emph{workingday}. This does not mean that this feature is not important globally, however, since \emph{workingday} is used to split the input space.
It remains important in the sense that it substantially changes the model’s behavior when conditioned upon.

\begin{figure}[tbh]
    \centering
    \includegraphics[width=0.9\linewidth]{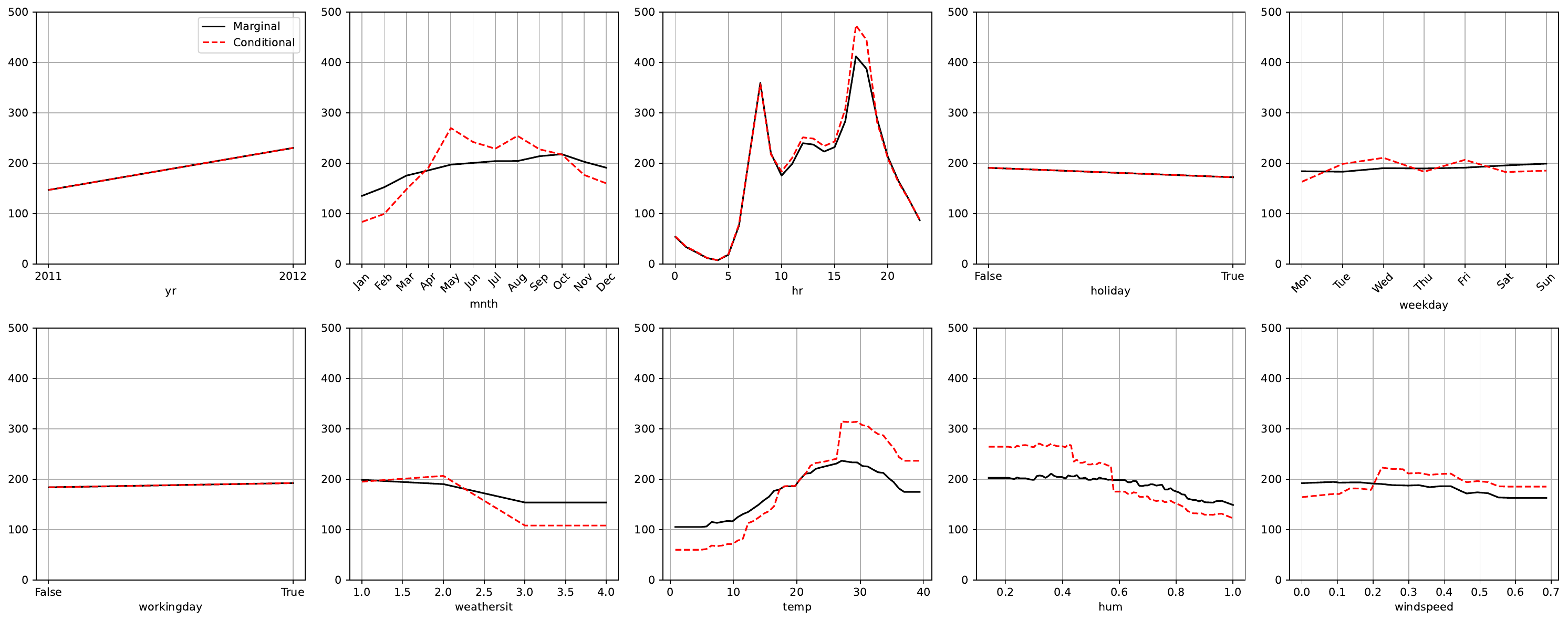}
    \caption{\correlation{Distribution disagreement} with $\mathcal{B}_{loc,x_0}$ and $\mathcal{I}_{pure}$ between MPlot ($\mathcal M_c$) and PDP ($\mathcal M_m$). 
    Highest disagreements occur for weather-related features.}
    \label{fig:bike_marg_cond_global_all}
\end{figure}

\begin{figure}[tbh]
    \centering
    \includegraphics[width=0.9\linewidth]{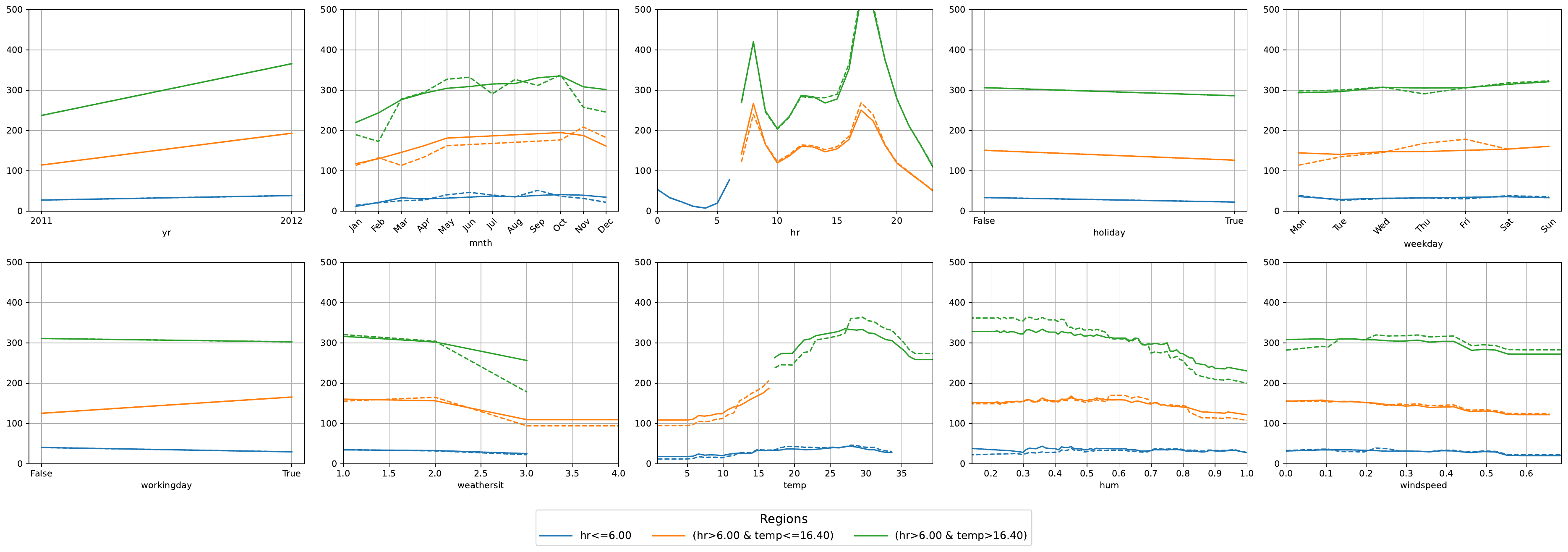}
    \caption{\correlation{Minimizing distribution disagreement} with $\mathcal{B}_{loc,x_0}$ and $\mathcal{I}_{pure}$ between MPlot ($\mathcal M_c$) and PDP 
    ($\mathcal M_m$). The disagreements are less important when restricting to early/daytime hours and hot/cold temperatures.}
    \label{fig:bike_marg_cond_regional_all}
\end{figure}
\begin{figure}[!t]
    \centering
    \begin{subfigure}{0.28\textwidth}
        \centering
        \includegraphics[width=\linewidth]{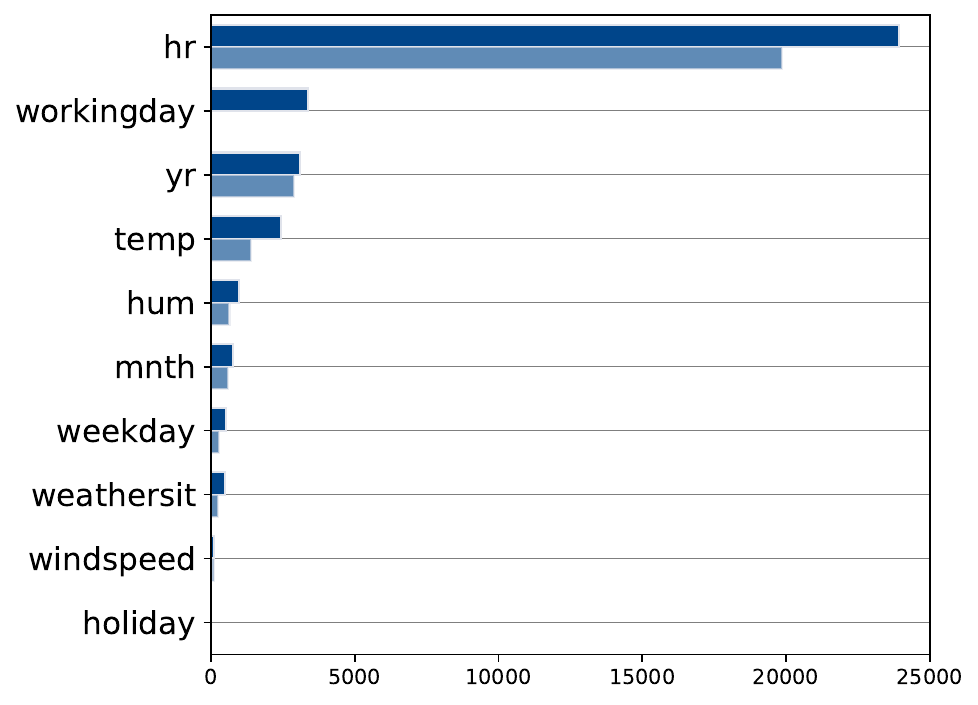}
        \caption{Global.}
    \end{subfigure}
    \begin{subfigure}{0.65\textwidth}
        \centering
        \includegraphics[width=\linewidth]{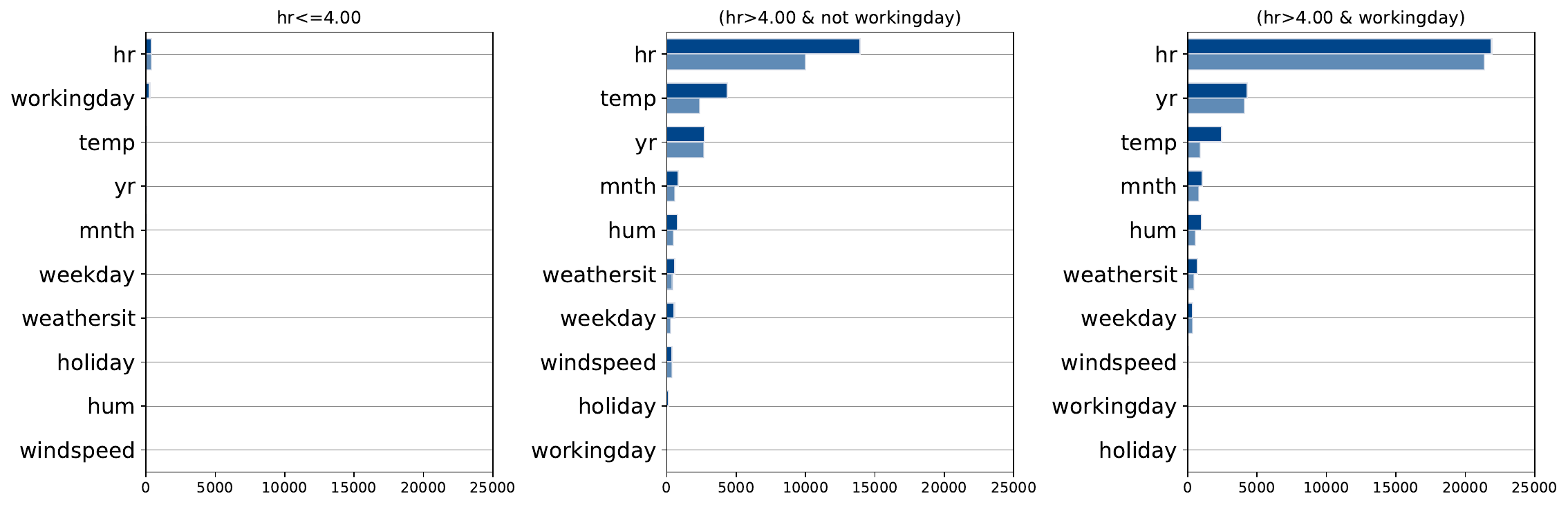}
        \caption{Regional.}
    \end{subfigure}
    \caption{\correlation{Minimizing distribution disagreement} with $\mathcal{B}_{risk}$ and $\mathcal{I}_{full}$ between CFI 
    ($\mathcal M_c$) and PFI ($\mathcal M_m$). The disagreements are reduced restricted to early hour or workingdays/not-workindays.}
    \label{fig:bike_pfi_vs_cfi}
\end{figure}

\subsection{Kin8nm}\label{app:experiments:kin8nm}

The kin8nm dataset is a simulation of the dynamics of a 8-link robot arm. The regression task aims at predicting the 
distance between the tip of the arm and a target based on 8 features, each representing an angle of the robot arm joints. 
Although our understanding of the dataset is extremely limited  (we do not know what each angle represents), studying it 
is still relevant because it involves high-order interactions that are extremely hard to visualize. This subsection 
presents how GRANITE can be used to handle these high-order interactions.

We fitted a Multi-Layered Perception (MLP) on this dataset reaching an $R^2$ coefficient of $0.9$. Although not a perfect
estimate, this model can still potentially highlight the strong interactions present in the robotic arm's dynamics.
To illustrate the presence of such interactions, we first computed the PDP (pure) and ICE (full) curves 
(see Figure~\ref{fig:kin8nm_ice_pdp}). The ICE curves show substantial variability, particularly for 
\texttt{theta6} and \texttt{theta7}, indicating that these features interact strongly with others.

\begin{figure}[t!]
    \centering
    \includegraphics[width=0.7\linewidth]{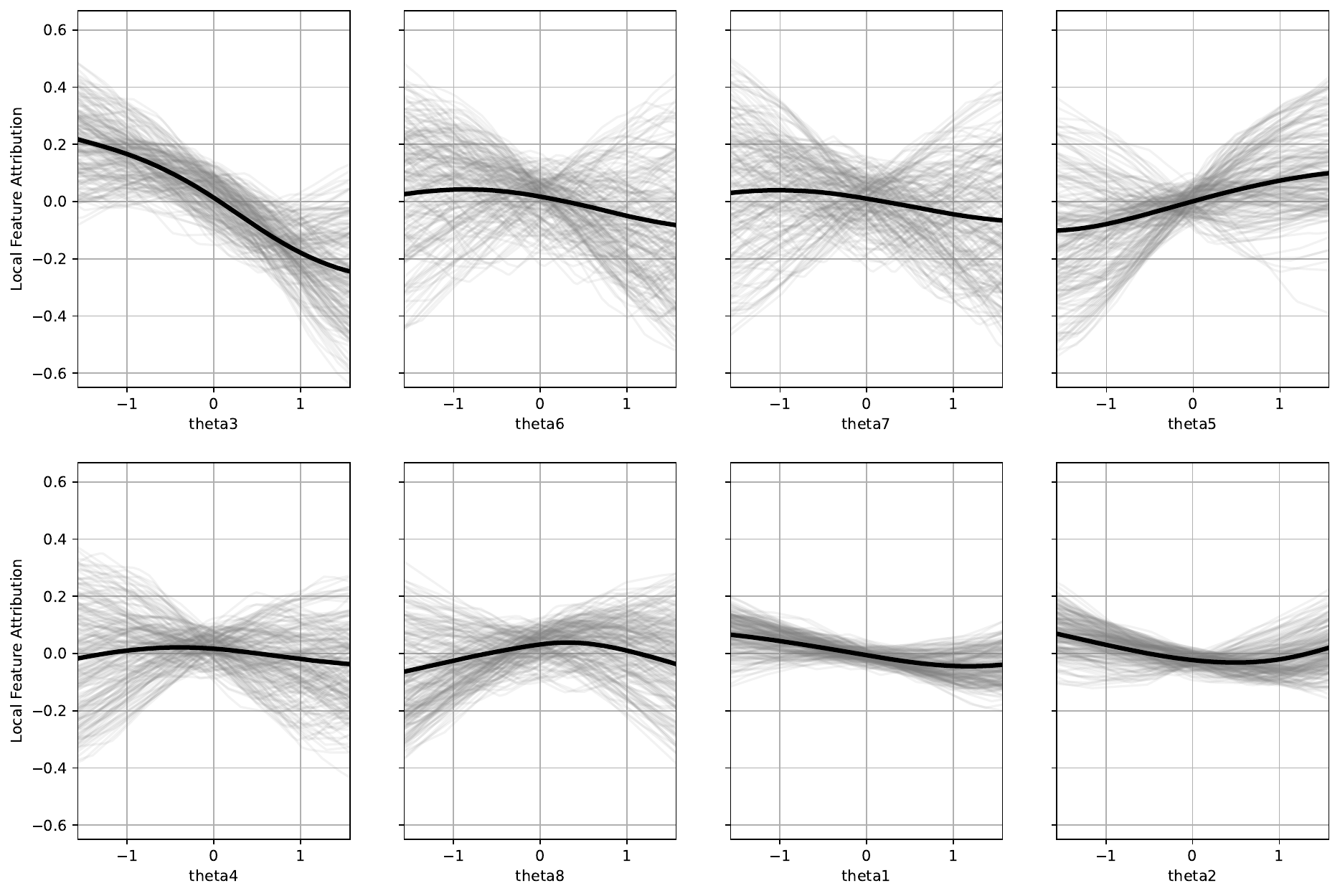}
    \caption{PDP (thick line) and ICE curves (thin lines) on Kin8nm.}
    \label{fig:kin8nm_ice_pdp}
\end{figure}
\begin{figure}[t!]
    \centering
    \includegraphics[width=0.7\linewidth]{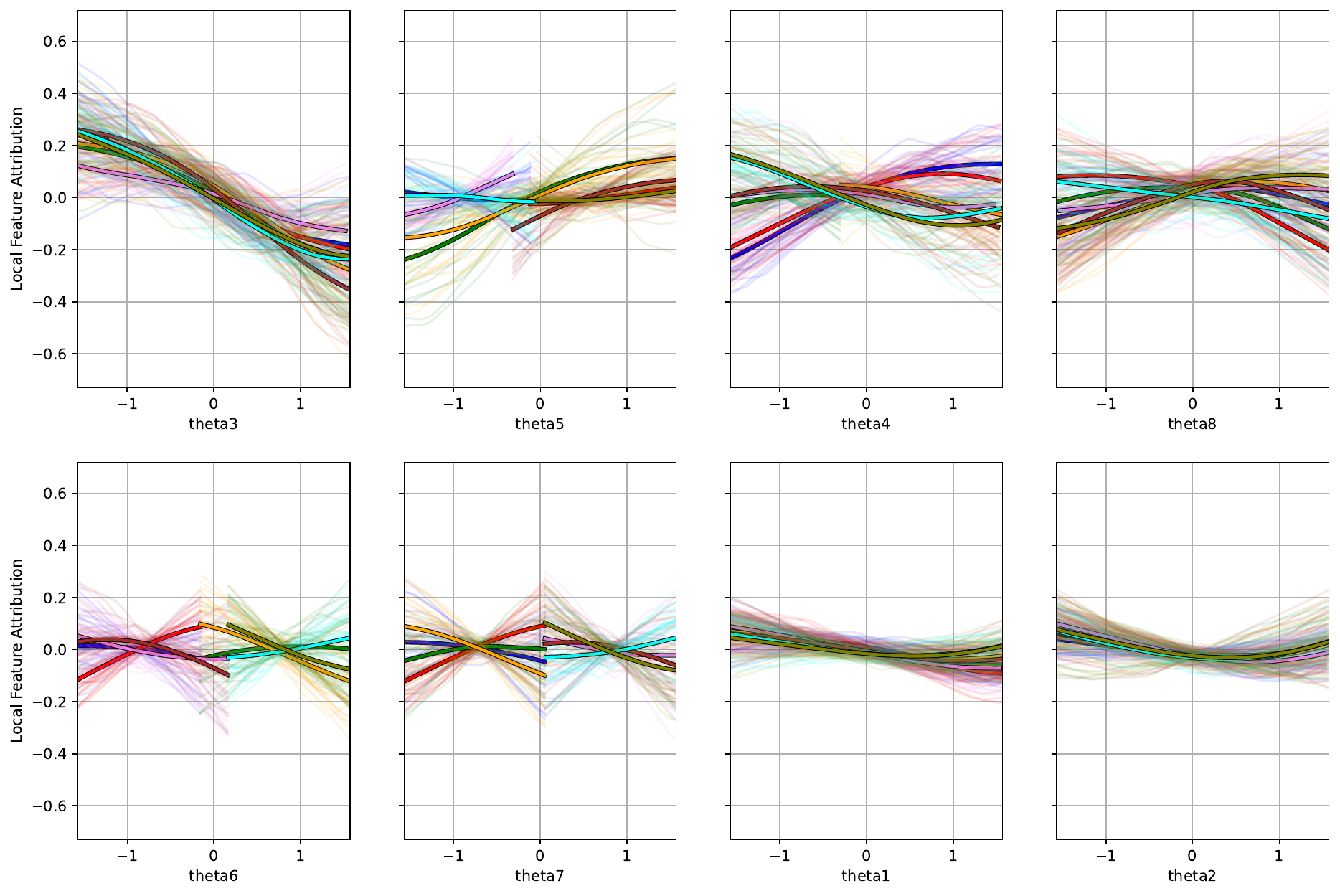}
    \includegraphics[width=0.9\linewidth]{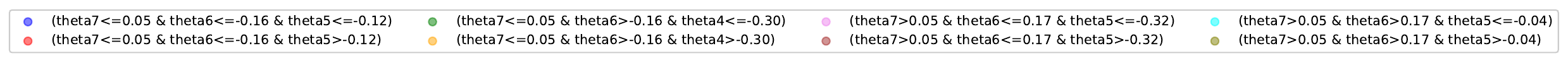}
    \caption{PDP (thick line) and ICE curves (thin lines) in 8 regions discovered by GRANITE on Kin8nm.}
    \label{fig:kin8nm_regional_ice_pdp}
\end{figure}

We then applied GRANITE to find regions where the pure and full marginal feature effects exhibit increased agreement. This corresponds to minimizing the interactions within the functional decomposition in each region. Figure \ref{fig:kin8nm_regional_ice_pdp}
shows the resulting regional PDP and ICE curves. Interactions are clearly reduced, as indicated by the diverse trends 
in regional PDP curves. For example, the slope of the \texttt{theta4}-PDP when $\texttt{theta4}<0$ can be positive, negative, or null depending on the region.

While univariate explanations provide useful insights, they can become difficult to interpret when numerous regions are considered. For example, moving from visualizing 8 univariate functions to $8 \times 8 = 64$ curves (corresponding to 8 regions) quickly becomes overwhelming. 
One reason GRANITE returns a relatively large number of regions is that it penalizes \textbf{all} interactions within the model, including order-2 interactions, which are often still interpretable. 
To address this, we opted to increase the expressivity of our explanations by incorporating not only individual feature effects but also pairwise interactions.
In this refined approach, GRANITE is then applied to minimize only higher-order interactions, preserving the interpretability of lower-order effects.

Figures~\ref{fig:kin8nm_interactions_regions}~(a) and (b) show the pure (a) and full (b) sensitivity effects of individual features and their pairwise interactions, 
computed using the whole dataset. Differences in edge widths between the two graphs reflect the presence of higher-order feature interactions. 
These higher-order interactions indicate that explaining the model using only pure pairwise interactions, while interpretable, can sometimes be misleading. 
To address this, we applied GRANITE to minimize the disagreements between the pure and full measures of pairwise interactions.
\begin{align*}
    \mathcal R_{\ell \mid \Omega}(\phi_{1 \mid \Omega},\phi_{2\mid\Omega}) := 
    \mathbb{E}_{P_{X}^\Omega}\left[ \sum_{S \in \mathcal{S}} \ell\left(\phi_{1\mid\Omega}(S)-\phi_{2\mid\Omega}(S)\right)\right],
\end{align*}
with $\mathcal{S}=\{S\subseteq D : |S| = 2\}$. Thus, we search for regions where interactions of order $3$ or more are minimized.
Figures~\ref{fig:kin8nm_interactions_regions}~(c)--(j) show the pure and full sensitivity effects across the four regions produced by the partitioning algorithm. 
It is evident that the pure and full pairwise interactions are more closely aligned within these regions. 
Therefore, the regional pure pairwise interaction effects provide a more faithful representation of the model.

As an illustration, we examined the pure interaction effects between \texttt{theta5} and \texttt{theta8} in regions 0 and 3. 
These regions were selected because, in the other two, strong disagreements between pure and full \texttt{theta5:theta8} interactions are still evident in Figure~\ref{fig:kin8nm_interactions_regions}. 
Figure~\ref{fig:kin8nm_interactions_regions_local} shows the pure individual effects and pure pairwise interactions across both regions. 
Notably, the interaction completely flips sign between the two regions, providing evidence of a three-way (or higher-order) interaction. 
Such effects would have been very difficult to visualize without using GRANITE to partition the input space.
\begin{figure}
    \centering
    \begin{subfigure}{0.45\textwidth}
        \centering
        \includegraphics[width=0.45\linewidth]{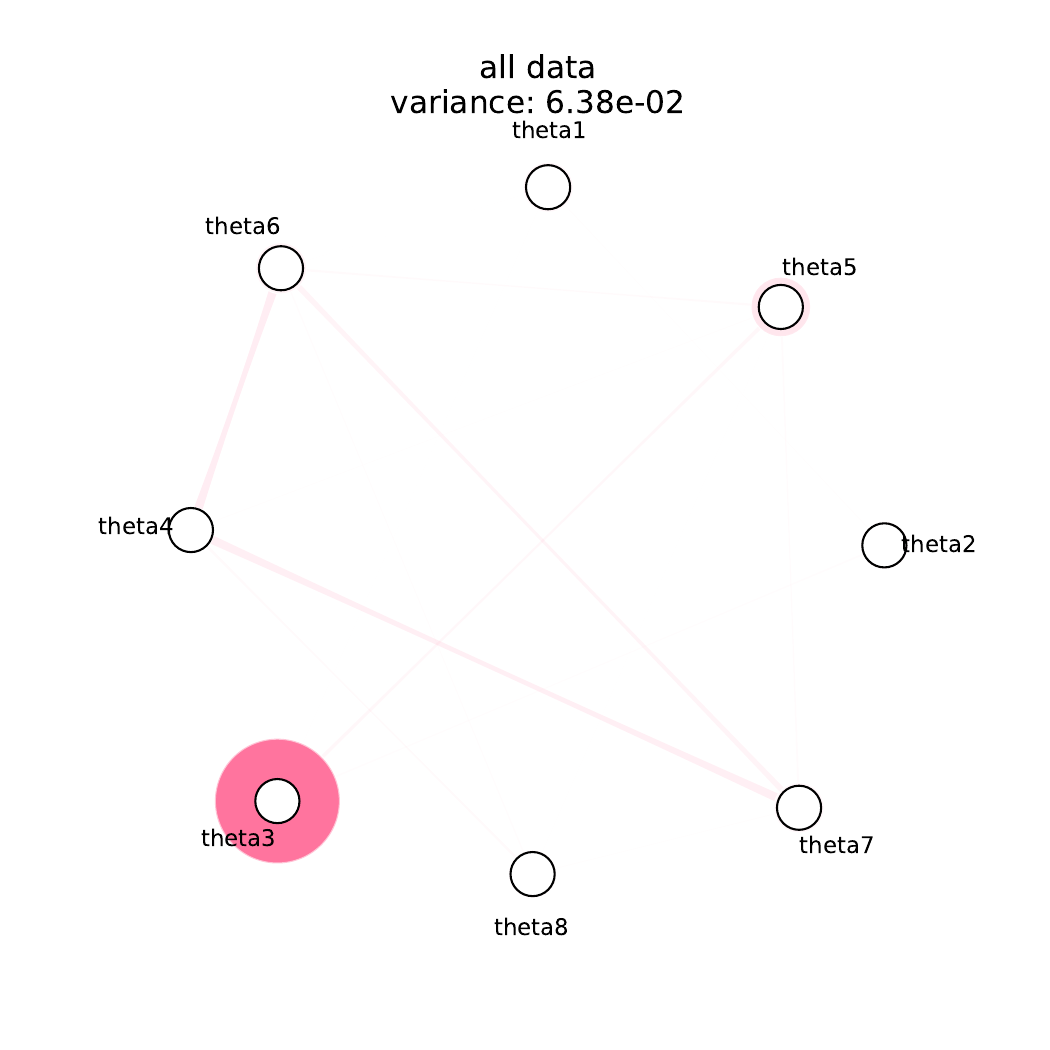}
        \caption{Marginal-\interaction{Pure}-Sensitivity effects for features and pair-wise interactions
        using whole dataset as reference.}
    \end{subfigure}
    \begin{subfigure}{0.45\textwidth}
        \centering
        \includegraphics[width=0.45\linewidth]{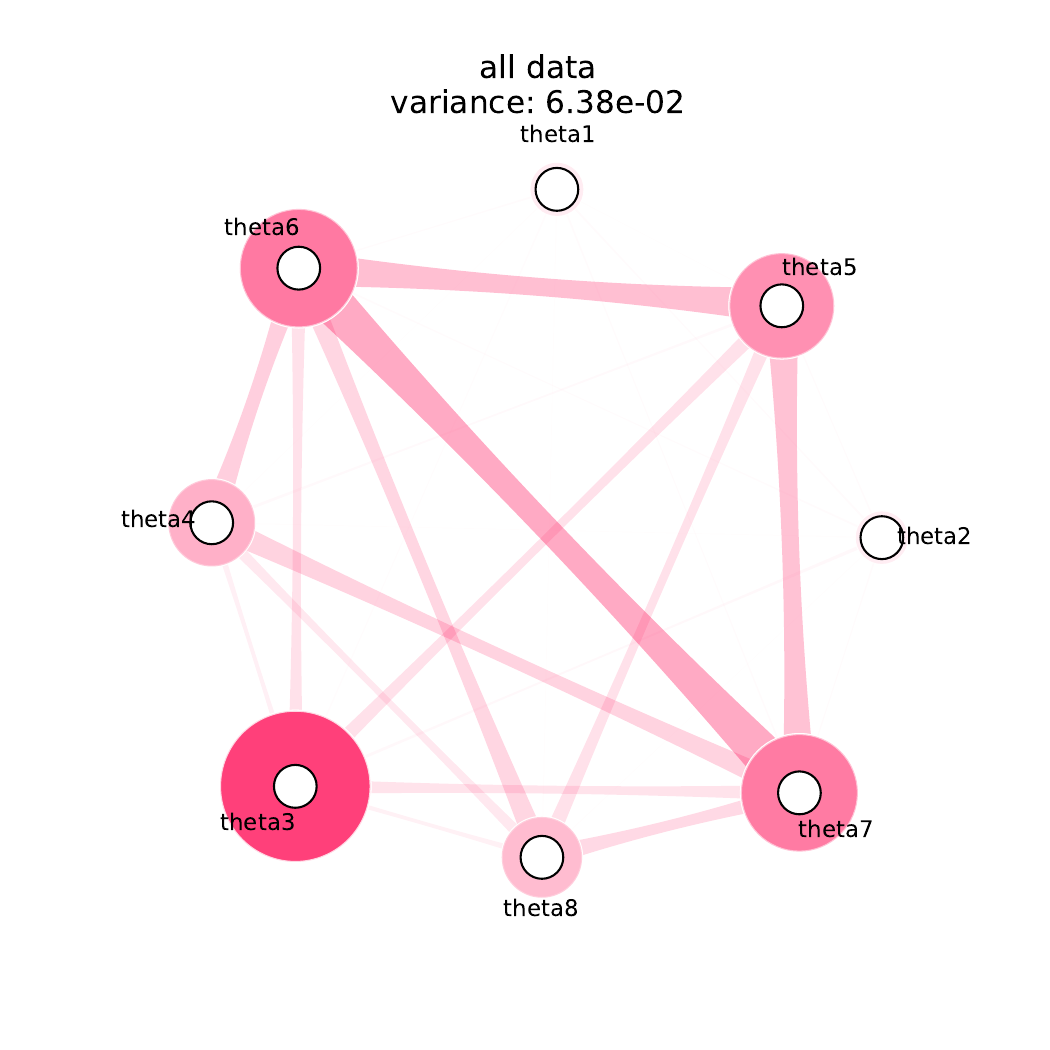}
        \caption{Marginal-\interaction{Full}-Sensitivity effects for features and pair-wise interactions
        using whole dataset as reference.}
    \end{subfigure}
    \\
    \begin{subfigure}{0.24\textwidth}
        \includegraphics[width=0.9\linewidth]{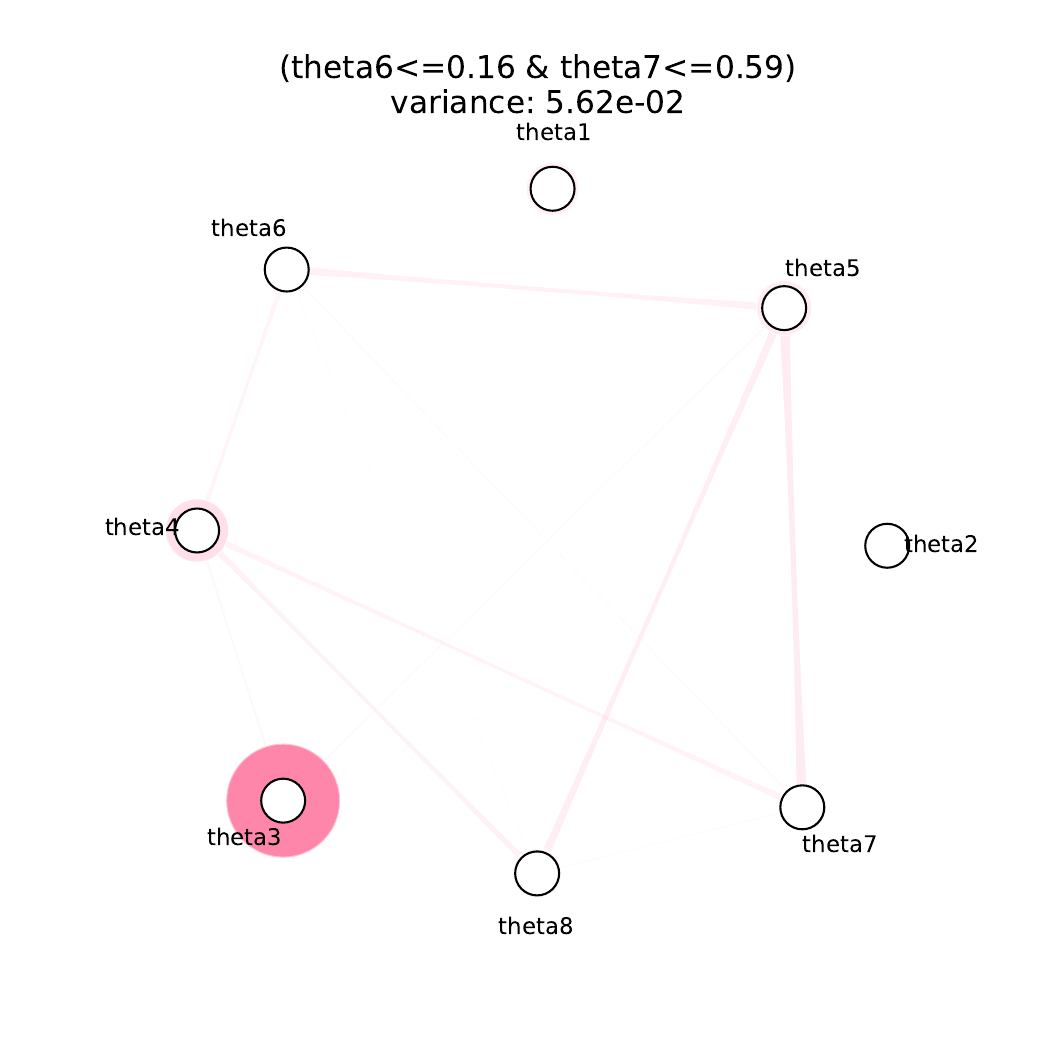}
        \caption{Pure Region 0.}
    \end{subfigure}
    \begin{subfigure}{0.24\textwidth}
        \includegraphics[width=0.9\linewidth]{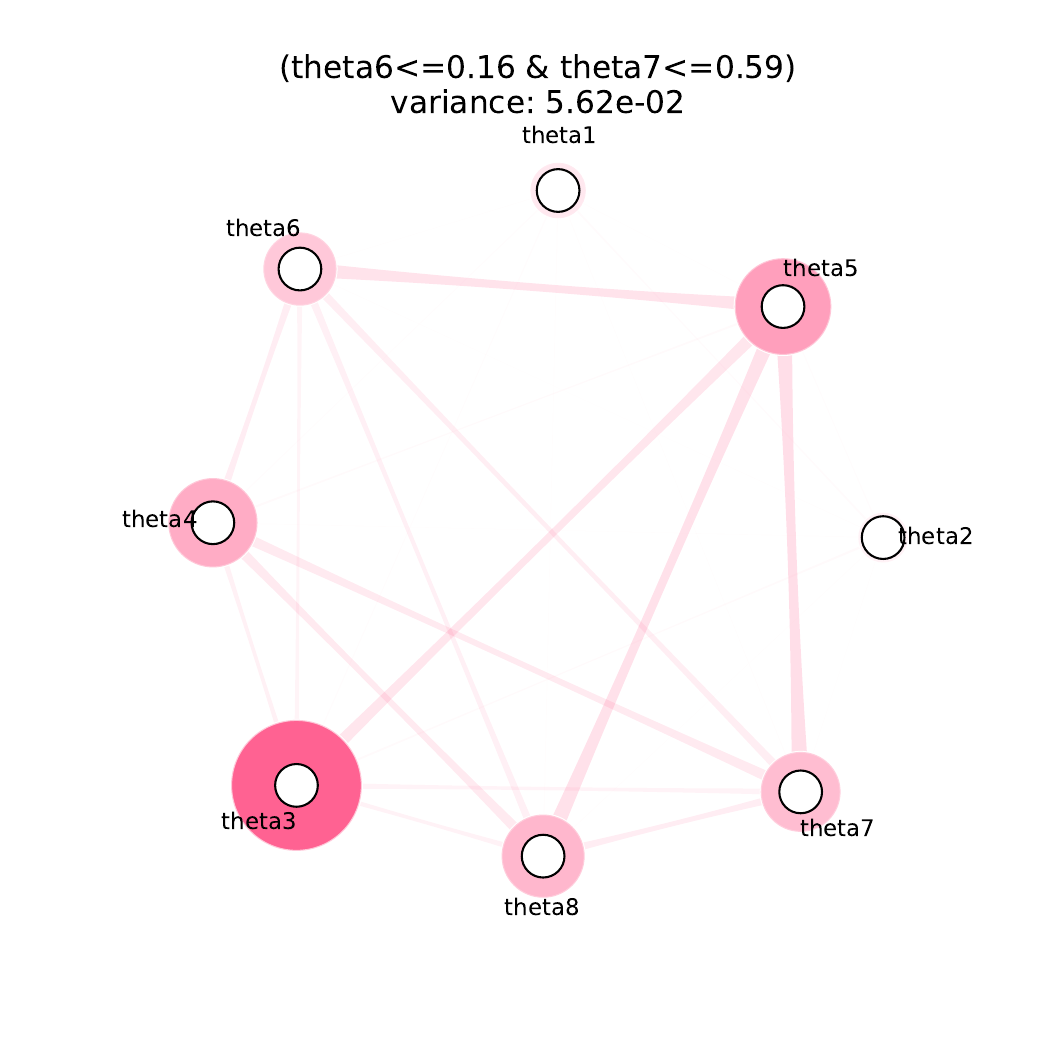}
        \caption{Full Region 0.}
    \end{subfigure}
    \begin{subfigure}{0.24\textwidth}
        \includegraphics[width=0.9\linewidth]{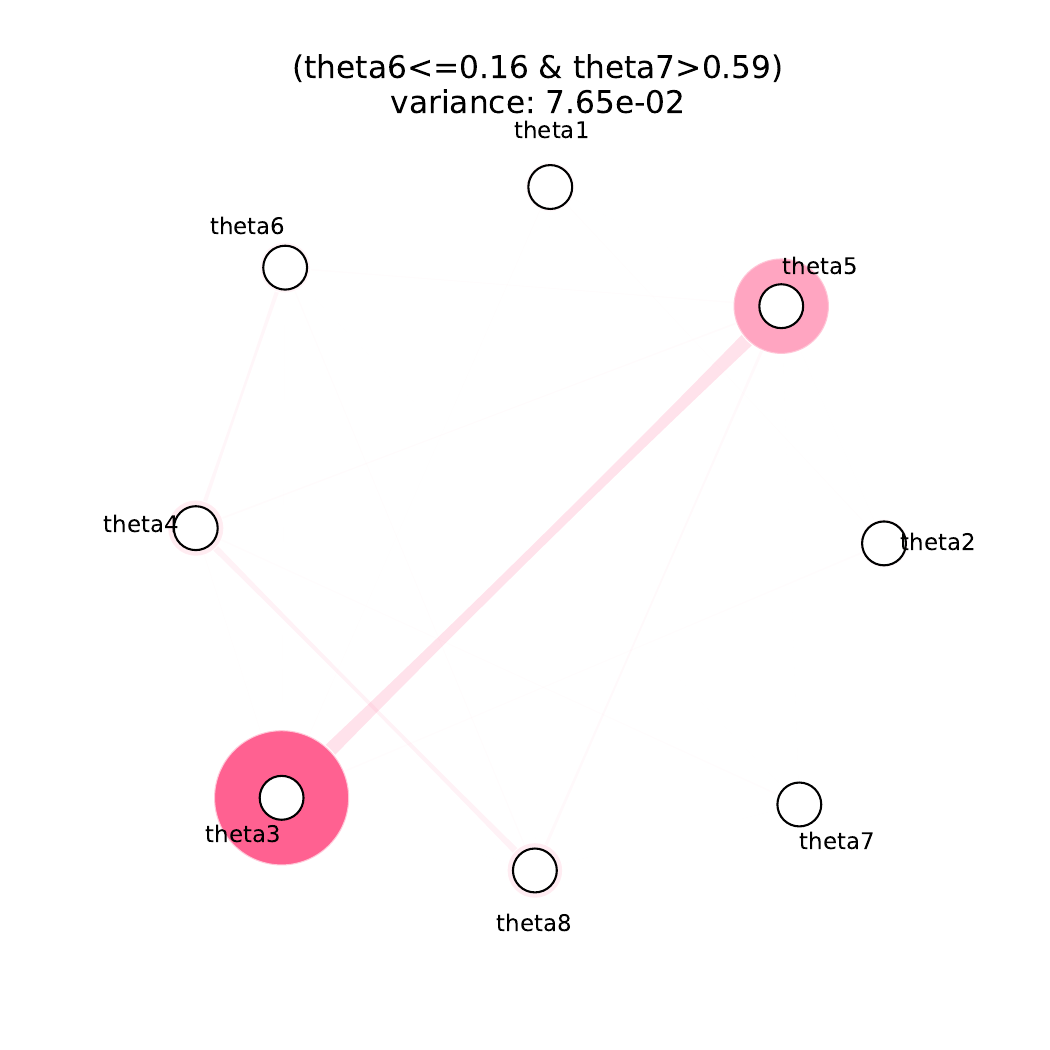}
        \caption{Pure Region 1.}
    \end{subfigure}
    \begin{subfigure}{0.24\textwidth}
        \includegraphics[width=0.9\linewidth]{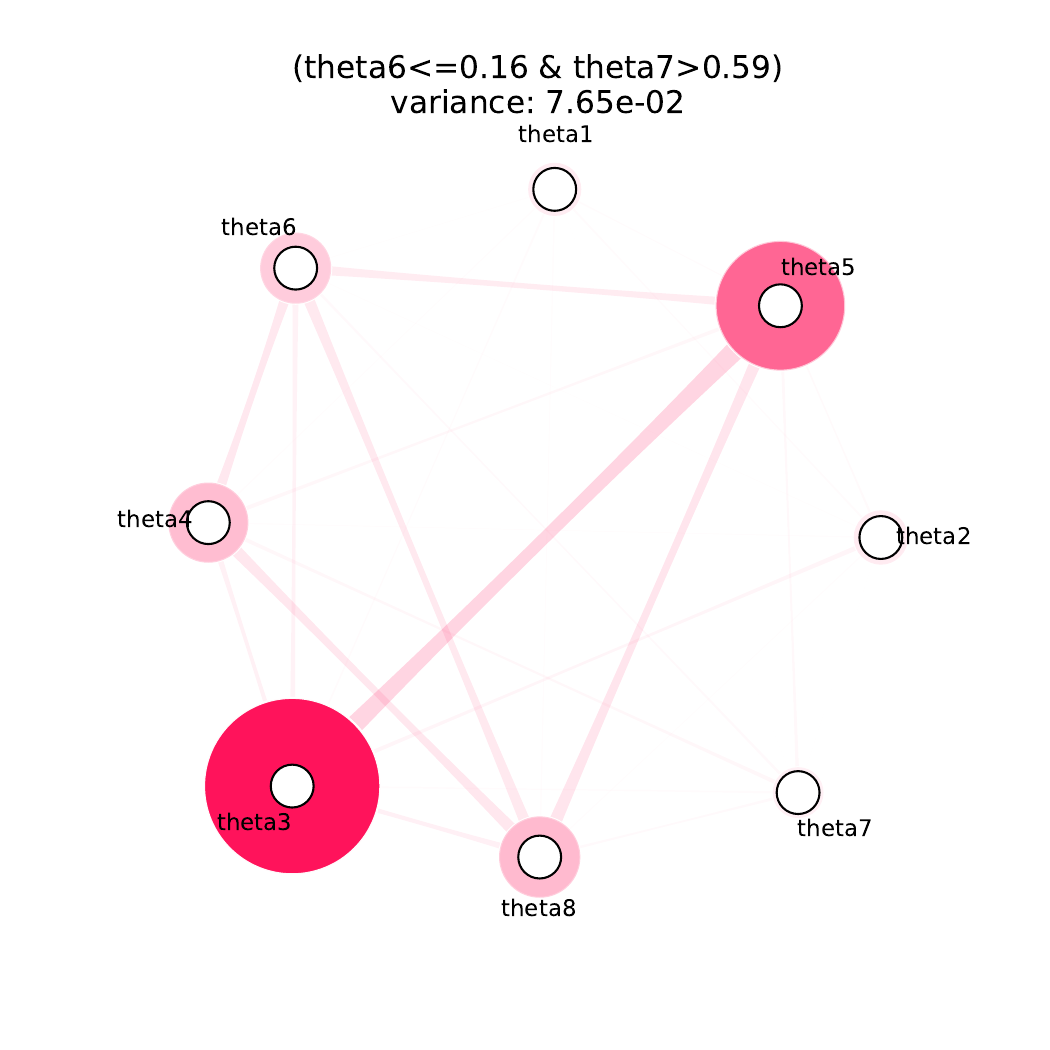}
        \caption{Full Region 1.}
    \end{subfigure}
    \begin{subfigure}{0.24\textwidth}
        \includegraphics[width=0.9\linewidth]{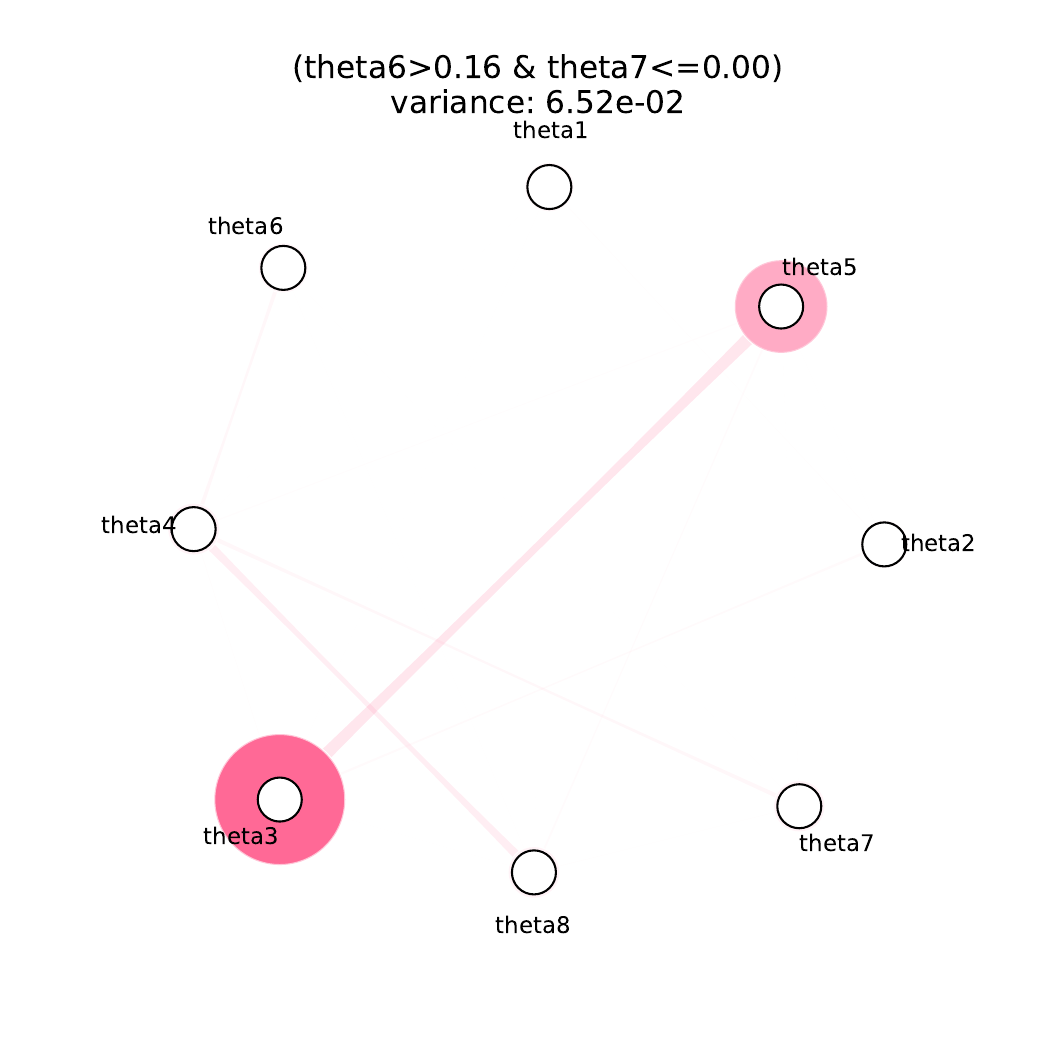}
        \caption{Pure Region 2.}
    \end{subfigure}
    \begin{subfigure}{0.24\textwidth}
        \includegraphics[width=0.9\linewidth]{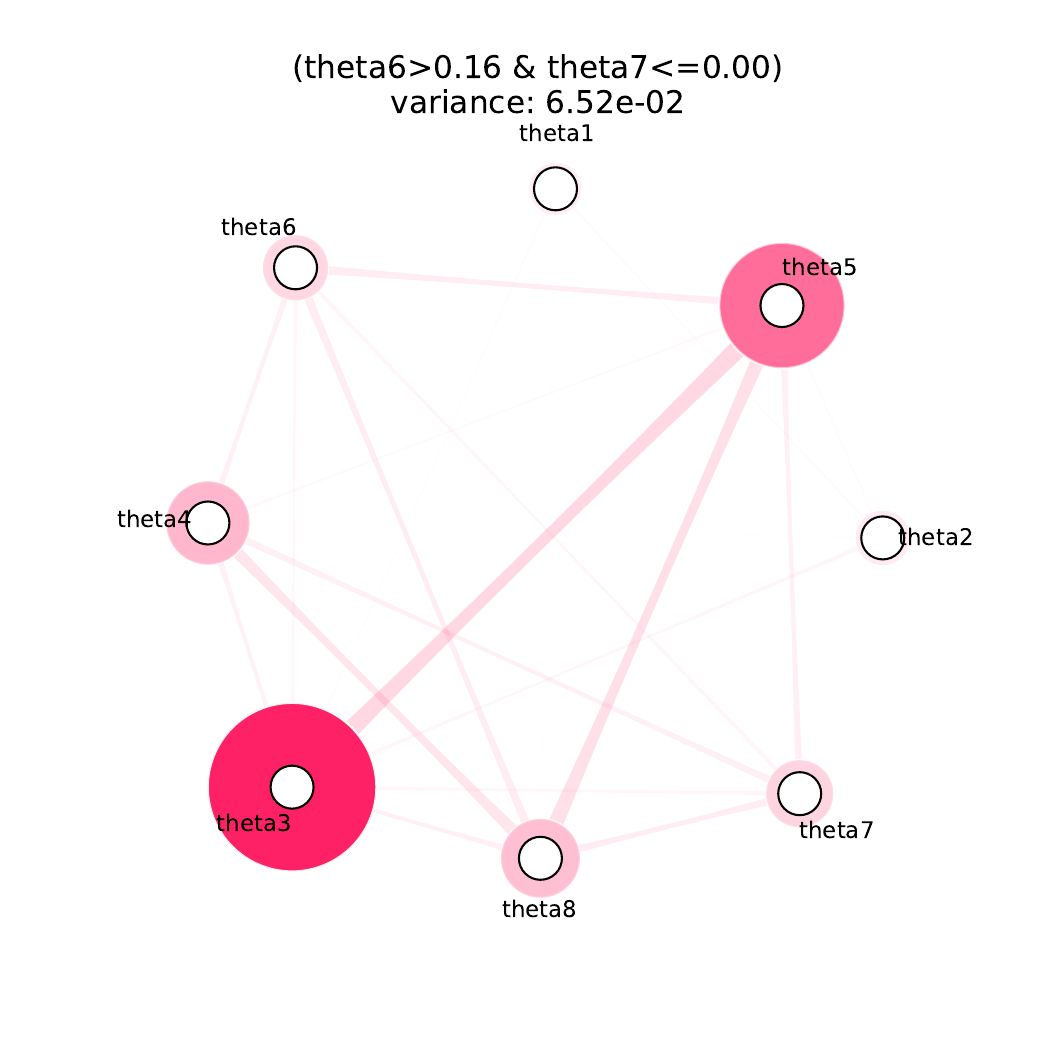}
        \caption{Full Region 2.}
    \end{subfigure}
    \begin{subfigure}{0.24\textwidth}
        \includegraphics[width=0.9\linewidth]{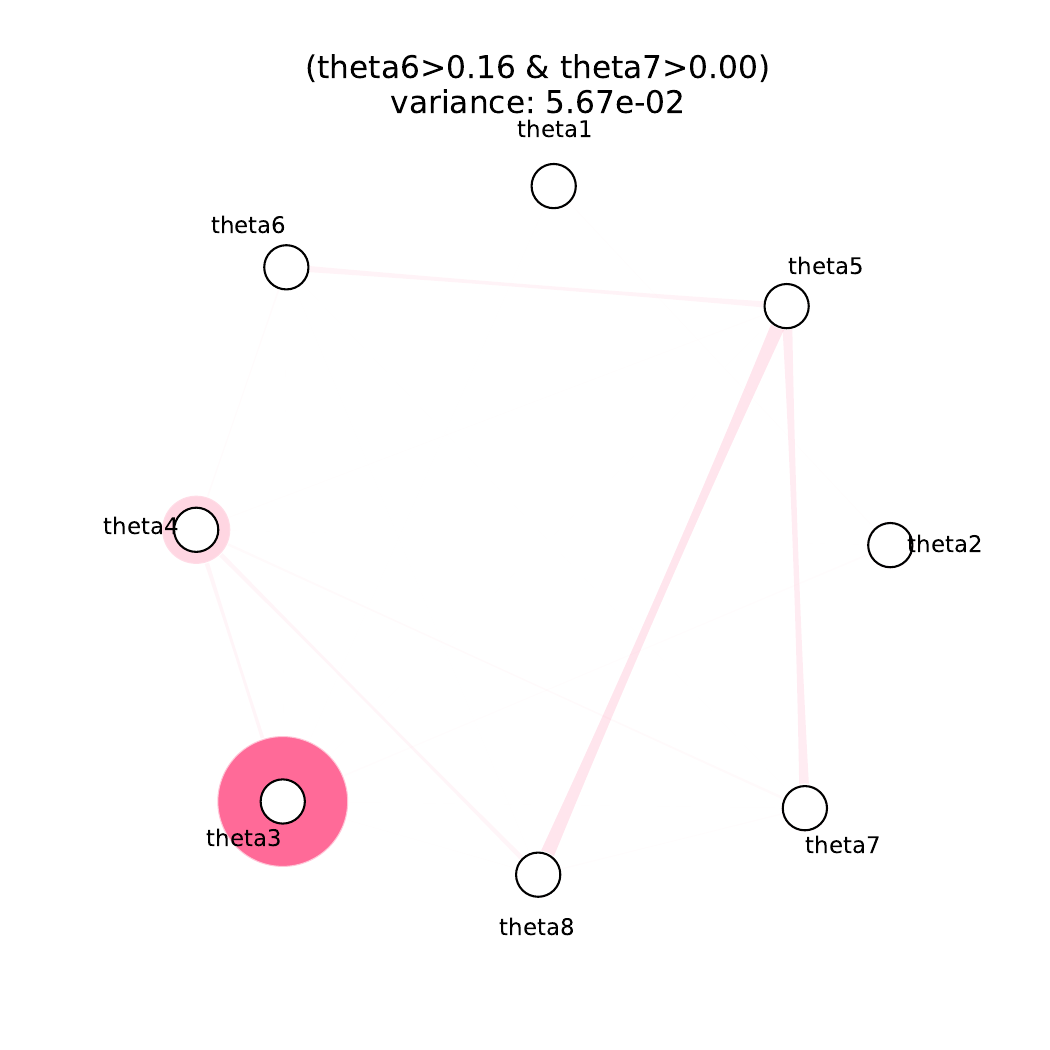}
        \caption{Pure Region 3.}
    \end{subfigure}
    \begin{subfigure}{0.24\textwidth}
        \includegraphics[width=0.9\linewidth]{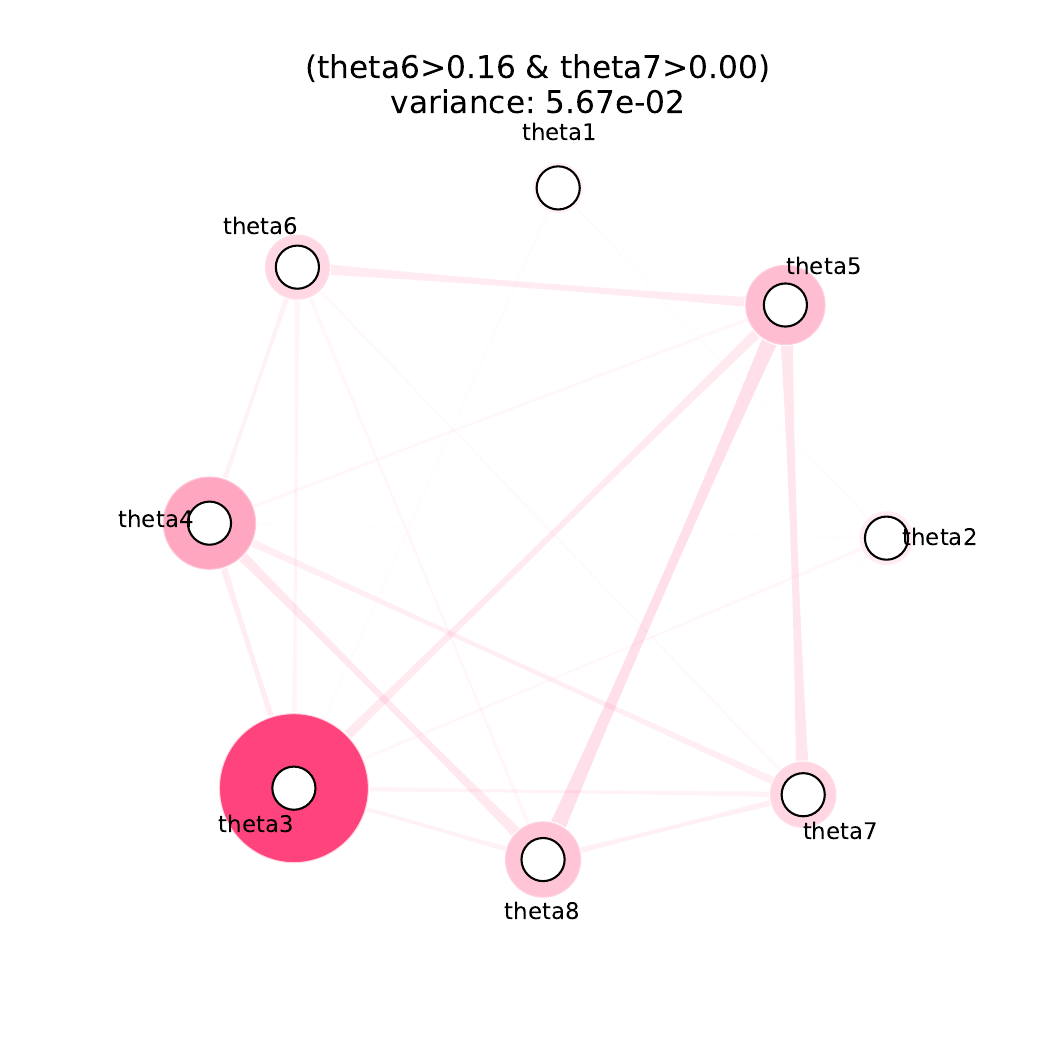}
        \caption{Full Region 3.}
    \end{subfigure}
    \vspace{0.2cm}
    \caption{GRANITE splits the input space of Kin8nm into four regions, each with reduced high-order 
    interactions, i.e., the pure and full pair-wise interactions agree more regionally.}
    \label{fig:kin8nm_interactions_regions}
\end{figure}
\begin{figure}
    \centering
    \includegraphics[width=0.65\linewidth]{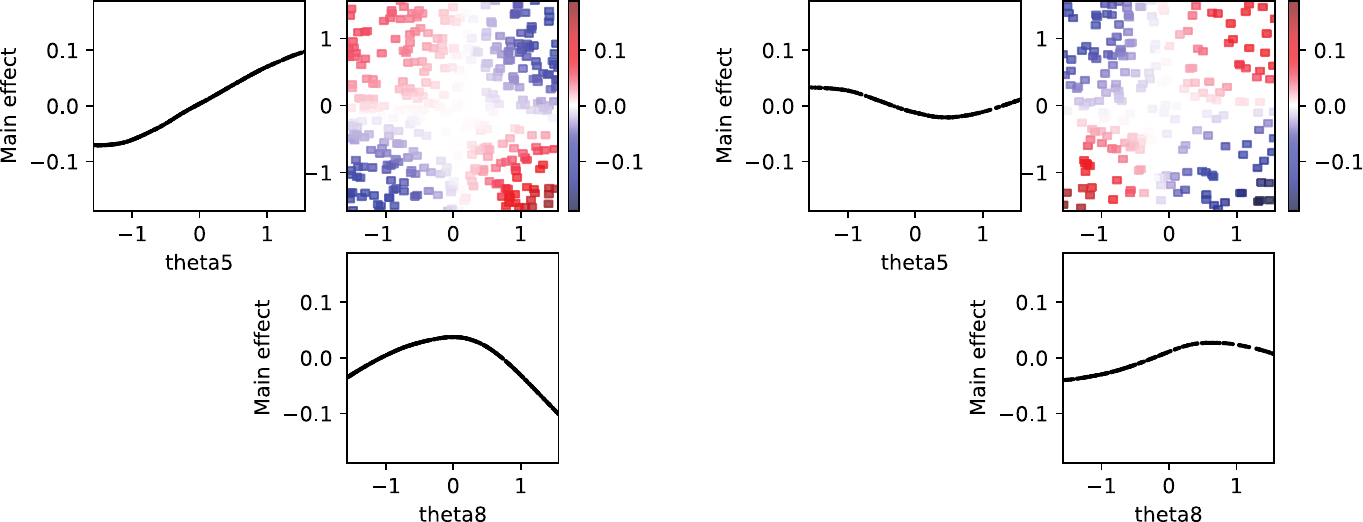}
    \\
    \begin{subfigure}{0.35\textwidth}
        \centering
        \caption{Region 0.}
    \end{subfigure}
    \begin{subfigure}{0.35\textwidth}
        \centering
        \caption{Region 3.}
    \end{subfigure}
    \vspace{0.2cm}
    \caption{Kin8nm: Pure main effects for \texttt{theta5}, \texttt{theta8} and their pairwise interactions across two regions.}
    \label{fig:kin8nm_interactions_regions_local}
\end{figure}

\subsection{Diabetes}
\label{app:experiments:diabetes}
The \texttt{Diabetes} dataset is a regression task aimed at predicting disease progression one year after baseline. 
It contains $442$ samples and $10$ numerical features: \emph{age}, \emph{sex}, \emph{bmi}, \emph{bp} (blood pressure), and six serum measurements (\emph{s1}--\emph{s6}) representing various blood and lipid concentrations. 
The data were split into training and test sets with an 80--20 ratio, resulting in $354$ training and $88$ test instances. 
A \texttt{HistGradientBoostingRegressor}
was optimized using 5-fold cross-validation on the training set and random search for hyperparameter tuning.

The given features allow for a meaningful grouping as follows: 
\textbf{Demographics:} \emph{age}, \emph{sex}; \quad 
\textbf{Body metrics:} \emph{bmi}, \emph{bp}; \quad 
\textbf{Lab markers:} \emph{s1}--\emph{s6}.

\interaction{Minimizing Feature Interactions.} 
We now analyze the overall importance of these feature groups and how their ranking changes when interactions between groups are minimized. 
We apply GRANITE for \textbf{joint influence measures} with sensitivity behavior $\mathcal{B}_{\text{sens}}$ and marginal masking $\mathcal{M}_m$ to minimize interactions between full and pure Sobol variances. 
Figure~\ref{fig:diabetes_joint_sensitivity} shows that, on a global level, body metrics and lab markers dominate the influence of demographic features, with body metrics being slightly more important. 
While the pure and full joint influences yield similar global rankings, regional analyses reveal notable differences: in particular, lab markers are by far the most important group for individuals with low BMI. 
In these regions, body metrics contribute the least, whereas demographics and body metrics exhibit higher influence for individuals with both low BMI and low blood pressure, compared to those with low BMI but high blood pressure, where lab markers appear to be the sole dominant group. 
For individuals with high BMI, both lab markers and body metrics play a substantial role, although their relative importance varies by gender.

\begin{figure}[htbp]
    \centering
    \begin{minipage}[c]{0.35\linewidth}
        \centering
        \includegraphics[width=\linewidth]{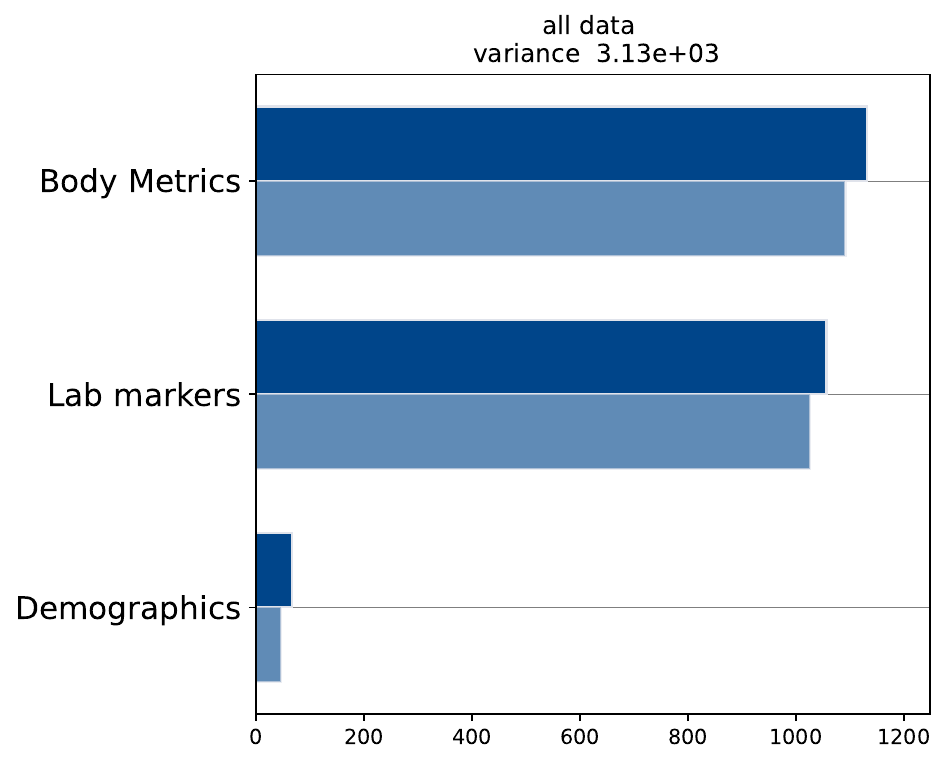}
    \end{minipage}
    \hspace{0.02\linewidth}
    \begin{minipage}[c]{0.58\linewidth}
        \centering
        \begin{minipage}{0.48\linewidth}
            \centering
            \includegraphics[width=\linewidth]{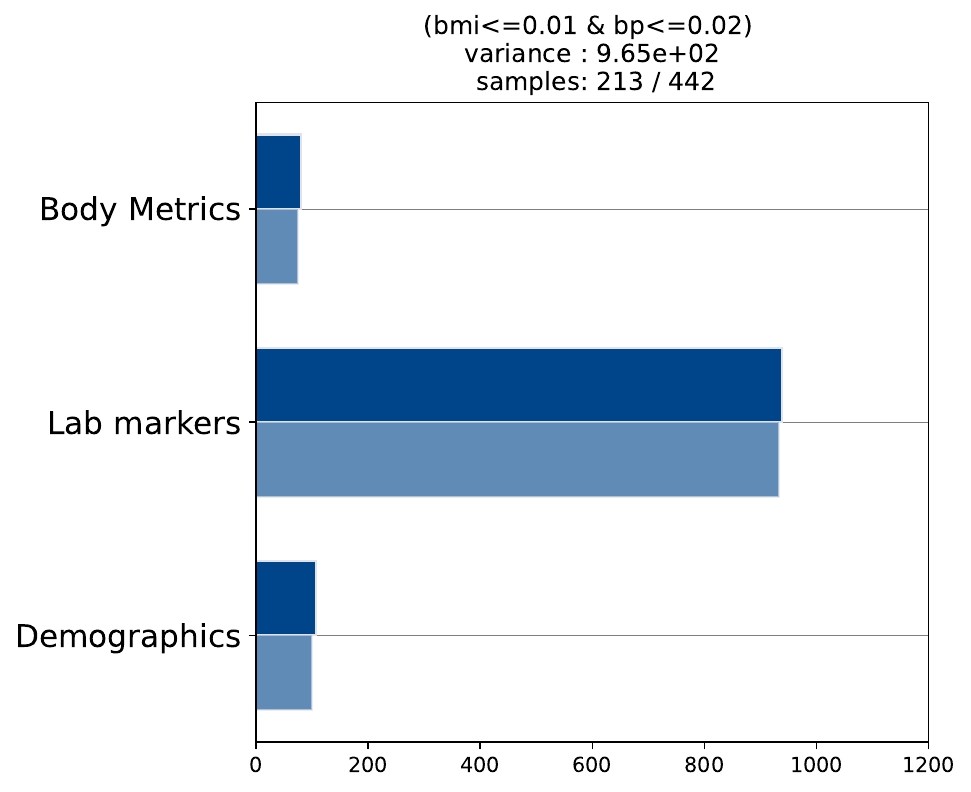}
        \end{minipage}
        \begin{minipage}{0.48\linewidth}
            \centering
            \includegraphics[width=\linewidth]{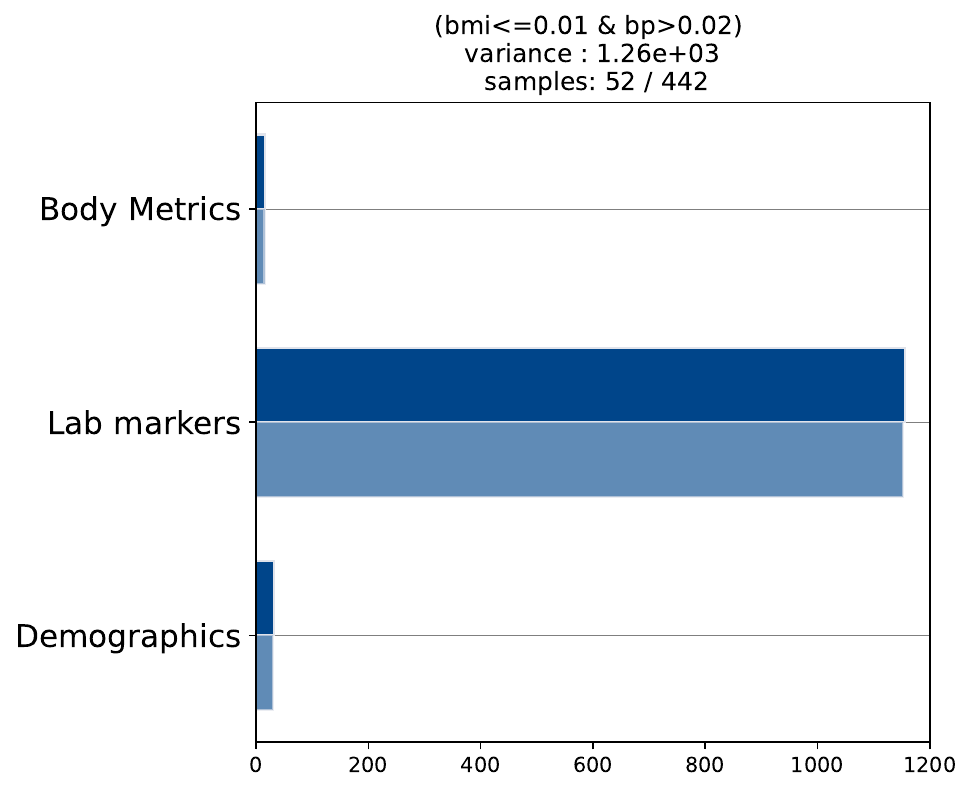}
        \end{minipage}

        \vspace{2mm}

        \begin{minipage}{0.48\linewidth}
            \centering
            \includegraphics[width=\linewidth]{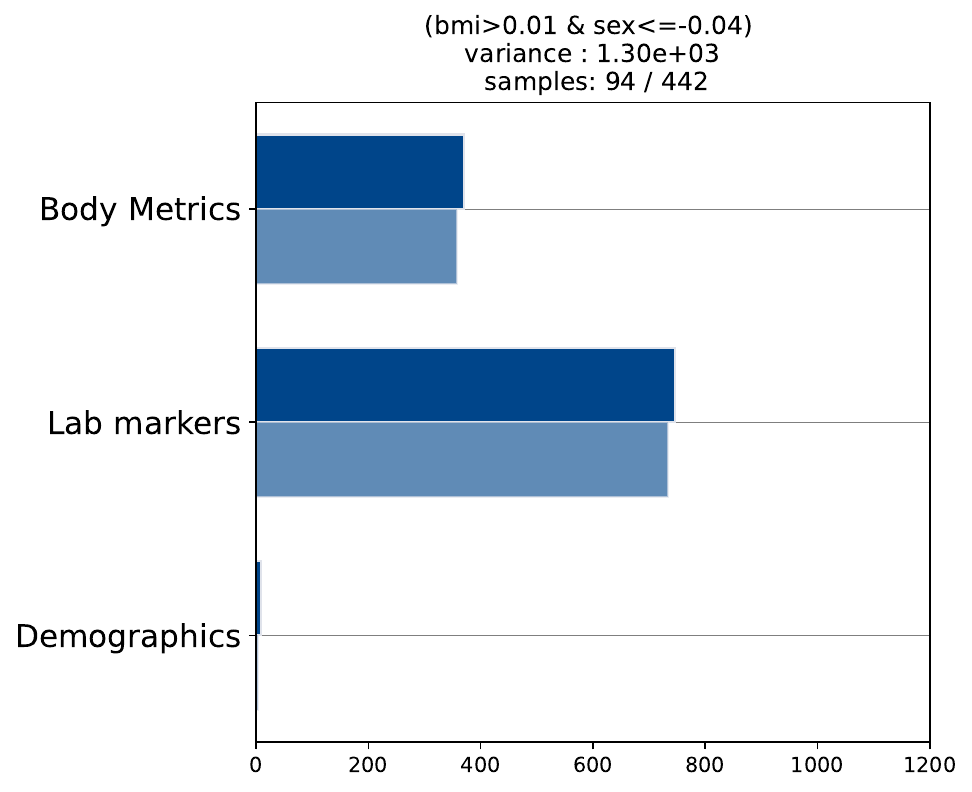}
        \end{minipage}
        \begin{minipage}{0.48\linewidth}
            \centering
            \includegraphics[width=\linewidth]{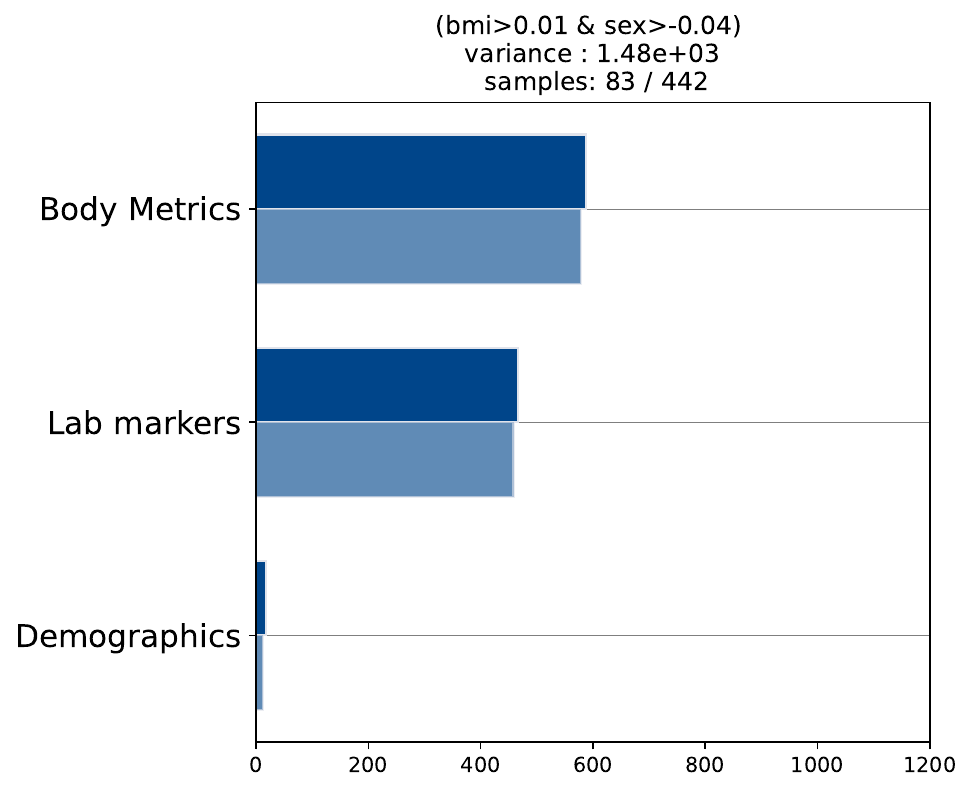}
        \end{minipage}
    \end{minipage}
    \caption{Diabetes: Full (top bar) and pure (bottom bar) sensitivity-based joint influences for global (left) and regional (right) effects measured by Sobol variances for the three feature groups.}
    \label{fig:diabetes_joint_sensitivity}
\end{figure}

\end{document}